\theoremstyle{plain}
\newtheorem{theorem}{Theorem}[section]
\newtheorem{proposition}[theorem]{Proposition}
\newtheorem{lemma}[theorem]{Lemma}
\theoremstyle{definition}
\newtheorem{definition}[theorem]{Definition}
\newtheorem{assumption}[theorem]{Assumption}
\theoremstyle{remark}
\newtheorem{remark}[theorem]{Remark}
\icmltitlerunning{Improved Bounds for Private and Robust Alignment
}
\begin{document}

\twocolumn[
  \icmltitle{Improved Bounds for Private and Robust Alignment}



  \icmlsetsymbol{equal}{*}

  \begin{icmlauthorlist}
    \icmlauthor{Wenqian Weng}{equal,wayne}
    \icmlauthor{Yi He}{equal,wayne}
    \icmlauthor{Xingyu Zhou}{wayne}

  \end{icmlauthorlist}

  \icmlaffiliation{wayne}{Department of Electrical and Computer Engineering, Wayne State University, Detroit, USA}

  \icmlcorrespondingauthor{Xingyu Zhou}{xingyu.zhou@wayne.edu}

  \icmlkeywords{Machine Learning, ICML}

  \vskip 0.3in
]





\printAffiliationsAndNotice{\icmlEqualContribution}

\begin{abstract}
In this paper, we study the private and robust alignment of language models from a theoretical perspective by establishing upper bounds on the suboptimality gap in both offline and online settings. We consider preference labels subject to privacy constraints and/or adversarial corruption, and analyze two distinct interplays between them: \emph{privacy-first} and \emph{corruption-first}. For the privacy-only setting, we show that log loss with an MLE-style algorithm achieves near-optimal rates, in contrast to conventional wisdom. For the joint privacy-and-corruption setting, we first demonstrate that existing offline algorithms in fact provide stronger guarantees---simultaneously in terms of corruption level and privacy parameters---than previously known, which further yields improved bounds in the corruption-only regime. In addition, we also present the first set of results for private and robust \emph{online} alignment. Our results are enabled by new uniform convergence guarantees for log loss and square loss under privacy and corruption, which we believe have broad applicability across learning theory and statistics.
\end{abstract}

\section{Introduction}

\begin{table*}[t]
    \centering
    \caption{Summary of theoretical guarantees in the \emph{offline} alignment setting. Here $\kappa(\pi^\star)$ denotes the single-policy concentrability coefficient.}
    \label{tab:offline-comparison-mainbody}
    \small
    \begin{tabular}{lccc}
        \toprule
        Method & Loss & Scenario & Theoretical Bound \\ 
        \midrule
        Square$\chi$PO~\cite{zhou2025square} 
        & Square 
        & Private + Corruption 
        & $\begin{aligned}
                \textsf{CTL: } & O\!\left(\kappa(\pi^\star)\!\left(c(\epsilon)\sqrt{\tfrac{\log(|\Pi|/\delta)}{n}} + \sqrt{\alpha}\right)\right)\\
                \textsf{LTC: } & O\!\left(\kappa(\pi^\star)\!\left(c(\epsilon)\sqrt{\tfrac{\log(|\Pi|/\delta)}{n}} + \sqrt{c(\epsilon)\alpha}\right)\right)
            \end{aligned}$ \\

        Square$\chi$PO (Ours) 
        & Square 
        & Private + Corruption 
        & $\begin{aligned}
            \textsf{CTL: } & O\!\left(\kappa(\pi^\star)\!\left(c(\epsilon)\sqrt{\tfrac{\log(|\Pi|/\delta)}{n}} + \alpha\right)\right)\\
            \textsf{LTC: } & O\!\left(\kappa(\pi^\star)\!\left(c(\epsilon)\sqrt{\tfrac{\log(|\Pi|/\delta)}{n}} + c(\epsilon)\alpha\right)\right)
        \end{aligned}$ \\

        Priv$\chi$PO (Ours) 
        & Log (MLE) 
        & Private only 
        & $O\!\left(\kappa(\pi^\star)\,c(\epsilon)\sqrt{\tfrac{\log(|\Pi|/\delta)}{n}}\right)$ \\

        \bottomrule
    \end{tabular}
\end{table*}

\begin{table*}[t]

    \centering
    \caption{Summary of theoretical guarantees in the \emph{online} alignment setting.
    Here $\kappa_{\mathrm{cov}}(\Pi)$ denotes the coverability coefficient.}
    \label{tab:online-comparison-mainbody}
    \small
    \begin{tabular}{lccc}
        \toprule
        Method & Loss & Scenario & Theoretical Bound \\ 
        \midrule
        XPO~\cite{xie2024exploratory} 
        & Log (MLE) 
        & Non-private 
        & $O\!\left(\kappa_{\mathrm{cov}}(\Pi)\sqrt{\tfrac{\log(|\Pi|T)}{T}}\right)$ \\

        SquareXPO (Ours) 
        & Square 
        & Private + Corruption 
        & $\begin{aligned}
        \textsf{CTL: } & O\!\left(\kappa_{\mathrm{cov}}(\Pi)\!\left(c(\epsilon)\sqrt{\tfrac{\log(|\Pi|T/\delta)}{T}} + \alpha\right)\right)\\
        \textsf{LTC: } & O\!\left(\kappa_{\mathrm{cov}}(\Pi)\!\left(c(\epsilon)\sqrt{\tfrac{\log(|\Pi|T/\delta)}{T}} + c(\epsilon)\alpha\right)\right)
        \end{aligned}$ \\[1.2em]

        PrivXPO (Ours) 
        & Log (MLE) 
        & Private only 
        & $O\!\left(\kappa_{\mathrm{cov}}(\Pi)\,c(\epsilon)\sqrt{\tfrac{\log(|\Pi|T/\delta)}{T}}\right)$ \\

        \bottomrule
    \end{tabular}
\end{table*}
The alignment post-training in large language models (LLMs) relies on human preference data to ensure that model outputs are both helpful and harmless~\cite{bai2022training,ouyang2022training}. While widely adopted in practice, these preference labels are often noisy~\cite{lambert2023history}. Noise can arise from corruption or misspecification during label generation---such as data poisoning attacks~\cite{casper2023open}---or from privacy concerns that lead individuals to provide noisy and privatized preferences rather than their true rankings~\cite{feng2024exposing}.

Motivated by these challenges, recent works have begun to study the theoretical impact of noisy preference labels in alignment, e.g.,~\cite{zhou2025square,zhou2025unified,chowdhury2024differentially,chowdhury2024provably}. However, several aspects of the existing results remain unsatisfactory.

First, on the privacy side, current approaches~\cite{zhou2025square,zhou2025unified,chowdhury2024differentially,chowdhury2024provably} typically require designing new losses that deviate from the natural maximum-likelihood-estimation (MLE) loss. Some even suggest that the private MLE log loss fails to achieve a near-optimal rate~\cite{chowdhury2024differentially}.

\vspace{-1mm}
\begin{center} \noindent\fbox{ \parbox{0.95\linewidth}{ \textbf{Q1.} Can we achieve a near-optimal rate for private alignment using just MLE-type loss? }} \end{center}
\vspace{-1mm}

\textbf{Contribution 1.} We give an affirmative answer: the standard private MLE-type log loss can, in fact, achieve a near-optimal rate. This sharpens our understanding of private alignment. Our result is enabled by a new uniform convergence result for log loss under label privacy, an ingredient that may be useful more broadly in other learning scenarios.

Turning to adversarial Huber corruption and its interplay with privacy, existing state-of-the-art guarantees are strictly suboptimal~\cite{zhou2025square}.

\vspace{-1mm}
\begin{center} \noindent\fbox{ \parbox{0.95\linewidth}{ \textbf{Q2.} Can we achieve improved rates for private and robust alignment? }} \end{center}
\vspace{-1mm}

\textbf{Contribution 2.} We give an affirmative answer to \textbf{Q2}: through a new analysis, we show that existing algorithms already admit stronger guarantees in both the corruption-only and the joint corruption–privacy settings. This improvement is enabled by a new uniform convergence result for the square loss under privacy and corruption, which is widely useful given the square loss’s central role in reinforcement learning  broadly~\cite{agarwal2019reinforcement}.

Finally, most prior results on private and robust alignment focus on the offline setting, where guarantees inevitably depend on how well the dataset covers the optimal policy~\cite{huang2024correcting}. By contrast, recent advances show that online alignment with active exploration can achieve rates determined only by the intrinsic complexity of the policy class~\cite{xie2024exploratory}.

\vspace{-1mm}
\begin{center} \noindent\fbox{ \parbox{0.95\linewidth}{ \textbf{Q3.} Can we obtain theoretical guarantees for online private and/or robust alignment? }} 
\end{center}
\vspace{-1mm}

\textbf{Contribution 3.} The answer is yes. Building on our new uniform convergence results for log and square losses under privacy and corruption, we show that a simple one-line modification of existing online alignment algorithms suffices to achieve near-optimal rates in the presence of noisy preference labels.

Taken together, our results advance the theoretical foundations of private and robust alignment post-training and open the door to broader applications of uniform convergence in other learning scenarios.

Tables~\ref{tab:offline-comparison-mainbody} and~\ref{tab:online-comparison-mainbody} summarize our theoretical guarantees relative to prior work in the offline and online settings, respectively. We further provide empirical evaluations in Appendix~\ref{app:experiments}, including experiment on the real-world PKU-SafeRLHF preference dataset.

\section{Preliminaries}
\vspace{-2mm}

In this section, we present some background on the alignment in language models as well as its privacy and robustness issues.

\vspace{-1mm}
\subsection{Offline and Online Alignment}
In the offline setting~\cite{ouyang2022training}, the learner has access to an  \emph{offline preference} dataset $\cD_{\pref}=\{(\tau_{-1}^{(i)}, \tau_1^{(i)}, y^{(i)})\}_{i=1}^n$, which consists of two responses and one preference label. 
The two responses (full trajectories) are i.i.d. samples from a reference policy $\piref$ starting from an initial prompt (state) $s^{(i)} \sim \rho$, i.e., $\tau_{-1}^{(i)} \sim \piref \mid s^{(i)}$ and $\tau_1^{(i)} \sim \piref \mid s^{(i)}$. We assume that each response includes the initial state. 
The preference label $y^{(i)} \in \{-1,1\}$ is generated according to $\{-1,1\}$-valued Bernoulli distribution, i.e., $y^{(i)} \sim \mathsf{Ber}(\cP^{\star}(\tau_1^{(i)} \succ \tau_{-1}^{(i)} \mid s^{(i)}))$, where $\cP^{\star}(\tau_1^{(i)} \succ \tau_{-1}^{(i)} \mid s^{(i)}) \in [0,1]$ is the probability that given $s^{(i)}$, $\tau_1^{(i)}$ is preferred over $\tau_{-1}^{(i)}$. Following the literature, we assume that the preference follows the \emph{Bradley-Terry} model~\cite{bradley1952rank}: for two responses $\tau$ and $\tau'$  from $s$
\begin{align}
\label{eq:BT}
    \cP^{\star}(\tau \succ \tau' \mid s) = \frac{\exp(r(\tau))}{\exp(r(\tau)) + \exp(r(\tau'))},
\end{align}
where the unknown function $r \in [0, R_{\mathsf{max}}]$  specifies the reward for each response. The goal in offline alignment is often to find a policy that maximizes the (unregularized) reward 
\begin{align*}
    J(\pi):= \mathbb{E}_{\pi}[r(\tau)],
\end{align*}
where $\mathbb{E}_{\pi}[\cdot] := \mathbb{E}_{s \sim \rho, \tau \sim \pi(\cdot\mid s)} [\cdot]$.

In the online setting, the learner has access to \emph{online samples} using the current model during training~\cite{guo2024direct}. In particular, the protocol proceeds in $T$ rounds. At each round $t$, the learner generates two responses $\tau^{(t)}, \widetilde{\tau}^{(t)}$ using the current policy $\pi^{(t)}$ and some other sampling policy, respectively, starting from a sampled initial state $s^{(t)}$ from a distribution $\rho$. 

The two responses are then labeled as $y^{(t)} \in \{-1,1\}$ via human feedback, such that if $y^{(t)}=1$, $\tau^{(t)}$ is preferred over $\widetilde{\tau}^{(t)}$ and if $y^{(t)}=-1$, $\widetilde{\tau}^{(t)}$ is preferred over $\tau^{(t)}$.  As before, we assume the preference follows from the Bradley-Terry model as in~\eqref{eq:BT}. Then, the online preference dataset is updated via $\cD_{\pref}^{(t)} = \cD_{\pref}^{(t-1)} \cup \{(\tau^{(t)}, \widetilde{\tau}^{(t)}, y^{(t)})\}$. The goal in online alignment is to leverage the online interaction to find a policy that maximizes the (regularized) reward
\begin{align}
\label{eq:reg-obj}
    J_{\beta}(\pi):= \mathbb{E}_{\pi}\left[ r(\tau) - \beta \log \frac{\pi(\tau)}{\piref(\tau)}\right],
\end{align}
where $\beta >0$ is a parameter for the KL regularization.  
\begin{remark}[Unregularized vs. regularized objective]
For a fair comparison, our learning objectives for offline and online alignment follow from the existing state-of-the-art results~\cite{huang2024correcting, xie2024exploratory}. We also note that one can often convert one guarantee in the regularized objective to the unregularized one, under certain conditions, and vice versa.  
\end{remark}

\subsection{Privacy and Robustness in Alignment}
A natural privacy concern in the alignment process is that the preference label could reveal humans' private and sensitive information. Thus, for privacy protection, the learner may only have access to privatized labels, which is achieved by a local randomizer that randomly flips the label. This can ensure the so-called \emph{local differential privacy} (LDP)~\cite{warner1965randomized,kasiviswanathan2011can,duchi2013local}.   
\begin{definition}[Randomized response and $\epsilon$-LDP \cite{warner1965randomized}]
    Let $\epsilon > 0$ be the privacy parameter and $y \in \{-1,1\}$ be the true label. The randomized response (\texttt{RR}) mechanism $\cR$ flips $y$ and outputs private $\widetilde{y} \in \{-1,1\}$ based on the following distribution
    \begin{align}
    \label{eq:RR}
        \prob{\widetilde{y} = y} = \frac{e^{\epsilon}}{1+e^{\epsilon}} \text{ and } \prob{\widetilde{y} \neq y} = \frac{1}{1+e^{\epsilon}}.
    \end{align}
    This can be easily shown to satisfy $\epsilon$-LDP, i.e., for any $y, y^{\prime}$ and any subset $S$ in the range of $\mathcal{R}$ such that
    \begin{align*}
        \mathbb{P}[\mathcal{R}(y) \in S] \le e^{\varepsilon} \cdot \mathbb{P}\left[\mathcal{R}\left(y^{\prime}\right) \in S\right].
    \end{align*}
\end{definition}
\begin{remark}
In this paper, we mainly focused on the randomized response mechanism specifically, rather than a general setup where LDP is satisfied. A short reason is that randomized response has already allowed us to achieve the minimax lower bound, since all the lower bounds are general lower bounds for any LDP mechanisms rather than only for randomized response. Another reason here is that any $\varepsilon$-LDP mechanism over $\{-1,1\}$ can be reduced to a randomized response with two properly chosen distributions~\cite{cheu2019manipulationattackslocaldifferential}.

Moreover, with respect to the preference labels, once the \texttt{RR} mechanism has produced the privatized labels, all subsequent training and policy-selection procedures are post-processing of the \texttt{RR} outputs and therefore preserve the same $\varepsilon$-LDP guarantee, provided that they do not access the raw labels.

\end{remark}

In practice, the preference labels may be corrupted due to adversary manipulation or simple recording errors. To capture this, we borrow the classic \emph{Huber corruption} model from robust statistics. 
\begin{definition}[$\alpha$-Huber corruption \cite{huber1964robust}] 
    We consider the following $\alpha$-Huber corruption: let $\alpha \in [0,1/2)$, the corrupted label is independently sampled from the mixture $(1-\alpha) G + \alpha B$, where $G$ is the clean/true Bernoulli distribution of the label, and $B$ is some arbitrary unknown $\{-1,1\}$-valued Bernoulli distribution. That is, with probability $\alpha$, the observed label is sampled from some bad distribution.
\end{definition}

As in previous work~\cite{zhou2025square}, we mainly focus on the interplay between privacy and corruption by considering the co-existence of them with different orders, which gives results for the corruption-only case.

\begin{definition}[\ctl and \ltc ]
\label{def:ctlandltc}
 Let $\alpha \in [0,1/2), \epsilon >0$.  We consider the following two different interplays:
 
 \emph{Corruption-then-LDP} (\ctl). The raw (offline or online) label is first corrupted by the $\alpha$-Huber model, which is then further privatized by $\epsilon$-LDP \texttt{RR} mechanism. This gives the final offline preference dataset $\bar{\cD}_{\pref}=\{\tau_{-1}^{(i)}, \tau_1^{(i)}, z^{(i)})\}_{i=1}^n$ or online preference dataset $\bar{\cD}_{\pref}^{(t)} =  \{(\tau^{(i)}, \widetilde{\tau}^{(i)}, z^{(i)})\}_{i=1}^t$ for each $t \in [T]$.

  \emph{LDP-then-Corruption} (\ltc). The raw (offline or online) label is first privatized by $\epsilon$-LDP \texttt{RR} mechanism, which is then further corrupted by the $\alpha$-Huber model. This gives the final offline preference dataset $\bar{\cD}_{\pref}=\{\tau_{-1}^{(i)}, \tau_1^{(i)}, z^{(i)})\}_{i=1}^n$ or online preference dataset $\bar{\cD}_{\pref}^{(t)} =  \{(\tau^{(i)}, \widetilde{\tau}^{(i)}, z^{(i)})\}_{i=1}^t$ for each $t \in [T]$.
\end{definition}
\begin{remark}
    Given the above definition, several comments are in order. First, we do not distinguish between the final observed datasets under \ctl and \ltc, since our later proposed algorithm does not require knowing the specific setting in advance, since it is agnostic to the corruption order. Second, in the online setting, we consider an oblivious adversary, where the choice of the bad distribution at round $t$ is independent of other samples. Third, since the label LDP is essentially a random flipping noise/corruption, we can also view \ctl and \ltc as the interplay between random flipping corruption and adversary corruption. In the same sense, our results for the private case naturally hold for the scenario with random flipping corruption as  in~\citet{chowdhury2024provably}.

\end{remark}

\section{Uniform Convergence: Privacy and Corruption}
\label{sec:uc}
In this section, we will present our main technical contributions: uniform convergence results for both log loss and square loss under privacy and/or corruption. 

\subsection{Log Loss under LDP}
In this subsection, our main result is the following lemma, which gives the uniform convergence with log loss under local privacy. This can be viewed as a variant of the standard MLE-type generalization guarantee under log loss~\cite{geer2000empirical}.
\begin{lemma}
\label{lem:uc-log}
    Suppose that $\{P_{\theta}(y|x)\}_{\theta \in \Theta} \subseteq (\cX \to \Delta(\{-1,1\}))$ is a class of conditional densities parameterized by a finite class $\Theta$. Consider the clean data $(x^{(1)}, y^{(1)}), \ldots, (x^{(T)}, y^{(T)})$ be a sequence of random variables adapted to a filtration $(\cF^{(t)})_{t=1}^T$ such that realizability holds, i.e., there exists $\theta^{\star} \in \Theta$ so that $\mathbb{P}(y^{t}=\cdot | x^{(t)}, \cF^{(t-1)}) = P_{\theta^{\star}}(y^{(t)} = \cdot | x^{(t)})$ almost surely for $t \in [T]$, and the privatized data be $(x^{(1)}, \widetilde{y}^{(1)}), \ldots, (x^{(T)}, \widetilde{y}^{(T)})$ obtained by randomized response mechanism on the labels with parameter $\epsilon >0$ (cf.~\eqref{eq:RR}). Then, with probability at least $1-\delta$, it holds that for $n \in [T]$, for all $\theta \in \Theta$, 
    \begin{align*}
        &\sum_{t=1}^n \mathbb{E}\left[ \norm{{P_{\theta}}(\cdot | x^{(t)}) - P_{\theta^{\star}}(\cdot |x^{(t)})}_{\mathsf{TV}}^2 \mid \cF^{(t-1)}\right]\\
        & \lesssim c(\epsilon)^2 \cdot \left(\hat{L}^{(n)}(\theta) - \hat{L}^{(n)}(\theta^{\star}) +  \log(|\Theta|\delta^{-1})\right),
    \end{align*}
    where $a \lesssim b$ is a shorthand for $a=\mathcal{O}(b)$, $c(\epsilon) := \frac{e^{\epsilon}+1}{e^{\epsilon}-1} = \frac{1}{2 \sigma(\epsilon) - 1}$ and for any given $\theta'\in \Theta$, $\hat{L}^{(n)}(\theta'):= \sum_{t=1}^n - \log \widetilde{P}_{\theta'}(\widetilde{y}^{(t)} | x^{(t)})$ with $\widetilde{P}_{\theta'}(\widetilde{y}^{(t)} | x^{(t)}):= \sigma(\epsilon)\cdot {P}_{\theta'}(\widetilde{y}^{(t)} | x^{(t)}) + (1-\sigma(\epsilon))\cdot {P}_{\theta'}(-\widetilde{y}^{(t)} | x^{(t)}) =  (2\sigma(\epsilon)-1) {P}_{\theta'}(\widetilde{y}^{(t)} | x^{(t)}) + (1-\sigma{(\epsilon)}) $ being the private probability. 
\end{lemma}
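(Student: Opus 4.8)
The plan is to reduce the statement to the standard martingale/chaining bound for sequential log‑loss regret (the "MLE‑type" uniform convergence of \citet{geer2000empirical}, in its online form) applied to the \emph{privatized} conditional densities $\widetilde P_\theta(\cdot\mid x)$, and then to transfer the resulting Hellinger/TV control on the privatized labels back to the original (clean) densities $P_\theta(\cdot\mid x)$, paying the factor $c(\epsilon)^2$. First I would recall the generic result: if $\{Q_\theta\}_{\theta\in\Theta}$ is a finite class of conditional densities on $\{-1,1\}$, realizable along the filtration by some $Q_{\theta^\star}$, then with probability $\ge 1-\delta$, simultaneously for all $n\in[T]$ and all $\theta$,
\[
\sum_{t=1}^n \mathbb{E}\!\left[\left\|Q_\theta(\cdot\mid x^{(t)})-Q_{\theta^\star}(\cdot\mid x^{(t)})\right\|_{\mathsf{TV}}^2 \,\middle|\, \cF^{(t-1)}\right]
\ \lesssim\ \sum_{t=1}^n -\log\frac{Q_\theta(y^{(t)}\mid x^{(t)})}{Q_{\theta^\star}(y^{(t)}\mid x^{(t)})} + \log(|\Theta|\delta^{-1}),
\]
which follows from the exponential‑inequality argument: $\mathbb{E}[\exp(-\tfrac12\log\frac{Q_\theta}{Q_{\theta^\star}})\mid\cF^{(t-1)}]\le 1-\tfrac12 H^2$ (Hellinger affinity), so the process $M_n^\theta=\exp\big(\sum_{t\le n}(-\tfrac12\log\frac{Q_\theta}{Q_{\theta^\star}})+\tfrac12 H^2_t\big)$ is a supermartingale, a union bound over $\Theta$ with Markov's inequality gives the claim for $H^2$, and $\|\cdot\|_{\mathsf{TV}}^2\le H^2\lesssim$ the same quantity for two‑point distributions. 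I would either cite this verbatim or include the one‑paragraph supermartingale proof.

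The key observation is that the privatized data $(x^{(t)},\widetilde y^{(t)})$ is itself realizable along $(\cF^{(t)})$ by the privatized density $\widetilde P_{\theta^\star}$: since $\widetilde y^{(t)}$ is obtained by applying the (data‑independent) \texttt{RR} channel to $y^{(t)}$, and $\mathbb{P}(y^{(t)}=\cdot\mid x^{(t)},\cF^{(t-1)})=P_{\theta^\star}(\cdot\mid x^{(t)})$, postcomposition with the channel gives $\mathbb{P}(\widetilde y^{(t)}=\cdot\mid x^{(t)},\cF^{(t-1)})=\widetilde P_{\theta^\star}(\cdot\mid x^{(t)})$ almost surely. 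Hence I may apply the generic bound with $Q_\theta:=\widetilde P_\theta$; the right‑hand side is exactly $\hat L^{(n)}(\theta)-\hat L^{(n)}(\theta^\star)+\log(|\Theta|\delta^{-1})$ by the definition of $\hat L^{(n)}$, and the left‑hand side is $\sum_t \mathbb{E}[\|\widetilde P_\theta(\cdot\mid x^{(t)})-\widetilde P_{\theta^\star}(\cdot\mid x^{(t)})\|_{\mathsf{TV}}^2\mid\cF^{(t-1)}]$.

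It remains to relate the privatized TV distance to the clean one. On $\{-1,1\}$ a distribution is a single number (the probability of $+1$), the \texttt{RR} channel acts affinely as $p\mapsto (2\sigma(\epsilon)-1)p+(1-\sigma(\epsilon))$, and TV distance between two‑point distributions is just $|p-p'|$; therefore
\[
\left\|\widetilde P_\theta(\cdot\mid x)-\widetilde P_{\theta^\star}(\cdot\mid x)\right\|_{\mathsf{TV}}
= (2\sigma(\epsilon)-1)\,\left\|P_\theta(\cdot\mid x)-P_{\theta^\star}(\cdot\mid x)\right\|_{\mathsf{TV}},
\]
i.e. the clean TV equals $c(\epsilon)$ times the privatized TV (using $c(\epsilon)=1/(2\sigma(\epsilon)-1)$). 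Squaring, summing, and substituting into the inequality from the previous step yields precisely the claimed bound with the factor $c(\epsilon)^2$. The only genuinely delicate point — and the one I would be most careful about — is the conditioning/measurability bookkeeping needed to invoke the \emph{online} (martingale) version of the MLE bound: one must ensure $x^{(t)}$ is $\cF^{(t-1)}$‑measurable (or at least that the conditional law of $\widetilde y^{(t)}$ given $(x^{(t)},\cF^{(t-1)})$ is $\widetilde P_{\theta^\star}(\cdot\mid x^{(t)})$), that the supermartingale step is valid conditionally on $\cF^{(t-1)}$ while the randomness of $\widetilde y^{(t)}$ is averaged out, and that the union bound is taken over the finite $\Theta$ and the maximal inequality handles "for all $n\in[T]$ simultaneously." Everything else is the elementary affine/TV computation above.
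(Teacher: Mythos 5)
Your proposal is correct and follows essentially the same route as the paper's proof: apply the standard (non-private) sequential MLE uniform convergence result to the privatized densities $\widetilde P_\theta$, for which realizability is preserved under the \texttt{RR} channel, then use the affine action of the channel on two-point distributions to deduce $\|\widetilde P_\theta-\widetilde P_{\theta^\star}\|_{\mathsf{TV}}=(2\sigma(\epsilon)-1)\|P_\theta-P_{\theta^\star}\|_{\mathsf{TV}}$ and rescale by $c(\epsilon)^2$. The only cosmetic difference is that you sketch the supermartingale argument behind the generic lemma, which the paper simply cites.
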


\begin{remark}
    For the ease of presentation, we focus on the finite class case. The result can be readily extended to an infinite function class by using the covering number argument. For example, via the covering number argument, one can simply replace the $|\Pi|$ by the corresponding $\beta$-covering number and add a linear term of $\beta n$. As a direct application of the above lemma, the MLE estimator that minimizes $\hat{L}^{(n)}(\theta)$ has an estimation error on the order of $c(\epsilon)^2 \log(|\Theta|)$, with $c(\epsilon)^2$ being the \emph{optimal} privacy cost.
\end{remark}
\textbf{Implications.} This lemma has several important and concrete implications. First, it helps to clarify our current understanding of MLE with log loss under privacy. Specifically, some previous works have shown that one needs to construct a new de-biased loss to work with local privacy, which turns out to be unnecessary. For instance, in the context of reward model learning with private preference data,~\citet{chowdhury2024differentially} attempted to work with the MLE estimator directly, but only obtained a suboptimal rate. This motivates their new de-biased loss design, which was later adopted in the alignment setting as well~\cite{zhou2025unified}. However, with our above new result, we can show that the MLE estimator under local privacy can also yield an optimal rate for the reward model, by a simple application of the mean-value theorem. Further, via the reduction framework in~\citet{zhou2025unified}, our result can be directly leveraged in the offline alignment problem with linear function approximation, see more discussions on these in Appendix~\ref{app:app-dis}. Second, in Section~\ref{sec:private}, we will show that the above lemma can also be utilized to design new algorithms with log loss for both offline and online alignment, even with a general function class.  Finally, given the wide application of log loss in decision-making problems~\cite{foster2023foundations,foster2024behavior}, our result could find wide applications beyond alignment. 

\subsection{Square Loss under CTL and LTC}
In this subsection, our main result is the following lemma, which gives the uniform convergence with square loss under \emph{both local privacy and adversary corruption}. This can be viewed as a variant of the standard generalization bound of least squares (e.g.,~\citet{agarwal2019reinforcement}).
\begin{lemma}
\label{lem:uc-sq}
    Suppose that $\cH \subseteq (\cX \to [-1,1])$ is a given finite function class. Consider the clean data $(x^{(1)}, y^{(1)}), \ldots, (x^{(T)}, y^{(T)})$ be a sequence of random variables with $x^{(t)} \in \cX$, $y^{(t)} \in \{-1,1\}$ that are adapted to a filtration  $(\cF^{(t)})_{t=1}^T$ such that there exists $h^{\star} \in \cH$ with $h^{\star}(x^{(t)}) = \mathbb{E}[y^{(t)}| \cF^{(t-1)}, x^{(t)}]$ almost surely. Then, under \ctl and \ltc with parameters $\alpha, \epsilon$, the observed data sequence is $(x^{(1)}, z^{(1)}), \ldots, (x^{(T)}, z^{(T)})$ such that the clean label $y^{(t)}$ is turned to $z^{(t)}$ (cf. Def.~\ref{def:ctlandltc}). Define 
    \begin{align*}
        \cE^{(n)}(h):= \sum_{t=1}^n \mathbb{E}\left[ \left(h(x^{(t)}) - h^{\star}(x^{(t)})\right)^2 | \cF^{(t-1)}\right].
    \end{align*}
    Then, with probability at least $1-\delta$, it holds that for all $n \in [T]$ and for all $h \in \cH$, under \ctl and \ltc 
    \begin{align*}
        &\!\cE^{(n)}_{\mathsf{CTL}}(h) \!\lesssim\!\hat{L}_{\mathsf{sq}}^{(n)}(h) \!\! - \!\! \hat{L}_{\mathsf{sq}}^{(n)}(h^{\star}) \!+\! c(\epsilon)^2  \log(|\cH|\delta^{-1}) \!+\! n \alpha^2,\\
        &\!\cE^{(n)}_{\mathsf{LTC}}(h) \!\lesssim\!\hat{L}_{\mathsf{sq}}^{(n)}(h) \!\! - \!\! \hat{L}_{\mathsf{sq}}^{(n)}(h^{\star}) \!+\! c(\epsilon)^2  \log(|\cH|\delta^{-1}) \!+\! n c(\epsilon)^2\alpha^2,
    \end{align*}
    where $\hat{L}_{\mathsf{sq}}^{(n)}(h'):= \sum_{t=1}^n(h'(x^{(t)}) - c(\epsilon) z^{(t)})^2$, for any $h' \in \cH$.
\end{lemma}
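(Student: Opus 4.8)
The plan is to run the classical generalization analysis of least squares (in the style of~\cite{agarwal2019reinforcement}), but applied to the \emph{rescaled} label $w^{(t)} := c(\epsilon)\,z^{(t)}$, which is bounded by $|w^{(t)}| \le c(\epsilon)$ and which I will show is a \emph{slightly biased} estimate of $h^{\star}(x^{(t)})$ under both orderings. The first step is the moment computation. Since randomized response with parameter $\epsilon$ contracts the conditional mean of the label by exactly $2\sigma(\epsilon)-1 = 1/c(\epsilon)$, while the $\alpha$-Huber step replaces the mean by a convex combination with some bad mean $b^{(t)}\in[-1,1]$, under \ctl one gets $\mathbb{E}[w^{(t)}\mid \cF^{(t-1)}, x^{(t)}] = (1-\alpha)h^{\star}(x^{(t)}) + \alpha b^{(t)} =: h^{\star}(x^{(t)}) + \xi^{(t)}$ with $|\xi^{(t)}| = \alpha\,|b^{(t)} - h^{\star}(x^{(t)})| \le 2\alpha$, whereas under \ltc the RR contraction is applied \emph{before} the corruption, giving $\mathbb{E}[w^{(t)}\mid \cF^{(t-1)}, x^{(t)}] = (1-\alpha)h^{\star}(x^{(t)}) + \alpha\,c(\epsilon)\,b^{(t)} =: h^{\star}(x^{(t)}) + \xi^{(t)}$ with $|\xi^{(t)}| \le (c(\epsilon)+1)\alpha \lesssim c(\epsilon)\alpha$. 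This single difference in $\sup_t|\xi^{(t)}|$ is exactly what produces the $n\alpha^2$ term for \ctl versus the $n\,c(\epsilon)^2\alpha^2$ term for \ltc; everything else in the argument is common to both cases.

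The second step is a per-round excess-loss identity and a self-bounding property. Writing $\Delta^{(t)}(h) := h(x^{(t)}) - h^{\star}(x^{(t)})$ and $g^{(t)}(h) := (h(x^{(t)}) - w^{(t)})^2 - (h^{\star}(x^{(t)}) - w^{(t)})^2 = \Delta^{(t)}(h)\bigl(h(x^{(t)}) + h^{\star}(x^{(t)}) - 2w^{(t)}\bigr)$, so that $\hat{L}_{\mathsf{sq}}^{(n)}(h) - \hat{L}_{\mathsf{sq}}^{(n)}(h^{\star}) = \sum_{t\le n} g^{(t)}(h)$, taking the conditional expectation and using the mean from Step~1 gives $\mathbb{E}[g^{(t)}(h)\mid\cF^{(t-1)}] = \mathbb{E}[\Delta^{(t)}(h)^2\mid\cF^{(t-1)}] - 2\,\mathbb{E}[\xi^{(t)}\Delta^{(t)}(h)\mid\cF^{(t-1)}]$. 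An AM--GM step ($2|\xi^{(t)}\Delta^{(t)}(h)| \le \tfrac12\Delta^{(t)}(h)^2 + 2(\xi^{(t)})^2$) yields both a lower bound $\mathbb{E}[g^{(t)}(h)\mid\cF^{(t-1)}] \ge \tfrac12\mathbb{E}[\Delta^{(t)}(h)^2\mid\cF^{(t-1)}] - 2(\xi^{(t)})^2$ and, after rearranging, the reverse bound $\mathbb{E}[\Delta^{(t)}(h)^2\mid\cF^{(t-1)}] \le 2\,\mathbb{E}[g^{(t)}(h)\mid\cF^{(t-1)}] + 4(\xi^{(t)})^2$. Summing the lower bound over $t\le n$ and invoking the bound on $|\xi^{(t)}|$ gives $\tfrac12\cE^{(n)}(h) \le \sum_{t\le n}\mathbb{E}[g^{(t)}(h)\mid\cF^{(t-1)}] + O(n\alpha^2)$ for \ctl, resp.\ $+\,O(n\,c(\epsilon)^2\alpha^2)$ for \ltc. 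Since $|h|,|h^{\star}|\le 1$ and $|w^{(t)}|\le c(\epsilon)$, we also have $|g^{(t)}(h)| \le 2(2+2c(\epsilon)) \lesssim c(\epsilon)$ (using $c(\epsilon) > 1$) and $\mathbb{E}[g^{(t)}(h)^2\mid\cF^{(t-1)}] \le (2+2c(\epsilon))^2\,\mathbb{E}[\Delta^{(t)}(h)^2\mid\cF^{(t-1)}] \lesssim c(\epsilon)^2\bigl(\mathbb{E}[g^{(t)}(h)\mid\cF^{(t-1)}] + (\xi^{(t)})^2\bigr)$ by the reverse bound; this is the self-bounding property that will keep the privacy price at exactly $c(\epsilon)^2$.

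The third step is martingale concentration. I would apply a time-uniform Freedman/Bernstein inequality to the martingale $\sum_{t\le n}\bigl(\mathbb{E}[g^{(t)}(h)\mid\cF^{(t-1)}] - g^{(t)}(h)\bigr)$ for each fixed $h\in\cH$: its increments have range $\lesssim c(\epsilon)$ and predictable quadratic variation $\lesssim c(\epsilon)^2\sum_{t\le n}\bigl(\mathbb{E}[g^{(t)}(h)\mid\cF^{(t-1)}] + (\xi^{(t)})^2\bigr)$ by Step~2. Choosing Freedman's free parameter of order $1/c(\epsilon)^2$ (which is admissible since then (parameter)$\times$(range)$\lesssim 1/c(\epsilon)\le 1$) lets the variance term absorb a $\tfrac14$-fraction of $\sum_{t\le n}\mathbb{E}[g^{(t)}(h)\mid\cF^{(t-1)}]$ while producing an additive $O(c(\epsilon)^2\log\delta^{-1})$ plus an $O(n\alpha^2)$ (resp.\ $O(n\,c(\epsilon)^2\alpha^2)$) contribution from the $\sum_t(\xi^{(t)})^2$ piece. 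A union bound over the finite class $\cH$, with uniformity over $n\in[T]$ retained via Ville's inequality, upgrades $\log\delta^{-1}$ to $\log(|\cH|\delta^{-1})$ and gives, for all $n$ and all $h$, $\sum_{t\le n}\mathbb{E}[g^{(t)}(h)\mid\cF^{(t-1)}] \lesssim \bigl(\hat{L}_{\mathsf{sq}}^{(n)}(h) - \hat{L}_{\mathsf{sq}}^{(n)}(h^{\star})\bigr) + c(\epsilon)^2\log(|\cH|\delta^{-1}) + (\text{corruption term})$. Chaining this with the Step~2 lower bound $\tfrac12\cE^{(n)}(h) - O(\text{corr}) \le \sum_{t\le n}\mathbb{E}[g^{(t)}(h)\mid\cF^{(t-1)}]$ and instantiating the corruption term as $n\alpha^2$ for \ctl and $n\,c(\epsilon)^2\alpha^2$ for \ltc yields both claimed inequalities.

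\textbf{Main obstacle.} The only place where something beyond bookkeeping happens is ensuring the privacy cost enters as exactly $c(\epsilon)^2$, not a larger power. This hinges on the self-bounding relation $\mathbb{E}[g^{(t)}(h)^2\mid\cF^{(t-1)}] \lesssim c(\epsilon)^2\bigl(\mathbb{E}[g^{(t)}(h)\mid\cF^{(t-1)}] + (\xi^{(t)})^2\bigr)$ interacting correctly with the scale of Freedman's free parameter, so that the single $c(\epsilon)^2$ introduced by rescaling the label to $w^{(t)}=c(\epsilon)z^{(t)}$ is not compounded by the concentration step. A secondary technical point is uniformity over all $n\in[T]$, which I would get from the supermartingale/Ville argument underlying time-uniform Freedman rather than a crude union bound (the latter would only add a harmless $\log T$); handling the bias $\xi^{(t)}$ uniformly is immediate since $|\xi^{(t)}|\le 2\alpha$ (resp.\ $|\xi^{(t)}|\le(c(\epsilon)+1)\alpha$) holds deterministically.
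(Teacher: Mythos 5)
Your proposal is correct, and at the level of the key reduction it coincides with the paper's: both rescale the observed label to $c(\epsilon)z^{(t)}$ so that the randomized-response contraction $2\sigma(\epsilon)-1 = 1/c(\epsilon)$ is undone, and both compute the residual conditional bias as at most $2\alpha$ under \ctl versus $O(c(\epsilon)\alpha)$ under \ltc (your Step~1 matches equations~\eqref{eq:key-ctl} and~\eqref{eq:key-ltc} exactly), which is the sole source of the $n\alpha^2$ versus $n\,c(\epsilon)^2\alpha^2$ separation. Where you diverge is in how the concentration is executed: the paper treats the biased mean as an approximation error $\alpha_{\mathsf{app}}$ and invokes a black-box misspecified least-squares guarantee (Lemma~\ref{lem:uc-sq-std}, adapted from the cited prior work) with label scale $C=c(\epsilon)$, then converts back to $h^{\star}$ via $(a+b)^2\le 2(a^2+b^2)$; you instead inline the entire argument --- excess-loss decomposition, AM--GM absorption of the bias, the self-bounding variance relation, and time-uniform Freedman with free parameter of order $1/c(\epsilon)^2$. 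The paper explicitly acknowledges in the appendix that a direct Freedman proof is possible and simply prefers the modular route, so your version is a valid, self-contained alternative; the one thing to be careful about when writing it out fully is that the self-bounding step $\mathbb{E}[g^{(t)}(h)^2\mid\cF^{(t-1)}]\lesssim c(\epsilon)^2\bigl(\mathbb{E}[g^{(t)}(h)\mid\cF^{(t-1)}]+(\xi^{(t)})^2\bigr)$ genuinely requires routing through the reverse inequality $\mathbb{E}[\Delta^{(t)}(h)^2\mid\cF^{(t-1)}]\le 2\,\mathbb{E}[g^{(t)}(h)\mid\cF^{(t-1)}]+4(\xi^{(t)})^2$ (since $\mathbb{E}[g^{(t)}(h)\mid\cF^{(t-1)}]$ alone can be smaller than $\mathbb{E}[\Delta^{(t)}(h)^2\mid\cF^{(t-1)}]$ in the misspecified case), which you do correctly.
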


\begin{remark}
    As before, we focus on the finite function class case. The result can be readily extended to an infinite function class by using the covering number argument. For instance, for the linear model in $d$ dimension, one can replace the $\log(|\cH|)$ term by $\widetilde{O}(d)$. As a direct application of the above lemma, one can obtain that the estimation error for the least-square estimator (which minimizes $\hat{L}_{\mathsf{sq}}^{(n)}(\theta)$) has an additional $c(\epsilon)^2$ in the privacy case, as well as an $n \alpha^2$ bias for \ctl while a larger bias term of $n c(\epsilon)^2\alpha^2$ for \ltc. 
\end{remark}

\begin{remark}
    All these additional factors are, in fact, \emph{optimal} by leveraging the results for mean estimations in \ctl and \ltc~\cite{zhou2024locally}, with the reduction from mean estimation to regression. Specifically, the lower bound for the regression problem in Lemma~\ref{lem:uc-sq} can be understood by viewing it as a special case of one-dimensional mean estimation (setting $d = 1$ and the feature $x = 1$). According to the results of~\citet{zhou2024locally}, under \ctl and \ltc settings, the lower bounds for mean estimation error are $\Omega\bigl(c(\epsilon) \cdot 1/\sqrt{n} + \alpha\bigr)$ and $\Omega\bigl(c(\epsilon) \cdot 1/\sqrt{n} + c(\epsilon)\alpha\bigr)$, respectively. Through squaring and a simple transformation with respect to $n$, this directly proves the optimality of the least squares estimator in Lemma~\ref{lem:uc-sq}.
\end{remark}

\textbf{Improvement over the prior art.}
The above lemma improves the state-of-the-art~\cite{zhou2025square}
in both \ctl and \ltc settings.
In particular, for \ctl, \citet{zhou2025square} established that the corruption term is $n \alpha$,
which is strictly worse than our optimal bound of $n \alpha^2$ as $\alpha \in [0,1/2)$.
For \ltc, \citet{zhou2025square} showed a bound of $n c(\epsilon) \alpha$,
which is again worse than ours for the meaningful range of $c(\epsilon) \alpha \lesssim 1$.
In Section~\ref{sec:sq}, we will apply the above lemma to both offline and online alignment to arrive at state-of-the-art results.

\section{Private Alignment: Log Loss}
\label{sec:private}
This section presents two log-loss–based algorithms with theoretical guarantees for alignment under private labels: one for the offline setting and the other for the online setting. Each algorithm is obtained through a simple yet principled modification of its non-private counterpart, and their guarantees are established using our earlier result, i.e., Lemma~\ref{lem:uc-log}.

\subsection{Offline Setting}
In this subsection, we focus on the offline setting and present \texttt{Priv}\chipo (Algorithm~\ref{alg:PchiPO}), which is the private version of \chipo proposed in~\citet{huang2024correcting}.

\begin{algorithm}[!t]
\caption{\texttt{Priv}\chipo}
\label{alg:PchiPO}
\begin{algorithmic}[1]
    \STATE \textbf{Input:} Locally private preference dataset $\widetilde{\mathcal{D}}_{\pref}=\{\tau_{-1}^{(i)}, \tau_1^{(i)}, \widetilde{y}^{(i)}\}_{i=1}^n$, privacy parameter $\epsilon >0$, regularization coefficient $\beta >0$, reference policy $\pi_{\mathsf{ref}}$


    \STATE Define 
\begin{align}
     \!\!\!\!\!\!\!\tau^{(i)}_+ &:= \tau^{(i)}_{\widetilde{y}^{(i)}} \text{ and }  \tau^{(i)}_{-} := \tau^{(i)}_{-\widetilde{y}^{(i)}}\nonumber\\
     \!\!\!\!\!\!\!\phi(u)&:=u+\log u \nonumber\\
    \!\!\!\!\!\!\!h_{\chipo}^{(i)}(\pi)&\!:=\! \beta \phi\left(\frac{\pi(\tau^{(i)}_+)}{\pi_{\mathsf{ref}}(\tau^{(i)}_+)}\right) \!-\! \beta\phi\left(\frac{\pi(\tau^{(i)}_{-})}{\pi_{\mathsf{ref}}(\tau^{(i)}_{-})}\right)\!\!\nonumber\\
     \!\!\!\!\!\!\!P^{(i)}_{\chipo}(\pi) &:=\sigma\left(\operatorname{clip}_{2 R_{\mathsf{max} }}\left[h_{\chipo}^{(i)}(\pi)\right]\right) \nonumber
\end{align}
\STATE Optimize the following objective:
$$
\!\!\!\widehat{\pi} \!\leftarrow \!\underset{\pi \in \Pi}{\operatorname{argmax}}\!\! \sum_{i \in [n]}  \log \left[ (2\sigma(\epsilon)\!-\!1) P^{(i)}_{\chipo}(\pi) \!\!+\!\! (1-\sigma(\epsilon))   \right]\!,
$$
where $\sigma(\epsilon) := \frac{e^{\epsilon}}{e^{\epsilon}+1}$
    \STATE \textbf{Output:} $\hat{\pi}$
\end{algorithmic}
\end{algorithm}

\texttt{Priv}\chipo takes as input a preference dataset $\widetilde{\mathcal{D}}_{\pref}$ with privatized labels. It first ranks the two responses using the private label and then computes the reparameterization function $h_{\chipo}^{(i)}(\pi)$ using the function $\phi(u)$, which has an additional linear term $u$ compared to the standard $\log (u)$ term in \texttt{DPO}~\cite{rafailov2023direct}. This additional term arises due to the use of $\chi^2$-divergence (in addition to standard KL-divergence) when regularizing the final policy with respect to $\piref$ in RLHF. Next, the preference probability $P^{(i)}_{\chipo}(\pi)$ is computed based on the BT-model (cf.~\eqref{eq:BT}), with the truncated $h_{\chipo}^{(i)}(\pi)$ $\in [-2R_{\mathsf{max}}, 2R_{\mathsf{max}}]$ being the reward difference. Here, the truncation is adopted to slightly improve the final bound, as in~\citet{huang2024correcting}. Finally, the policy is found by the private MLE, inspired by $\hat{L}^{(n)}$ in Lemma~\ref{lem:uc-log}, which reduces to the standard log loss when $\epsilon = \infty$, i.e., non-private case. In summary, the only change compared to \chipo is the loss objective. 

In the following, we will present the sample complexity guarantee for \texttt{Priv}\chipo. To this end, we first make several standard assumptions as in the non-private case~\cite{huang2024correcting}. The first one is the \emph{realizability} assumption, which states that the policy class $\Pi$ contains the optimal policy under regularization. 
\begin{assumption}[Policy realizability]
\label{ass:realizability}
    Fix $\beta >0$. The policy class $\Pi$ satisfies $\pi_{\beta}^{\star} \in \Pi$, where $\pi_{\beta}^{\star}$ is the optimal policy of the following mixed $\chi^2$-regularized and KL-regularized objective:
    \begin{align*}
    \!J_{\beta}^{\chi_{\mathsf{mix}}}(\pi)\!:=\!\mathbb{E}_\pi[r^{\star}(\tau)]\!-\!\beta \cdot [D_{\chi^2}(\pi \| \pi_{\mathsf{ref}})\!+\!  D_{\mathsf{KL}}(\pi \| \pi_{\mathsf{ref}})].
\end{align*}
\end{assumption}

The next assumption asserts that the \emph{implicit reward difference} under any policy in $\Pi$ is upper bounded by some constant.
\begin{assumption}[Bounded implicit reward difference]
\label{ass:bound-rew}
    For a parameter $V_{\max} \ge R_{\max}$, it holds that for all $\pi \in \Pi$, trajectories $\tau, \tau'$ from the same state,
$$
\left|\beta \phi\left(\frac{\pi(\tau)}{\pi_{\mathsf{ref}}(\tau)}\right)-\beta \phi\left(\frac{\pi(\tau')}{\pi_{\mathsf{ref}}(\tau')}\right)\right| \le V_{\max}.
$$
\end{assumption}

Finally, as is typical in offline RL, we need some notion of \emph{concentrability} to measure the quality of the offline data. Following~\citet{huang2024correcting}, our main result relies on the following one.
\begin{definition}[$L_1$-Concentrability]
    For a target policy $\pi$,  the single-policy $L_1$-concentrability coefficient is given by
    \begin{align*}
\mathcal{C}^\pi:=\mathbb{E}_\pi\left[\frac{\pi(\tau)}{\pi_{\mathsf{ref}}(\tau)}\right],
    \end{align*}
    where we recall that $\pi(\tau)$ is a shorthand of $\pi(\tau|s)$ and $\mathbb{E}_{\pi}[\cdot]:= \mathbb{E}_{s\sim \rho, \tau \sim \pi(\cdot|s)}[\cdot]$.
\end{definition}

Now, the main sample complexity bound for \texttt{Priv}\chipo is as follows, with proof in Appendix~\ref{proof:pchipo}.

\begin{theorem}
\label{thm:pchipo}
    For any given comparator policy $\pi^{\star}$, there exists a proper choice of $\beta >0$ such that when Assumptions~\ref{ass:realizability} and~\ref{ass:bound-rew} hold, with probability at least $1-\delta$, the output of Algorithm~\ref{alg:PchiPO} satisfies the following suboptimality gap:
    \begin{align*}
        J(\pi^{\star}) - J(\hat{\pi}) &\lesssim \kappa(\pi^{\star}) \left(c(\epsilon) \sqrt{\frac{ \log(|\Pi|/\delta)}{n}}\right)
    \end{align*}
    where $c(\epsilon)= \frac{e^{\epsilon}+1}{e^{\epsilon}-1}$ and $\kappa(\pi^{\star}):= e^{2R_{\max}}\cdot\frac{V_{\max}}{R_{\max}}\sqrt{\cC^{\pi^{\star}}}$ is the single-policy concentrability related term.
\end{theorem}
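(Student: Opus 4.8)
The plan is to recognize Algorithm~\ref{alg:PchiPO} as the private log-loss MLE over $\Pi$, feed our Lemma~\ref{lem:uc-log} into the non-private $\chipo$ analysis of~\cite{huang2024correcting} in place of the standard MLE generalization bound, and track the one extra factor $c(\epsilon)$ through to the end. Concretely, set $x^{(i)}:=(s^{(i)},\tau^{(i)}_{-1},\tau^{(i)}_1)$ and, for $\pi\in\Pi$, define $P_\pi(\cdot\mid x^{(i)})\in\Delta(\{-1,1\})$ by $P_\pi(1\mid x^{(i)}):=\sigma\big(\operatorname{clip}_{2R_{\max}}[\,\beta\phi(\pi(\tau^{(i)}_1)/\piref(\tau^{(i)}_1))-\beta\phi(\pi(\tau^{(i)}_{-1})/\piref(\tau^{(i)}_{-1}))\,]\big)$. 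A one-line check shows $P^{(i)}_{\chipo}(\pi)=P_\pi(\widetilde y^{(i)}\mid x^{(i)})$ — relabelling the responses by $z^{(i)}=\widetilde y^{(i)}$ only swaps the two outcomes — so the objective maximized in line~3 is exactly $-\widehat L^{(n)}(\pi)$ with $\widehat L^{(n)}$ as in Lemma~\ref{lem:uc-log}, i.e. $\widehat\pi$ is the private log-loss MLE. Realizability holds with $\theta^\star=\pi^\star_\beta$: Assumption~\ref{ass:realizability} gives $\pi^\star_\beta\in\Pi$, and the $\chipo$ reparametrization identity of~\cite{huang2024correcting} for the link $\phi(u)=u+\log u$ induced by the mixed $\chi^2$-plus-KL regularization gives $\beta\phi(\pi^\star_\beta(\tau)/\piref(\tau))=r^\star(\tau)+(\text{state-only term})$, so the (clipped) implicit-reward difference of $\pi^\star_\beta$ equals $r^\star(\tau_1)-r^\star(\tau_{-1})\in[-R_{\max},R_{\max}]$ — the clip is inactive — and hence $P_{\pi^\star_\beta}(\cdot\mid x^{(i)})$ is exactly the true Bradley–Terry law. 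The offline i.i.d.\ data is the degenerate case of the filtration hypothesis.

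Next, since $\widehat\pi$ minimizes $\widehat L^{(n)}$, we have $\widehat L^{(n)}(\widehat\pi)-\widehat L^{(n)}(\pi^\star_\beta)\le 0$, so Lemma~\ref{lem:uc-log} (after dividing by $n$) yields, with probability $\ge 1-\delta$,
\[
\mathbb{E}_{s\sim\rho,\ \tau_{-1},\tau_1\sim\piref}\!\left[\,\norm{P_{\widehat\pi}(\cdot\mid x)-P_{\pi^\star_\beta}(\cdot\mid x)}_{\mathsf{TV}}^2\,\right]\;\lesssim\;\frac{c(\epsilon)^2\log(|\Pi|/\delta)}{n}.
\]
For two Bernoullis the TV distance is the gap of the means; both means are $\sigma$ of arguments in $[-2R_{\max},2R_{\max}]$, where $\sigma'=\sigma(1-\sigma)\gtrsim e^{-2R_{\max}}$. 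Hence, writing $h_{\chipo}(\pi)$ for $h^{(i)}_{\chipo}(\pi)$,
\[
\mathbb{E}_{s,\tau_{-1},\tau_1}\!\left[\,\big(\operatorname{clip}_{2R_{\max}}[h_{\chipo}(\widehat\pi)]-(r^\star(\tau_1)-r^\star(\tau_{-1}))\big)^2\,\right]\;\lesssim\;e^{4R_{\max}}\cdot\frac{c(\epsilon)^2\log(|\Pi|/\delta)}{n}\;=:\;\varepsilon_{\mathsf{stat}}^2 .
\]

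From here the argument is the non-private analysis of~\cite{huang2024correcting} with $\varepsilon_{\mathsf{stat}}$ in place of their statistical error, invoked as a black box: (i) the clip is without loss since true reward differences lie in $[-R_{\max},R_{\max}]$; (ii) the $\chi^2$-divergence (pessimism-type) change of measure upgrades the $\piref$-weighted squared error to an on-policy bound under $\pi^\star$ at the price of a $\sqrt{\cC^{\pi^\star}}$ factor; (iii) converting the implicit-reward gap to the value gap $J(\pi^\star)-J(\widehat\pi)$ contributes $V_{\max}/R_{\max}$ (using Assumption~\ref{ass:bound-rew}); (iv) choosing $\beta$ to balance $\varepsilon_{\mathsf{stat}}$ against the $\chi^2$-plus-KL regularization bias leaves no residual $\beta$-dependence. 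Every constant is exactly that of the non-private proof except the extra $c(\epsilon)^2$ inside $\varepsilon_{\mathsf{stat}}^2$, which becomes $c(\epsilon)$ after the square root, giving $J(\pi^\star)-J(\widehat\pi)\lesssim\kappa(\pi^\star)\,c(\epsilon)\sqrt{\log(|\Pi|/\delta)/n}$ with $\kappa(\pi^\star)=e^{2R_{\max}}(V_{\max}/R_{\max})\sqrt{\cC^{\pi^\star}}$.

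I expect the main obstacle to be the bookkeeping of the reduction step rather than the $\chipo$ machinery: one must check that line~3's objective is verbatim the private MLE loss of Lemma~\ref{lem:uc-log} under the $z^{(i)}$-relabelling, and, crucially, that $\pi^\star_\beta$ is exactly realizable with an \emph{inactive} clip — this is where the link $\phi(u)=u+\log u$ (mixed $\chi^2$+KL regularization, Assumption~\ref{ass:realizability}) and the clip radius $2R_{\max}$ (rather than $V_{\max}$) must be used together. The only other non-cosmetic point is the TV-to-squared-reward-error conversion, where the uniform lower bound on $\sigma'$ over $[-2R_{\max},2R_{\max}]$ — and hence the $e^{\Theta(R_{\max})}$ factor — enters; everything downstream is a transcription of~\cite{huang2024correcting}.
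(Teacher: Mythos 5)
Your proposal is correct and follows essentially the same route as the paper's proof: cast line~3 of Algorithm~\ref{alg:PchiPO} as the private MLE of Lemma~\ref{lem:uc-log} with realizability at $\pi^\star_\beta$ (clip inactive since $r^\star\in[0,R_{\max}]$), use the nonpositivity of the empirical excess loss to bound the TV-squared error by $c(\epsilon)^2\log(|\Pi|/\delta)/n$, convert to a squared reward-difference error via the mean-value theorem at the cost of $e^{4R_{\max}}$, and plug $\err^2_{\mathsf{stat}}$ into the \chipo meta theorem of~\cite{huang2024correcting} with the balancing choice of $\beta$. The paper packages the sigmoid-inversion step as Lemma~\ref{lem:mvt} rather than a direct bound on $\sigma'$, but this is a cosmetic difference.
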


This theorem shows that the cost due to local label privacy is a multiplicative factor $c(\epsilon)$, which is indeed optimal up to $\kappa(\pi^{\star})$ factor. We highlight that this is the first result showing that a standard log-loss MLE-type algorithm can yield a near-optimal rate under privacy, without the design of any de-biased loss. The dependence of single-policy concentrability (rather than the larger all-policy concentrability) is the key benefit of using the additional $\chi^2$-regularization.

\subsection{Online Setting}
In this subsection, we turn to the online alignment setting. Our main algorithm is \texttt{Priv}\xpo (Algorithm~\ref{alg:PXPO}), which is a simple modification to the non-private counterpart \xpo in~\citet{xie2024exploratory}.

To better present the online privacy protection process, we divide the algorithm into two parts: the user side and the learner side in Algorithm~\ref{alg:PXPO}. For each step $t \in [T]$, user $t$ submits a question $s_1^{(t)}$ and receives two responses\footnote{This two-response interaction indeed often pops up in the ChatGPT app or website.} $\tau^{(t)} \sim \pi^{(t)} \mid s^{(t)}_1$, 
and $\widetilde{\tau}^{(t)} \sim \pi_{\mathsf{ref}} \mid s^{(t)}_1$. Then, user $t$ generates the true preference label and the private one, respectively. On the learner side, after receiving the $t$-th new data, it ranks the responses, updates the online dataset, and computes the probability $P^{(i)}_{\xpo}$. Here, the reparameterization function is just the standard \texttt{DPO}-type, i.e., only the log function in $h_{\xpo}^{(i)}$. This probability is then used to construct the empirical loss $\hat{L}_{\xpo}^{(t)}$, similar to the offline setting, which is again inspired by Lemma~\ref{lem:uc-log}. Finally, the learner updates the policy by minimizing a composite loss function, which adds an additional term $\gamma \sum_{i \in [t]}\log \pi(\widetilde{\tau}^{(i)})$ to solicit active exploration via global optimism, which is the key component in \xpo. As before, the only fundamental change compared with \xpo is the loss objective. 

\begin{algorithm}[!t]
\caption{\texttt{Priv}\xpo}
\label{alg:PXPO}
\begin{algorithmic}[1]
    \STATE \textbf{Input:} Privacy parameter $\epsilon > 0$, number of iterations $T$, regularization coefficient $\beta >0$, optimism parameter $\gamma >0$
    \STATE Initialize $\pi^{(1)} \leftarrow \piref, \widetilde{\mathcal{D}}_{\mathsf{pref}}^{(0)} \leftarrow \varnothing$ 
    \FOR{iteration $t = 1,2,\ldots, T$}
        \STATE \textcolor{blue}{/* \texttt{User Side} */}
        \STATE Generate prompt/question: $s_1^{(t)} \sim \rho$, 
        \STATE Receive two responses: $\tau^{(t)} \sim \pi^{(t)} \mid s^{(t)}_1$, 
    and $\widetilde{\tau}^{(t)} \sim \pi_{\mathsf{ref}} \mid s^{(t)}_1$
         \STATE Generate true label: $y^{(t)} = 1$ w.p. $ \cP^{\star}(\tau^{(t)} \succ \widetilde{\tau}^{(t)} \mid s)$ as in~\eqref{eq:BT}; otherwise $y^{(t)} = -1$
         \STATE Generate private label $\widetilde{y}^{(t)}$ via \texttt{RR} as in~\eqref{eq:RR} 
         \STATE \textcolor{blue}{/* \texttt{Learner Side} */}
         \STATE Receive new data $(\tau^{(t)}, \widetilde{\tau}^{(t)}, \widetilde{y}^{(t)})$ from the user 
         \STATE Rank two responses: $(\tau^{(t)}, \widetilde{\tau}^{(t)})$ as $(\tau^{(t)}_{+}, \tau^{(t)}_{-})$ if $\widetilde{y}^{(t)}=1$; otherwise, reserve the order
         \STATE Update preference data: $\widetilde{\cD}_{\pref}^{(t)} = \widetilde{\cD}_{\pref}^{(t-1)} \cup \{(\tau^{(t)}_{+}, {\tau}^{(t)}_{-})\}$
         \STATE Define
         \begin{align}
             \!\!\!\!\!\!\!h_{\xpo}^{(i)}(\pi)&\!:=\! \beta \log\left(\frac{\pi(\tau^{(i)}_+)}{\pi_{\mathsf{ref}}(\tau^{(i)}_+)}\right) \!-\! \beta\log\left(\frac{\pi(\tau^{(i)}_{-})}{\pi_{\mathsf{ref}}(\tau^{(i)}_{-})}\right)\!\! \nonumber\\
             \!\!\!\!\!\!\!P^{(i)}_{\xpo}(\pi) &:=\sigma\left(h_{\xpo}^{(i)}\right) \nonumber\\
              \!\!\!\!\!\!\!\hat{L}^{(t)}_{\xpo}(\pi) &:=\!\! \sum_{i \in [t]}  \log \left[ (2\sigma(\epsilon)-1) P^{(i)}_{\xpo} + (1-\sigma(\epsilon))   \right] \nonumber
         \end{align}
         \STATE Optimize with global optimism:
         \begin{align*}
         \pi^{(t+1)}
         \leftarrow
         \argmin_{\pi \in \Pi}
         \Bigg\{
         &\gamma \sum_{i \in [t]}\log \pi(\widetilde{\tau}^{(i)}) \\
         &- c(\epsilon)^2\hat{L}^{(t)}_{\xpo}(\pi)
         \Bigg\}.
         \end{align*}
    \ENDFOR

    \STATE \textbf{Output:} $\hat{\pi} = \argmax_{\pi \in \{\pi^{(1)}, \ldots, \pi^{(T+1)}\}} J_{\beta} (\pi)$
\end{algorithmic}
\end{algorithm}

In the following, we turn to the theoretical guarantee of \texttt{Priv}\xpo. To this end, we will make the same set of statistical assumptions as in \xpo~\cite{xie2024exploratory}. 

\begin{assumption}[Policy realizability]
\label{ass:real-xpo}
    Fix $\beta >0$. The policy class $\Pi$ satisfies $\pi_{\beta}^{\star} \in \Pi$, where $\pi_{\beta}^{\star}$ is the optimal policy of the KL-regularized objective as in~\eqref{eq:reg-obj}.
\end{assumption}

\begin{remark}[Approximate realizability] The realizability assumption is used only to avoid carrying approximation terms. If the target model is not exactly contained in the considered class, the same proof yields an additional approximation term. For example, if \[ \Delta_{\rm app} := \inf_{\pi\in\Pi} \mathcal E(\pi,\pi^\star) \] denotes the best-in-class approximation error, then the right-hand side of the corresponding suboptimality bound is enlarged by an additive term depending on $\Delta_{\rm app}$. Thus, the guarantees degrade gracefully under model misspecification. \end{remark}

We also need the following boundedness assumption.
\begin{assumption}[Bounded density ratios]

\label{ass:bound-ratio}
    For a parameter $V_{\max}$, it holds that for all $\pi \in \Pi$, any trajectory $\tau$,
$$
\left|\beta \log\left(\frac{\pi(\tau)}{\pi_{\mathsf{ref}}(\tau)}\right)\right| \le V_{\max}.
$$
\end{assumption}
Note that this is slightly stronger than the one in Assumption~\ref{ass:bound-rew}, which only requires the difference to be bounded.

Finally, we require a notion to quantify the exploration difficulty as in online RL, which is necessary for RL with general function approximations. We follow \xpo to use a condition known as \emph{coverability}~\cite{xie2022role}, which is formally defined below. 
\begin{definition}[Coverability]
    The trajectory-level coverability coefficient is given by
    \begin{align*}
        C_{\mathsf{cov}}(\Pi):= \inf_{\mu \in \Delta (\cT)}\sup_{\tau \in \cT} \sup_{\pi \in \Pi} \frac{d^{\pi}(\tau)}{\mu(\tau)},
    \end{align*}
    where $\cT$ is the trajectory space and $d^{\pi}(\tau)$ is the probability of observing a trajectory $\tau$  under policy $\pi$.
\end{definition}
This coverability notion for online RL is closely related to concentrability in offline RL in that coverability can be viewed as the best concentrability in a certain case (due to the inf operator in the definition), see~\citet{xie2022role} for more discussions. 

We are now ready to state the following sample complexity guarantee of \texttt{Priv}\xpo, with its proof given by Appendix~\ref{proof:PXPO}.

\begin{theorem}
\label{thm:PXPO}
Suppose that Assumptions~\ref{ass:real-xpo} and~\ref{ass:bound-ratio} hold. For any $\beta > 0$ and $T \in \mathbb{N}$, there exists a proper choice of $
\gamma$ such that Algorithm~\ref{alg:PXPO} ensures that with probability at least $1-\delta$,
\begin{align*}
    J_{\beta}(\pi_{\beta}^{\star}) - J_{\beta}(\hat{\pi}) \lesssim \kappa_{\mathsf{cov}}(\Pi)\left(c(\epsilon) \sqrt{ \frac{\log(|\Pi| T/\delta)}{T}}\right),
\end{align*}
where $\kappa_{\mathsf{cov}}(\Pi):= (V_{\mathsf{max}} + R_{\mathsf{max}})e^{2R_{\mathsf{max}}} \log (T) \sqrt{C_{\mathsf{cov}}(\Pi)}$ is the coverability-related term. 
\end{theorem}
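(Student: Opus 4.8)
\textbf{Proof plan for Theorem~\ref{thm:PXPO}.}
The plan is to run the analysis template of \xpo~\citep{xie2024exploratory}, with the only new ingredient being that the non-private log-loss generalization bound is replaced by our Lemma~\ref{lem:uc-log}, which is exactly why the algorithm rescales the log-likelihood term by $c(\epsilon)^2$ (the ``one-line modification''). First I would instantiate Lemma~\ref{lem:uc-log} with the conditioning variable $x^{(t)}=(s_1^{(t)},\tau^{(t)},\widetilde\tau^{(t)})$, the clean label $y^{(t)}$ the Bradley--Terry outcome for $\tau^{(t)}\succ\widetilde\tau^{(t)}$, and the conditional-density family indexed by $\pi\in\Pi$ through the implicit reward $\beta\log(\pi(\cdot)/\pi_{\mathsf{ref}}(\cdot))$, so that $P_\pi(\cdot\mid x^{(t)})$ is precisely the preference probability $P^{(i)}_{\xpo}(\pi)$. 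Realizability holds under Assumption~\ref{ass:real-xpo}: since $\pi_\beta^\star(\tau)\propto\pi_{\mathsf{ref}}(\tau)\exp(r^\star(\tau)/\beta)$, the log-partition function cancels in the reward \emph{difference} $h^{(i)}_{\xpo}(\pi_\beta^\star)$, so $h^{(i)}_{\xpo}(\pi_\beta^\star)=r^\star(\tau_+^{(i)})-r^\star(\tau_-^{(i)})$ and hence $P^{(i)}_{\xpo}(\pi_\beta^\star)=\cP^\star$. The online data is adapted to the natural filtration $(\cF^{(t)})$, so Lemma~\ref{lem:uc-log} applies verbatim and yields, with probability $\ge 1-\delta$, uniformly in $t\in[T]$ and $\pi\in\Pi$,
\[
\sum_{i=1}^{t}\mathbb{E}\!\left[\norm{P^{(i)}_{\xpo}(\pi)-P^{(i)}_{\xpo}(\pi_\beta^\star)}_{\mathsf{TV}}^2\ \middle|\ \cF^{(i-1)}\right]\ \lesssim\ c(\epsilon)^2\Big(\hat L^{(t)}(\pi)-\hat L^{(t)}(\pi_\beta^\star)+\log(|\Pi|/\delta)\Big),
\]
where, after the response-reordering in the algorithm makes the observed label identically $1$, the lemma's loss satisfies $\hat L^{(t)}(\pi)=-\hat L^{(t)}_{\xpo}(\pi)$.

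Next I would convert optimism into an in-sample estimation bound. By optimality of $\pi^{(t+1)}$ for the objective $\gamma\sum_{i\le t}\log\pi(\widetilde\tau^{(i)})-c(\epsilon)^2\hat L^{(t)}_{\xpo}(\pi)$, testing against the comparator $\pi_\beta^\star$ gives
\[
c(\epsilon)^2\big(\hat L^{(t)}(\pi^{(t+1)})-\hat L^{(t)}(\pi_\beta^\star)\big)\ \le\ \gamma\sum_{i\le t}\big(\log\pi_\beta^\star(\widetilde\tau^{(i)})-\log\pi^{(t+1)}(\widetilde\tau^{(i)})\big),
\]
and the $c(\epsilon)^2$ cancels against the one in the previous display, leaving
\[
\sum_{i=1}^{t}\mathbb{E}\!\left[\norm{P^{(i)}_{\xpo}(\pi^{(t+1)})-P^{(i)}_{\xpo}(\pi_\beta^\star)}_{\mathsf{TV}}^2\ \middle|\ \cF^{(i-1)}\right]\ \lesssim\ \gamma\sum_{i\le t}\big(\log\pi_\beta^\star(\widetilde\tau^{(i)})-\log\pi^{(t+1)}(\widetilde\tau^{(i)})\big)+c(\epsilon)^2\log(|\Pi|/\delta).
\]
This is exactly the shape of the key estimate in \xpo, up to the two extra $c(\epsilon)$-factors.

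From here I would carry out the remaining \xpo steps essentially unchanged. (i) \emph{Regret decomposition}: via the reparameterization and a value-difference identity, bound $J_\beta(\pi_\beta^\star)-J_\beta(\pi^{(t)})$ by an on-policy discrepancy $\mathbb{E}_{\tau\sim d^{\pi^{(t)}}}[\cdot]$ between $\pi^{(t)}$'s implicit reward model and the true one, plus an optimism slack proportional to $-\gamma(\cdot)$ that is matched by the $\gamma$-term above; translating the preference-probability TV distance back to reward differences costs the inverse modulus $e^{2R_{\max}}$ of the logistic link on $[-2V_{\max},2V_{\max}]$ and uses Assumption~\ref{ass:bound-ratio}. (ii) \emph{Coverability}: apply the pigeonhole/elliptic-potential argument of~\cite{xie2022role} to bound $\sum_{t\le T}\mathbb{E}_{\tau\sim d^{\pi^{(t)}}}[\cdot^2]$ by $C_{\mathsf{cov}}(\Pi)\log T$ times the running in-sample error, which the last display controls by $\gamma T\cdot O(V_{\max}/\beta)+c(\epsilon)^2\log(|\Pi|/\delta)$ after a Freedman-type martingale step on $\sum_i\log(\pi_\beta^\star(\widetilde\tau^{(i)})/\pi^{(t+1)}(\widetilde\tau^{(i)}))$ (bounded entrywise by Assumption~\ref{ass:bound-ratio}). (iii) \emph{Tuning}: choosing $\gamma\asymp (V_{\max}+R_{\max})e^{2R_{\max}}\,c(\epsilon)\sqrt{\log(|\Pi|T/\delta)/T}$ balances the $\gamma T$ burn-in against the $\sqrt{C_{\mathsf{cov}}(\Pi)\log T\cdot T\cdot c(\epsilon)^2\log(|\Pi|/\delta)}$ coverability term, giving average regret $\lesssim\kappa_{\mathsf{cov}}(\Pi)\,c(\epsilon)\sqrt{\log(|\Pi|T/\delta)/T}$; since $\hat\pi=\argmax_t J_\beta(\pi^{(t)})$, we have $J_\beta(\pi_\beta^\star)-J_\beta(\hat\pi)\le \frac1T\sum_{t}\big(J_\beta(\pi_\beta^\star)-J_\beta(\pi^{(t)})\big)$, which is the claimed bound.

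The main obstacle — and the only genuinely new point relative to \xpo — is bookkeeping the privacy factor: the uniform-convergence bound carries $c(\epsilon)^2$, yet the theorem has only a single $c(\epsilon)$, so one must verify that it is precisely the coverability step's square root that converts $c(\epsilon)^2\log(|\Pi|/\delta)$ into $c(\epsilon)\sqrt{\log(|\Pi|/\delta)}$, and that $\gamma$ can be tuned to scale like $c(\epsilon)$ (not $c(\epsilon)^2$) without breaking the cancellation between the optimism slack in the regret decomposition and the $\gamma$-term that leaks out of the in-sample estimate. A secondary, purely technical check is that replacing the non-private log loss by the privatized log-likelihood $(2\sigma(\epsilon)-1)P^{(i)}_{\xpo}+(1-\sigma(\epsilon))$ neither disturbs realizability nor the boundedness of the per-step quantities used in the martingale arguments — both hold because $(2\sigma(\epsilon)-1)\in(0,1]$ keeps the private probability bounded away from $0$ and $1$ by a $c(\epsilon)$-controlled margin.
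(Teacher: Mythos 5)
Your proposal follows essentially the same route as the paper: instantiate Lemma~\ref{lem:uc-log} with $x^{(t)}=(\tau^{(t)},\widetilde\tau^{(t)})$ and the Bradley--Terry preference probabilities (with realizability via cancellation of the log-partition function), cancel the $c(\epsilon)^2$ rescaling against the optimism step, and then run the \xpo{} regret-decomposition/coverability argument with the inverse-sigmoid factor $(V_{\max}+R_{\max})e^{2R_{\max}}$ and the tuning $\gamma\asymp c(\epsilon)\sqrt{\log(|\Pi|T/\delta)/T}$. This matches the paper's meta-theorem-plus-$\err$-lemma structure, including the correct resolution of how $c(\epsilon)^2$ in the uniform-convergence bound becomes a single $c(\epsilon)$ in the final rate.
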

The above theorem gives the first sample complexity result for \emph{online} alignment with label privacy. When compared with the offline counterpart in Theorem~\ref{thm:pchipo}, we can see that, roughly, the online guarantee above replaces the concentrability-related term by the coverability-related term, which indicates the benefit of active online exploration.  As before, the additional $c(\epsilon)$ factor is the cost due to privacy protection, compared to the non-private \xpo in~\citet{xie2024exploratory}.

\begin{remark}[Log-linear policy classes]
The finite-class dependence in Theorem~\ref{thm:PXPO} can be replaced by standard covering-number complexity. 
For example, suppose $\Pi$ is a log-linear policy class parameterized by a $d$-dimensional vector with bounded norm and uniformly bounded features. 
Then, by replacing the finite-class union bound with a standard covering-number argument, the $\log|\Pi|$ term in Theorem~\ref{thm:PXPO} can be replaced, up to logarithmic factors, by an $\widetilde O(d)$ complexity term. 
Algorithm~\ref{alg:PXPO} satisfies, with probability at least $1-\delta$,
\[
J_\beta(\pi_\beta^\star)-J_\beta(\widehat \pi)
\lesssim
\kappa_{\rm cov}(\Pi)c(\epsilon)
\sqrt{\frac{\widetilde O(d)+\log(T/\delta)}{T}}.
\]

\end{remark}

\begin{remark}
    We remark that the above result holds for any regularization parameter $\beta >0$ as in the non-private case~\cite{xie2024exploratory}. However, for large $\beta >0$, one can leverage the additional property of local strong convexity of the regularized objective to derive improved bounds, see Appendix~\ref{app:app-dis} for detailed results and discussions. 
\end{remark}

\section{Private and Robust Alignment: Square Loss}
\label{sec:sq}
In this section, we turn to study alignment under both privacy protection and adversarial corruption by considering \ctl and \ltc settings. The only major difference compared to the last section is a square loss in place of the log loss. 

A natural question is \emph{why we do not include the private and robust alignment results for log-loss}. The key reason is that the log loss is unbounded, while the square loss is bounded, which enables us to control the effect of Huber corruption easily. That being said, whether one can establish some guarantees for log loss under Huber corruption is an interesting question, given its wide use in practice. This represents a fundamental technical challenge that we identify as an open direction.

\subsection{Offline Setting}
In the offline setting, we will show that the existing algorithm (i.e., \texttt{Square}\chipo in~\citet{zhou2025square}) actually enjoys an improved sample complexity bound, as an application of Lemma~\ref{lem:uc-sq}. 

We recap \texttt{Square}\chipo in Algorithm~\ref{alg:schiPO} using our notations. It essentially replaces the log loss in $\texttt{Priv}\chipo$ (and \chipo) with a square loss. Note that to match the specific form of square loss in Lemma~\ref{lem:uc-sq}, we do not rank the responses using $+/-$ here. As mentioned before, one key advantage of the square loss in Algorithm~\ref{alg:schiPO} is that it is bounded, which helps us to handle the corruption easily. 
\begin{remark}[Adaptivity]
    We remark that \texttt{Square}\chipo is adaptive to the actual setting (\ctl or \ltc) in that it does not require the knowledge of the specific setting in advance, as well as the corruption level $\alpha$.
\end{remark}

\begin{algorithm}[!t]
\caption{\schipo~\cite{zhou2025square}}
\label{alg:schiPO}
\begin{algorithmic}[1]
    \STATE \textbf{Input:} Locally private and corrupted preference dataset $\bar{\mathcal{D}}_{\pref}=\{\tau_{-1}^{(i)}, \tau_1^{(i)}, {z}^{(i)})\}_{i=1}^n$ under \ctl or \ltc, privacy parameter $\epsilon >0$, regularization coefficient $\beta >0$, reference policy $\pi_{\mathsf{ref}}$
    \STATE Define 
\begin{align}
     \!\!\!\!\!\!\!\phi(u)&:=u+\log u \nonumber\\
    \!\!\!\!\!\!\!\hat{h}_{\chipo}^{(i)}(\pi)&\!:=\! \beta \phi\left(\frac{\pi(\tau^{(i)}_1)}{\pi_{\mathsf{ref}}(\tau^{(i)}_1)}\right) \!-\! \beta\phi\left(\frac{\pi(\tau^{(i)}_{-1})}{\pi_{\mathsf{ref}}(\tau^{(i)}_{-1})}\right)\!\!\nonumber\\
     \!\!\!\!\!\!\!\hat{P}^{(i)}_{\chipo}(\pi) &:=\sigma\left(\operatorname{clip}_{2 R_{\mathsf{max} }}\left[\hat{h}_{\chipo}^{(i)}(\pi)\right]\right) \nonumber
\end{align}

\STATE Optimize the following \emph{square loss} objective:
\begin{align*}
    \hat{\pi} \leftarrow \argmin_{\pi \in \Pi} \sum_{i \in [n]}\left[ 2\hat{P}^{(i)}_{\chipo}(\pi) - 1  - c(\epsilon) z^{(i)}\right]^2
\end{align*}
where $\sigma(\epsilon):= \frac{e^{\epsilon}}{e^{\epsilon}+1}$
    \STATE \textbf{Output:} $\hat{\pi}$
\end{algorithmic}
\end{algorithm}

Next, as a direct application of Lemma~\ref{lem:uc-sq}, we will establish an improved bound for \texttt{Square}\chipo compared to the one in~\cite{zhou2025square}. In particular, under the same assumptions as before, we have the following sample complexity guarantee. See Appendix~\ref{proof:schiPO} for the proof.
\begin{theorem}
\label{thm:schiPO}
    For any given comparator policy $\pi^{\star}$, there exists a proper choice of $\beta >0$ such that when Assumptions~\ref{ass:realizability} and~\ref{ass:bound-rew} hold, with probability at least $1-\delta$, the output of Algorithm~\ref{alg:schiPO} satisfies the following suboptimality gaps for \ctl and \ltc:
    \begin{align*}
        J(\pi^{\star}) - J(\hat{\pi}_{\mathsf{CTL}}) &\lesssim \kappa(\pi^{\star}) \left(c(\epsilon) \sqrt{\frac{ \log(|\Pi|/\delta)}{n}} + \alpha\right)\\
         J(\pi^{\star}) - J(\hat{\pi}_{\mathsf{LTC}}) &\lesssim \kappa(\pi^{\star}) \left(c(\epsilon) \sqrt{\frac{ \log(|\Pi|/\delta)}{n}} + c(\epsilon)\alpha\right)
    \end{align*}
    where $c(\epsilon)$ and $\kappa(\pi^{\star})$ are the same as before. 
\end{theorem}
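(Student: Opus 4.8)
The plan is to cast Algorithm~\ref{alg:schiPO} as a least-squares procedure over the induced class $\cH := \{x \mapsto 2\hat P_{\chipo}(\pi)(x) - 1 : \pi \in \Pi\} \subseteq (\cX \to [-1,1])$, with covariate $x^{(i)} := (s^{(i)}, \tau_1^{(i)}, \tau_{-1}^{(i)})$ and clean label $y^{(i)} \in \{-1,1\}$ satisfying $\mathbb{E}[y^{(i)} \mid \cF^{(i-1)}, x^{(i)}] = 2\cP^\star(\tau_1^{(i)} \succ \tau_{-1}^{(i)} \mid s^{(i)}) - 1$. First I would verify realizability: by the closed form of the optimal policy of the mixed $\chi^2$-plus-KL objective (Assumption~\ref{ass:realizability}, cf.~\citep{huang2024correcting}), $\beta\phi(\pi_\beta^\star(\tau)/\piref(\tau)) = r^\star(\tau) + Z_\beta(s)$ for a state-dependent normalizer, so $\hat h_{\chipo}^{(i)}(\pi_\beta^\star) = r^\star(\tau_1^{(i)}) - r^\star(\tau_{-1}^{(i)}) \in [-R_{\max}, R_{\max}]$, which is untouched by $\mathrm{clip}_{2R_{\max}}$; by the Bradley--Terry model this gives $2\hat P_{\chipo}^{(i)}(\pi_\beta^\star) - 1 = \mathbb{E}[y^{(i)} \mid \cF^{(i-1)}, x^{(i)}]$, i.e.\ $h^\star := 2\hat P_{\chipo}(\pi_\beta^\star) - 1 \in \cH$ realizes the clean conditional mean.

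Next I would apply Lemma~\ref{lem:uc-sq}. Since $\hat\pi$ minimizes $\hat L_{\mathsf{sq}}^{(n)}(\cdot) = \sum_i (2\hat P_{\chipo}^{(i)}(\cdot) - 1 - c(\epsilon)z^{(i)})^2$ over $\Pi$, we have $\hat L_{\mathsf{sq}}^{(n)}(\hat\pi) - \hat L_{\mathsf{sq}}^{(n)}(\pi_\beta^\star) \le 0$, so the lemma yields, with probability $1-\delta$,
\[
\cE^{(n)}_{\mathsf{CTL}}(\hat\pi) \lesssim c(\epsilon)^2\log(|\Pi|/\delta) + n\alpha^2, \qquad \cE^{(n)}_{\mathsf{LTC}}(\hat\pi) \lesssim c(\epsilon)^2\log(|\Pi|/\delta) + n c(\epsilon)^2\alpha^2 .
\]
Because the offline data is i.i.d., $\cE^{(n)}(\hat\pi) = 4n\,\mathbb{E}_{s\sim\rho,\ \tau_1,\tau_{-1}\sim\piref(\cdot\mid s)}[(\hat P_{\chipo}(\hat\pi) - \cP^\star)^2]$; dividing by $n$ and taking square roots gives a bound of order $c(\epsilon)\sqrt{\log(|\Pi|/\delta)/n} + \alpha$ (resp.\ $+\, c(\epsilon)\alpha$) on the $L_2$ error of $\hat P_{\chipo}(\hat\pi)$ against $\cP^\star$ under the data distribution.

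It remains to convert this preference-probability error into a suboptimality gap, which is exactly the \chipo reduction used in the proof of Theorem~\ref{thm:pchipo}. Briefly: since both $\mathrm{clip}_{2R_{\max}}[\hat h_{\chipo}(\hat\pi)]$ and $r^\star(\tau_1) - r^\star(\tau_{-1})$ lie in $[-2R_{\max}, 2R_{\max}]$, where $\sigma' \gtrsim e^{-2R_{\max}}$, the mean-value theorem upgrades the $L_2$ error on $\sigma(\cdot)$ to an $L_2$ error on the implicit reward difference at the cost of a factor $e^{2R_{\max}}$; using that $\tau_1,\tau_{-1}$ are i.i.d.\ from $\piref(\cdot\mid s)$ turns this into control of the $L_2$ fluctuation of $\hat r - r^\star$ under $\piref$ (up to the free Bradley--Terry shift); and the off-policy decomposition of~\citep{huang2024correcting}, together with the single-policy $L_1$-concentrability $\cC^{\pi^\star}$ enabled by the $\chi^2$-regularization and a choice of $\beta$ balancing the regularization bias (where $V_{\max}/R_{\max}$ enters through the reward rescaling), yields $J(\pi^\star) - J(\hat\pi) \lesssim \kappa(\pi^\star)\cdot(\text{the }L_2\text{ bound above})$ with $\kappa(\pi^\star) = e^{2R_{\max}}(V_{\max}/R_{\max})\sqrt{\cC^{\pi^\star}}$.

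The main obstacle---and the only substantive novelty over Theorem~\ref{thm:pchipo} and the clean \chipo analysis---is controlling how corruption propagates: Lemma~\ref{lem:uc-sq} is precisely engineered so that corruption enters as an additive $n\alpha^2$ (not $n\alpha$) under \ctl and $n c(\epsilon)^2\alpha^2$ under \ltc, which after normalization becomes the additive $\alpha$ (resp.\ $c(\epsilon)\alpha$) in the final bound. Everything else is a black-box reuse of the \chipo argument: since $\|P - P'\|_{\mathsf{TV}} = \tfrac12|\mathbb{E}_P y - \mathbb{E}_{P'} y|$ for $\{-1,1\}$-valued Bernoullis, the square-loss guarantee of Lemma~\ref{lem:uc-sq} feeds the downstream reduction in exactly the same way the log-loss (TV) guarantee of Lemma~\ref{lem:uc-log} does in Theorem~\ref{thm:pchipo}. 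The one technical nuisance worth flagging is that $\mathrm{clip}_{2R_{\max}}$ acts on the reward \emph{difference} rather than on the individual implicit rewards; this is handled exactly as in~\citep{huang2024correcting}.
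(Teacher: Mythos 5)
Your proposal is correct and follows essentially the same route as the paper: realizability of $\pi_\beta^\star$ (via the $Z_\beta(s)$ cancellation and the fact that clipping is inactive on $[-R_{\max},R_{\max}]$) makes the empirical loss difference nonpositive, Lemma~\ref{lem:uc-sq} then bounds the squared preference-probability error by $c(\epsilon)^2\log(|\Pi|/\delta)+n\alpha^2$ (resp.\ $+\,nc(\epsilon)^2\alpha^2$), and the mean-value theorem plus the \chipo single-policy-concentrability meta-theorem converts this into the stated suboptimality gap. This matches the paper's argument step for step, including the identification that the only substantive change from the prior analysis is the improved additive corruption term delivered by Lemma~\ref{lem:uc-sq}.
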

The above theorem gives better bounds for both \ctl and \ltc, compared with~\citet{zhou2025square}. For \ctl, it improves from $\sqrt{\alpha}$ to $\alpha$ and for \ltc, it improves from $\sqrt{c(\epsilon)\alpha}$ to ${c(\epsilon)\alpha}$. An astute reader may already observe that this essentially comes from the improvement in Lemma~\ref{lem:uc-sq} over the previous result. 

\begin{remark}
    Up to $\kappa(\pi^{\star})$ factor, the above bounds are in fact tight by existing lower bounds for simpler problems in \ctl and \ltc, see~\citet{zhou2024locally}. The offline alignment problem is at least as hard as the offline multi-armed bandit (MAB) problem, since the former only obtains preference information while the latter can obtain complete reward information. Therefore, we can directly leverage the lower bounds for offline MAB under \ctl and \ltc settings from~\citet{zhou2024locally} (since preference learning involves two samples, the lower bound needs to be multiplied by a constant factor of 2). This demonstrates that the rate achieved by Theorem~\ref{thm:schiPO} is unimprovable in terms of the relevant parameters.
\end{remark}

\subsection{Online Setting}
\begin{algorithm}[!t]
\caption{\texttt{Square}\xpo}
\label{alg:SXPO}
\begin{algorithmic}[1]
    \STATE \textbf{Input:} Privacy parameter $\epsilon > 0$, number of iterations $T$, regularization coefficient $\beta >0$, optimism parameter $\gamma >0$
    \STATE Initialize $\pi^{(1)} \leftarrow \piref, \bar{\mathcal{D}}_{\mathsf{pref}}^{(0)} \leftarrow \varnothing$ 
    \FOR{iteration $t = 1,2,\ldots, T$}
         \STATE \textcolor{gray}{/* \textsf{User Side} */}
        \STATE Generate prompt/question: $s_1^{(t)} \sim \rho$, 
        \STATE Receive two responses: $\tau^{(t)} \sim \pi^{(t)} \mid s^{(t)}_1$, 
    and $\widetilde{\tau}^{(t)} \sim \pi_{\mathsf{ref}} \mid s^{(t)}_1$
         \STATE Generate true label: $y^{(t)} = 1$ w.p. $ \cP^{\star}(\tau^{(t)} \succ \widetilde{\tau}^{(t)} \mid s)$ as in~\eqref{eq:BT}; otherwise $y^{(t)} = -1$
         \STATE \emph{Corrupt label if under \ctl}
         \STATE Generate private label $\widetilde{y}^{(t)}$ via \texttt{RR} as in~\eqref{eq:RR} 
          \STATE \emph{Corrupt label if under \ltc}
         \STATE \textcolor{gray}{/* \textsf{Learner Side} */}
         \STATE Receive new data $(\tau^{(t)}, \widetilde{\tau}^{(t)}, z^{(t)})$ from the user 
         \STATE Update preference data: $\bar{\cD}_{\pref}^{(t)} = \bar{\cD}_{\pref}^{(t-1)} \cup \{(\tau^{(t)}, \widetilde{\tau}^{(t)}, z^{(t)}\}$
         \STATE Define
         \begin{align}
             \!\!\!\!\!\!\!\hat{h}_{\xpo}^{(i)}(\pi)&\!:=\! \beta \log\left(\frac{\pi(\tau^{(i)})}{\pi_{\mathsf{ref}}(\tau^{(i)})}\right) \!-\! \beta\log\left(\frac{\pi(\widetilde{\tau}^{(i)})}{\pi_{\mathsf{ref}}(\widetilde{\tau}^{(i)})}\right)\!\! \nonumber\\
             \!\!\!\!\!\!\!\hat{P}^{(i)}_{\xpo}(\pi) &:=\sigma\left(\hat{h}_{\xpo}^{(i)}\right) \nonumber\\
              \!\!\!\hat{L}^{(t)}_{\texttt{sq}\xpo}(\pi) &:=\!\!\sum_{i \in [t]} \left[ 2\hat{P}^{(i)}_{\xpo}(\pi) - 1  - c(\epsilon) z^{(i)}\right]^2 \nonumber
         \end{align}
         \STATE Optimize with global optimism: 
         \begin{align*}
             \pi^{(t+1)} \!\leftarrow \! \argmin_{\pi \in \Pi}\left\{\gamma \sum_{i \in [n]}\log \pi(\widetilde{\tau}^{(i)}) - \hat{L}^{(t)}_{\texttt{sq}\xpo}(\pi)\right\}
        \end{align*}

\ENDFOR

    \STATE \textbf{Output:} $\hat{\pi} = \argmax_{\pi \in \{\pi^{(1)}, \ldots, \pi^{(T+1)}\}} J_{\beta} (\pi)$
\end{algorithmic}
\end{algorithm}

In the online setting with both privacy and corruption, we again adopt a square loss but in \xpo to propose \texttt{Square}\xpo (Algorithm~\ref{alg:SXPO}). This new algorithm is the first one with theoretical guarantees that can handle both privacy and corruption in the \emph{online} setting.

\texttt{Square}\xpo shares the same flow as in \texttt{Priv}\xpo in the last section. We still divide it into two parts: the user side and the learner side. For each step $t \in [T]$, on the user side, the private and corrupted preference label is generated following \ctl or \ltc. On the learner's side, the optimization objective changes from a log loss to a square loss. Finally, the policy is updated with the global optimism for exploration.

We have the following sample complexity bound for \texttt{Square}\xpo, with proof in Appendix~\ref{proof:SXPO}.
\begin{theorem}
\label{thm:SXPO}
Suppose that Assumptions~\ref{ass:real-xpo} and~\ref{ass:bound-ratio} hold. For any $\beta > 0$ and $T \in \mathbb{N}$, there exists a proper choice of $
\gamma$ such that Algorithm~\ref{alg:SXPO} ensures that with probability at least $1-\delta$ under \ctl and \ltc,
\begin{align*}
    J_{\beta}(\pi_{\beta}^{\star}) - J_{\beta}(\hat{\pi}_{\mathsf{CTL}}) & \!\lesssim\! \kappa_{\mathsf{cov}}(\Pi) \!\!\left(c(\epsilon) \sqrt{ \frac{\log(|\Pi| T/\delta)}{T}} \!+\! \alpha \right)\\
    J_{\beta}(\pi_{\beta}^{\star}) - J_{\beta}(\hat{\pi}_{\mathsf{LTC}}) &\!\lesssim\! \kappa_{\mathsf{cov}}(\Pi) \!\!\left(c(\epsilon) \sqrt{ \frac{\log(|\Pi| T/\delta)}{T}}\!+\!  c(\epsilon) \alpha\right),
\end{align*}
where $\kappa_{\mathsf{cov}}(\Pi)$ is the same as before. 
\end{theorem}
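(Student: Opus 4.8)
The plan is to mirror the proof of Theorem~\ref{thm:PXPO} for \texttt{Priv}\xpo, but to substitute the square-loss uniform-convergence bound (Lemma~\ref{lem:uc-sq}) for the log-loss one (Lemma~\ref{lem:uc-log}) and then carry the resulting corruption bias through the \xpo regret analysis. First I would recast the preference-fitting step of \texttt{Square}\xpo as online square-loss regression: set $x^{(t)}:=(s_1^{(t)},\tau^{(t)},\widetilde{\tau}^{(t)})$ and $\cH:=\{x\mapsto 2\sigma(\beta\log(\pi(\tau)/\piref(\tau))-\beta\log(\pi(\widetilde{\tau})/\piref(\widetilde{\tau})))-1:\pi\in\Pi\}$, so that $\hat L^{(t)}_{\texttt{sq}\xpo}(\pi)$ is exactly the loss $\hat L^{(t)}_{\mathsf{sq}}(h)$ of Lemma~\ref{lem:uc-sq} for the associated $h\in\cH$. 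Every member of $\cH$ is $[-1,1]$-valued, and with the filtration $\cF^{(t-1)}$ generated by the first $t-1$ rounds (so $\pi^{(t)}$ is $\cF^{(t-1)}$-measurable and $x^{(t)}$ is revealed before the label), the Bradley--Terry identity $\mathbb{E}[y^{(t)}\mid\cF^{(t-1)},x^{(t)}]=2\cP^\star(\tau^{(t)}\succ\widetilde{\tau}^{(t)}\mid s_1^{(t)})-1=2\sigma(r^\star(\tau^{(t)})-r^\star(\widetilde{\tau}^{(t)}))-1$, together with the closed form $\beta\log(\pi_\beta^\star(\tau\mid s)/\piref(\tau\mid s))=r^\star(\tau)-\beta\log Z^\star(s)$ of the KL-regularized optimum, shows that this conditional mean coincides with the element of $\cH$ induced by $\pi_\beta^\star$, which lies in $\cH$ by Assumption~\ref{ass:real-xpo}; hence realizability holds with $h^\star\leftrightarrow\pi_\beta^\star$ and no clipping is needed. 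Applying Lemma~\ref{lem:uc-sq} then gives, on an event of probability at least $1-\delta$ and for every $t\le T$ and $\pi\in\Pi$, that the cumulative squared prediction error $\cE^{(t)}(\pi)=\sum_{i\le t}\mathbb{E}[(2\hat P^{(i)}_{\xpo}(\pi)-2\hat P^{(i)}_{\xpo}(\pi_\beta^\star))^2\mid\cF^{(i-1)}]$ is at most $\hat L^{(t)}_{\texttt{sq}\xpo}(\pi)-\hat L^{(t)}_{\texttt{sq}\xpo}(\pi_\beta^\star)+c(\epsilon)^2\log(|\Pi|/\delta)$, plus the additive corruption bias $t\alpha^2$ under \ctl and $t\,c(\epsilon)^2\alpha^2$ under \ltc.

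Next I would import the \xpo regret decomposition of~\cite{xie2024exploratory} essentially verbatim, since the only change in \texttt{Square}\xpo relative to \xpo is which estimation loss drives the update $\pi^{(t+1)}$. That decomposition bounds $\sum_{t\le T}(J_\beta(\pi_\beta^\star)-J_\beta(\pi^{(t)}))$ by an optimism price of order $\gamma T(V_{\max}+R_{\max})e^{\mathcal{O}(R_{\max})}$, arising from the $\gamma\sum_i\log\pi(\widetilde{\tau}^{(i)})$ regularizer, plus a coverability-weighted exploration term of order $\sqrt{C_{\mathsf{cov}}(\Pi)\,T\cdot\mathsf{Est}_T}$, where $\mathsf{Est}_T$ is the cumulative in-sample estimation error of the implicit-reward predictor. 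Converting $\cE^{(t)}$ into $\mathsf{Est}_T$ costs only $e^{\mathcal{O}(R_{\max})}$ factors, because on arguments bounded by $\mathcal{O}(R_{\max})$ the map $a\mapsto 2\sigma(a)-1$ is bi-Lipschitz with constants $e^{\pm\mathcal{O}(R_{\max})}$ (exactly as in~\cite{zhou2025square} and the proof of Theorem~\ref{thm:schiPO}), and by optimality of $\pi^{(t+1)}$ for its defining objective together with Assumption~\ref{ass:bound-ratio} (which forces $|\log\pi(\widetilde{\tau})-\log\pi_\beta^\star(\widetilde{\tau})|\le 2V_{\max}/\beta$) the in-sample square-loss regret obeys $\hat L^{(T)}_{\texttt{sq}\xpo}(\pi^{(T+1)})-\hat L^{(T)}_{\texttt{sq}\xpo}(\pi_\beta^\star)\lesssim\gamma T V_{\max}/\beta$. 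Combining, $\mathsf{Est}_T\lesssim e^{\mathcal{O}(R_{\max})}(\gamma T V_{\max}/\beta+c(\epsilon)^2\log(|\Pi|/\delta)+T\alpha^2)$ for \ctl, with $T\alpha^2$ replaced by $T\,c(\epsilon)^2\alpha^2$ for \ltc.

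It then remains to substitute $\mathsf{Est}_T$ into the regret bound, tune $\gamma$ as in~\cite{xie2024exploratory} to balance the optimism price against the part of the exploration term scaling as $\sqrt{\gamma}$, and divide by $T$, using that $\hat\pi$ is the best iterate so $J_\beta(\pi_\beta^\star)-J_\beta(\hat\pi)\le\frac1T\sum_{t\le T}(J_\beta(\pi_\beta^\star)-J_\beta(\pi^{(t)}))$ up to lower-order terms. The statistical contribution $\sqrt{C_{\mathsf{cov}}(\Pi)\,c(\epsilon)^2\log(|\Pi|T/\delta)}/\sqrt{T}$ becomes $c(\epsilon)\sqrt{\log(|\Pi|T/\delta)/T}$ (the extra $\log T$ inside the logarithm being picked up from the standard \xpo union/concentration steps), while the bias contribution $\sqrt{C_{\mathsf{cov}}(\Pi)\cdot T\cdot T\alpha^2}/T$ collapses to exactly $\alpha$ under \ctl (and, via $T\,c(\epsilon)^2\alpha^2$, to $c(\epsilon)\alpha$ under \ltc); gathering the prefactor $(V_{\max}+R_{\max})e^{2R_{\max}}\log T\sqrt{C_{\mathsf{cov}}(\Pi)}=\kappa_{\mathsf{cov}}(\Pi)$ yields the two claimed bounds.

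I expect the main obstacle to be the bookkeeping around the linear-in-$T$ corruption bias. Unlike the log-loss analysis of Theorem~\ref{thm:PXPO}, where Lemma~\ref{lem:uc-log} contributes no per-round additive term, here the $t\alpha^2$ (resp.\ $t\,c(\epsilon)^2\alpha^2$) term sums to $\Theta(T^2\alpha^2)$, and one must verify that it enters $\mathsf{Est}_T$ \emph{additively} --- not multiplied into the optimism price or the coverability factor --- so that the concluding Jensen/Cauchy--Schwarz step $\frac1T\sum_t\sqrt{e_t}\le\sqrt{\frac1T\sum_t e_t}$ lands on $\alpha$ rather than on $\sqrt{T}\,\alpha$, and that the choice of $\gamma$ calibrated for the non-corrupt terms does not interact badly with the bias term. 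The remaining pieces --- the bi-Lipschitz reward-to-mean conversion, the coverability potential argument, and the optimism-price bound --- are exactly as in~\cite{xie2024exploratory,zhou2025square} and Theorem~\ref{thm:schiPO}, and introduce no new difficulty.
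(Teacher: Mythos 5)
Your proposal follows the same route as the paper's proof: instantiate Lemma~\ref{lem:uc-sq} with the policy-indexed class $h_\pi = 2\sigma(\hat{h}_{\xpo}(\pi))-1$, verify realizability at $\pi_\beta^\star$ via the Bradley--Terry model and the closed form of the KL-regularized optimum, convert between the probability scale and the implicit-reward scale with the $e^{\mathcal{O}(R_{\max})}$ bi-Lipschitz factor, feed the resulting $\err$ (with the additive $t\alpha^2$ or $t\,c(\epsilon)^2\alpha^2$ bias) into the \xpo coverability decomposition (the paper's Theorem~\ref{thm:meta_xpo}), and tune $\gamma$; your observation that the bias enters additively and collapses to $\alpha$ (resp.\ $c(\epsilon)\alpha$) after the Cauchy--Schwarz step is exactly how the paper's Lemma~\ref{lem:xpo_err_square} and the final tuning work.

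One step deserves correction, though it does not change the overall architecture. You bound the in-sample square-loss regret of the iterates by the worst-case magnitude of the optimism bonus, $\hat{L}^{(T)}_{\texttt{sq}\xpo}(\pi^{(T+1)})-\hat{L}^{(T)}_{\texttt{sq}\xpo}(\pi_\beta^\star)\lesssim \gamma T V_{\max}/\beta$, and then inject this linearly-in-$T$, linearly-in-$\gamma$ quantity into $\mathsf{Est}_T$. Taken literally this breaks the rate: the exploration term then contains $\sqrt{C_{\mathsf{cov}}(\Pi)\,T\cdot \gamma T V_{\max}/\beta}/T=\sqrt{C_{\mathsf{cov}}(\Pi)\gamma V_{\max}/\beta}$, and no choice of $\gamma$ can balance a term growing like $\sqrt{\gamma}$ against an optimism price decaying like $1/\gamma$ while still yielding $T^{-1/2}$. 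The paper (following \cite{xie2024exploratory}, and as encoded in Lemmas~\ref{lem:c.6} and~\ref{lem:c.7}) avoids this entirely: by the argmin property the empirical differences $\hat{B}^{(t)}(\pi^{(t+1)})-\hat{B}^{(t)}(\pi_\beta^\star)$ and $\hat{L}^{(t)}(\pi^{(t+1)})-\hat{L}^{(t)}(\pi_\beta^\star)$ cancel against each other when the two concentration lemmas are added, so the only surviving optimism-related cost is the martingale fluctuation $\tfrac{\gamma}{\beta}V_{\max}\sqrt{(t-1)\log(|\Pi|\delta^{-1})}$ of $\hat{B}^{(t)}$, which is $\mathcal{O}(\sqrt{t})$ rather than $\mathcal{O}(\gamma t)$. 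With that substitution your argument matches the paper's proof and delivers the stated bounds.
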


\begin{remark}
The martingale-based proof of the online guarantee continues to hold under a history-dependent extension in which, at round $t$, the bad distribution may depend on the past filtration $\cF^{(t-1)}$. Thus, the stated online result extends beyond oblivious corruptions to this form of history-dependent corruption.
\end{remark}

This theorem shows that similar sample complexity difference between \ctl and \ltc maintains in the online setting by our \texttt{Square}\xpo. This separation is in fact optimal (up to $\kappa_{\mathsf{cov}}(\Pi)$) by reusing existing lower bound for simpler online MAB problem~\cite{zhou2024locally}. Once again, this near-optimal result is obtained via our Lemma~\ref{lem:uc-sq}.

\begin{remark}
    There is a subtle difference in the sub-optimality/regret analysis for online alignment: the lower bound in~\citet{zhou2024locally} is for the unregularized objective, while Theorem~\ref{thm:SXPO} addresses the regularized objective. For small $\beta$ regime, the difference between the two objectives is very small, and the result of Theorem~\ref{thm:SXPO} is constrained by the lower bound for online MAB, which cannot be further improved. For large $\beta$ regime, faster rates can be achieved by exploiting the local strong convexity of the KL divergence (see Proposition~\ref{prop:G.1}), and we believe this result also achieves optimality.

\end{remark}

\section{Conclusion}

We presented a theoretical analysis for private and robust alignment of language models, analyzing both the offline and online settings. Our results resolve several important questions in the literature: we showed that the standard log-loss MLE algorithm can indeed achieve near-optimal rates under privacy constraints, we improved guarantees in both the corruption-only and corruption–privacy settings by leveraging uniform convergence under square loss, and we extended these results to the online regime with active exploration. 

Looking forward, several promising directions remain. 
First, an important next step is to extend our empirical evaluation to broader model scales and additional privacy-and-corruption regimes. Second, our results are established under the classical Huber corruption model; developing theoretical guarantees under stronger \emph{adaptive} corruption models remains an open challenge for general function approximations~\cite{zhou2025unified}. Finally, our algorithms could be extended to hybrid settings that combine offline datasets with online preference feedback to enjoy additional benefits.

\section*{Acknowledgement}
XZ would like to thank Francesco Orabona on discussion about the MLE loss. This work is supported by NSF
grants under CAREER-2441519, CNS-2312835, and CNS-2153220.

\section*{Impact Statement}
This paper presents work whose goal is to advance the field of 
theoretical Reinforcement Learning from human feedback. There are many potential societal consequences of our work, none of which we feel must be specifically highlighted here.

\bibliography{references}

@article{azize2022privacy,
  title={When privacy meets partial information: A refined analysis of differentially private bandits},
  author={Azize, Achraf and Basu, Debabrota},
  journal={Advances in Neural Information Processing Systems},
  volume={35},
  pages={32199--32210},
  year={2022}
}

@article{zhao2025logarithmic,
  title={Logarithmic regret for online kl-regularized reinforcement learning},
  author={Zhao, Heyang and Ye, Chenlu and Xiong, Wei and Gu, Quanquan and Zhang, Tong},
  journal={arXiv preprint arXiv:2502.07460},
  year={2025}
}

@article{wu2023private,
  title={On private and robust bandits},
  author={Wu, Yulian and Zhou, Xingyu and Tao, Youming and Wang, Di},
  journal={Advances in Neural Information Processing Systems},
  volume={36},
  pages={34778--34790},
  year={2023}
}

@article{ren2020multi,
  title={Multi-armed bandits with local differential privacy},
  author={Ren, Wenbo and Zhou, Xingyu and Liu, Jia and Shroff, Ness B},
  journal={arXiv preprint arXiv:2007.03121},
  year={2020}
}

@inproceedings{zanette2021cautiously,
  title={Cautiously optimistic policy optimization and exploration with linear function approximation},
  author={Zanette, Andrea and Cheng, Ching-An and Agarwal, Alekh},
  booktitle={Conference on Learning Theory},
  pages={4473--4525},
  year={2021},
  organization={PMLR}
}

@article{luyo2021differentially,
  title={Differentially private exploration in reinforcement learning with linear representation},
  author={Luyo, Paul and Garcelon, Evrard and Lazaric, Alessandro and Pirotta, Matteo},
  journal={arXiv preprint arXiv:2112.01585},
  year={2021}
}

@article{zhou2023differentially,
  title={On differentially private federated linear contextual bandits},
  author={Zhou, Xingyu and Chowdhury, Sayak Ray},
  journal={arXiv preprint arXiv:2302.13945},
  year={2023}
}

@inproceedings{he2022reduction,
  title={A reduction from linear contextual bandits lower bounds to estimations lower bounds},
  author={He, Jiahao and Zhang, Jiheng and Zhang, Rachel Q},
  booktitle={International Conference on Machine Learning},
  pages={8660--8677},
  year={2022},
  organization={PMLR}
}

@article{chowdhury2022shuffle,
  title={Shuffle private linear contextual bandits},
  author={Chowdhury, Sayak Ray and Zhou, Xingyu},
  journal={arXiv preprint arXiv:2202.05567},
  year={2022}
}

@article{chowdhury2022distributed,
  title={Distributed differential privacy in multi-armed bandits},
  author={Chowdhury, Sayak Ray and Zhou, Xingyu},
  journal={arXiv preprint arXiv:2206.05772},
  year={2022}
}

@inproceedings{mishra2015nearly,
  title={(Nearly) optimal differentially private stochastic multi-arm bandits},
  author={Mishra, Nikita and Thakurta, Abhradeep},
  booktitle={Proceedings of the Thirty-First Conference on Uncertainty in Artificial Intelligence},
  pages={592--601},
  year={2015}
}

@inproceedings{sajed2019optimal,
  title={An optimal private stochastic-mab algorithm based on optimal private stopping rule},
  author={Sajed, Touqir and Sheffet, Or},
  booktitle={International Conference on Machine Learning},
  pages={5579--5588},
  year={2019},
  organization={PMLR}
}

@inproceedings{qiao2023near,
  title={Near-optimal differentially private reinforcement learning},
  author={Qiao, Dan and Wang, Yu-Xiang},
  booktitle={International Conference on Artificial Intelligence and Statistics},
  pages={9914--9940},
  year={2023},
  organization={PMLR}
}

@article{rafailov2023direct,
  title={Direct preference optimization: Your language model is secretly a reward model},
  author={Rafailov, Rafael and Sharma, Archit and Mitchell, Eric and Manning, Christopher D and Ermon, Stefano and Finn, Chelsea},
  journal={Advances in Neural Information Processing Systems},
  volume={36},
  pages={53728--53741},
  year={2023}
}

@article{shariff2018differentially,
  title={Differentially private contextual linear bandits},
  author={Shariff, Roshan and Sheffet, Or},
  journal={Advances in Neural Information Processing Systems},
  volume={31},
  year={2018}
}

@article{zhou2022differentially,
  title={Differentially private reinforcement learning with linear function approximation},
  author={Zhou, Xingyu},
  journal={Proceedings of the ACM on Measurement and Analysis of Computing Systems},
  volume={6},
  number={1},
  pages={1--27},
  year={2022},
  publisher={ACM New York, NY, USA}
}

@misc{chen2025near,
      title={Near-Optimal Private Learning in Linear Contextual Bandits}, 
      author={Fan Chen and Jiachun Li and Alexander Rakhlin and David Simchi-Levi},
      year={2025},
      eprint={2502.13115},
      archivePrefix={arXiv},
      primaryClass={cs.LG},
      url={https://arxiv.org/abs/2502.13115}, 
}

@inproceedings{chowdhury2022differentially,
  title={Differentially private regret minimization in episodic markov decision processes},
  author={Chowdhury, Sayak Ray and Zhou, Xingyu},
  booktitle={Proceedings of the AAAI Conference on Artificial Intelligence},
  volume={36},
  pages={6375--6383},
  year={2022}
}

@inproceedings{vietri2020private,
  title={Private reinforcement learning with pac and regret guarantees},
  author={Vietri, Giuseppe and Balle, Borja and Krishnamurthy, Akshay and Wu, Steven},
  booktitle={International Conference on Machine Learning},
  pages={9754--9764},
  year={2020},
  organization={PMLR}
}

@inproceedings{
zhou2024locally,
title={Locally Private and Robust Multi-Armed Bandits},
author={Xingyu Zhou and Wei Zhang},
booktitle={The Thirty-eighth Annual Conference on Neural Information Processing Systems},
year={2024},
url={https://openreview.net/forum?id=BOhnXyIPWW}
}

@misc{he2025samplecomplexitydifferentiallyprivate,
      title={On the Sample Complexity of Differentially Private Policy Optimization}, 
      author={Yi He and Xingyu Zhou},
      year={2025},
      eprint={2510.21060},
      archivePrefix={arXiv},
      primaryClass={cs.LG},
      url={https://arxiv.org/abs/2510.21060}, 
}

@article{zhao2025towards,
  title={Towards a Sharp Analysis of Offline Policy Learning for $ f $-Divergence-Regularized Contextual Bandits},
  author={Zhao, Qingyue and Ji, Kaixuan and Zhao, Heyang and Zhang, Tong and Gu, Quanquan},
  journal={arXiv preprint arXiv:2502.06051},
  year={2025}
}

@article{zhao2024sharp,
  title={Sharp analysis for kl-regularized contextual bandits and rlhf},
  author={Zhao, Heyang and Ye, Chenlu and Gu, Quanquan and Zhang, Tong},
  journal={arXiv preprint arXiv:2411.04625},
  year={2024}
}

@article{song2024understanding,
  title={Understanding preference fine-tuning through the lens of coverage},
  author={Song, Yuda and Swamy, Gokul and Singh, Aarti and Bagnell, J Andrew and Sun, Wen},
  journal={arXiv e-prints},
  pages={arXiv--2406},
  year={2024}
}

@article{bose2024hybrid,
  title={Hybrid preference optimization for alignment: Provably faster convergence rates by combining offline preferences with online exploration},
  author={Bose, Avinandan and Xiong, Zhihan and Saha, Aadirupa and Du, Simon Shaolei and Fazel, Maryam},
  journal={arXiv preprint arXiv:2412.10616},
  year={2024}
}

@inproceedings{chen2022human,
  title={Human-in-the-loop: Provably efficient preference-based reinforcement learning with general function approximation},
  author={Chen, Xiaoyu and Zhong, Han and Yang, Zhuoran and Wang, Zhaoran and Wang, Liwei},
  booktitle={International Conference on Machine Learning},
  pages={3773--3793},
  year={2022},
  organization={PMLR}
}

@article{xu2020preference,
  title={Preference-based reinforcement learning with finite-time guarantees},
  author={Xu, Yichong and Wang, Ruosong and Yang, Lin and Singh, Aarti and Dubrawski, Artur},
  journal={Advances in Neural Information Processing Systems},
  volume={33},
  pages={18784--18794},
  year={2020}
}

@article{zhan2023provableonline,
  title={Provable reward-agnostic preference-based reinforcement learning},
  author={Zhan, Wenhao and Uehara, Masatoshi and Sun, Wen and Lee, Jason D},
  journal={arXiv preprint arXiv:2305.18505},
  year={2023}
}

@article{cen2024value,
  title={Value-incentivized preference optimization: A unified approach to online and offline rlhf},
  author={Cen, Shicong and Mei, Jincheng and Goshvadi, Katayoon and Dai, Hanjun and Yang, Tong and Yang, Sherry and Schuurmans, Dale and Chi, Yuejie and Dai, Bo},
  journal={arXiv preprint arXiv:2405.19320},
  year={2024}
}

@article{liu2024provably,
  title={Provably mitigating overoptimization in rlhf: Your sft loss is implicitly an adversarial regularizer},
  author={Liu, Zhihan and Lu, Miao and Zhang, Shenao and Liu, Boyi and Guo, Hongyi and Yang, Yingxiang and Blanchet, Jose and Wang, Zhaoran},
  journal={Advances in Neural Information Processing Systems},
  volume={37},
  pages={138663--138697},
  year={2024}
}

@article{ye2024theoretical,
  title={A theoretical analysis of nash learning from human feedback under general kl-regularized preference},
  author={Ye, Chenlu and Xiong, Wei and Zhang, Yuheng and Jiang, Nan and Zhang, Tong},
  journal={CoRR},
  year={2024}
}

@article{zhan2023provable,
  title={Provable offline preference-based reinforcement learning},
  author={Zhan, Wenhao and Uehara, Masatoshi and Kallus, Nathan and Lee, Jason D and Sun, Wen},
  journal={arXiv preprint arXiv:2305.14816},
  year={2023}
}

@article{xiong2023iterative,
  title={Iterative preference learning from human feedback: Bridging theory and practice for rlhf under kl-constraint},
  author={Xiong, Wei and Dong, Hanze and Ye, Chenlu and Wang, Ziqi and Zhong, Han and Ji, Heng and Jiang, Nan and Zhang, Tong},
  journal={arXiv preprint arXiv:2312.11456},
  year={2023}
}

@inproceedings{zhu2023principled,
  title={Principled reinforcement learning with human feedback from pairwise or k-wise comparisons},
  author={Zhu, Banghua and Jordan, Michael and Jiao, Jiantao},
  booktitle={International Conference on Machine Learning},
  pages={43037--43067},
  year={2023},
  organization={PMLR}
}

@article{chowdhury2024provably,
  title={Provably robust dpo: Aligning language models with noisy feedback},
  author={Chowdhury, Sayak Ray and Kini, Anush and Natarajan, Nagarajan},
  journal={arXiv preprint arXiv:2403.00409},
  year={2024}
}

@article{feng2024exposing,
  title={Exposing privacy gaps: Membership inference attack on preference data for LLM alignment},
  author={Feng, Qizhang and Kasa, Siva Rajesh and Kasa, Santhosh Kumar and Yun, Hyokun and Teo, Choon Hui and Bodapati, Sravan Babu},
  journal={arXiv preprint arXiv:2407.06443},
  year={2024}
}

@article{casper2023open,
  title={Open problems and fundamental limitations of reinforcement learning from human feedback},
  author={Casper, Stephen and Davies, Xander and Shi, Claudia and Gilbert, Thomas Krendl and Scheurer, J{\'e}r{\'e}my and Rando, Javier and Freedman, Rachel and Korbak, Tomasz and Lindner, David and Freire, Pedro and others},
  journal={arXiv preprint arXiv:2307.15217},
  year={2023}
}

@article{lambert2023history,
  title={The history and risks of reinforcement learning and human feedback},
  author={Lambert, Nathan and Gilbert, Thomas Krendl and Zick, Tom},
  journal={arXiv preprint arXiv:2310.13595},
  year={2023}
}

@article{bai2022training,
  title={Training a helpful and harmless assistant with reinforcement learning from human feedback},
  author={Bai, Yuntao and Jones, Andy and Ndousse, Kamal and Askell, Amanda and Chen, Anna and DasSarma, Nova and Drain, Dawn and Fort, Stanislav and Ganguli, Deep and Henighan, Tom and others},
  journal={arXiv preprint arXiv:2204.05862},
  year={2022}
}

@article{xie2022role,
  title={The role of coverage in online reinforcement learning},
  author={Xie, Tengyang and Foster, Dylan J and Bai, Yu and Jiang, Nan and Kakade, Sham M},
  journal={arXiv preprint arXiv:2210.04157},
  year={2022}
}

@article{agarwal2019reinforcement,
  title={Reinforcement learning: Theory and algorithms},
  author={Agarwal, Alekh and Jiang, Nan and Kakade, Sham M and Sun, Wen},
  journal={CS Dept., UW Seattle, Seattle, WA, USA, Tech. Rep},
  volume={32},
  pages={96},
  year={2019}
}

@article{foster2023foundations,
  title={Foundations of reinforcement learning and interactive decision making},
  author={Foster, Dylan J and Rakhlin, Alexander},
  journal={arXiv preprint arXiv:2312.16730},
  year={2023}
}

@article{foster2024behavior,
  title={Is behavior cloning all you need? understanding horizon in imitation learning},
  author={Foster, Dylan J and Block, Adam and Misra, Dipendra},
  journal={Advances in Neural Information Processing Systems},
  volume={37},
  pages={120602--120666},
  year={2024}
}

@article{zhou2025unified,
  title={A unified theoretical analysis of private and robust offline alignment: from rlhf to dpo},
  author={Zhou, Xingyu and Wu, Yulian and Orabona, Francesco},
  journal={arXiv preprint arXiv:2505.15694},
  year={2025}
}

@inproceedings{chowdhury2024differentially,
  title={Differentially private reward estimation with preference feedback},
  author={Chowdhury, Sayak Ray and Zhou, Xingyu and Natarajan, Nagarajan},
  booktitle={International Conference on Artificial Intelligence and Statistics},
  pages={4843--4851},
  year={2024},
  organization={PMLR}
}

@misc{rosset2024directnashoptimizationteaching,
      title={Direct Nash Optimization: Teaching Language Models to Self-Improve with General Preferences}, 
      author={Corby Rosset and Ching-An Cheng and Arindam Mitra and Michael Santacroce and Ahmed Awadallah and Tengyang Xie},
      year={2024},
      eprint={2404.03715},
      archivePrefix={arXiv},
      primaryClass={cs.LG},
      url={https://arxiv.org/abs/2404.03715}, 
}

@article{zhou2025square,
  title   = {{Square$\chi$PO: Differentially Private and Robust $\chi^2$-Preference Optimization in Offline Direct Alignment}},
  author={Zhou, Xingyu and Wu, Yulian and Weng, Wenqian and Orabona, Francesco},
  journal={International Conference on Machine Learning (ICML)},
  year={2025}
}

@misc{cheu2019manipulationattackslocaldifferential,
      title={Manipulation Attacks in Local Differential Privacy}, 
      author={Albert Cheu and Adam Smith and Jonathan Ullman},
      year={2019},
      eprint={1909.09630},
      archivePrefix={arXiv},
      primaryClass={cs.DS},
      url={https://arxiv.org/abs/1909.09630}, 
}

@article{chen2025outcome,
  title={Outcome-Based Online Reinforcement Learning: Algorithms and Fundamental Limits},
  author={Chen, Fan and Jia, Zeyu and Rakhlin, Alexander and Xie, Tengyang},
  journal={arXiv preprint arXiv:2505.20268},
  year={2025}
}

@article{agarwal2020flambe,
  title={Flambe: Structural complexity and representation learning of low rank mdps},
  author={Agarwal, Alekh and Kakade, Sham and Krishnamurthy, Akshay and Sun, Wen},
  journal={Advances in neural information processing systems},
  volume={33},
  pages={20095--20107},
  year={2020}
}

@book{geer2000empirical,
  title={Empirical Processes in M-estimation},
  author={Van de Geer, Sara},
  volume={6},
  year={2000},
  publisher={Cambridge university press}
}

@article{huber1964robust,
  title={Robust Estimation of a Location Parameter},
  author={Huber, Peter J},
  journal={The Annals of Mathematical Statistics},
  pages={73--101},
  year={1964},
  publisher={JSTOR}
}

@article{warner1965randomized,
  title={Randomized response: A survey technique for eliminating evasive answer bias},
  author={Warner, Stanley L},
  journal={Journal of the American statistical association},
  volume={60},
  number={309},
  pages={63--69},
  year={1965},
  publisher={Taylor \& Francis}
}

@inproceedings{duchi2013local,
  title={Local privacy and statistical minimax rates},
  author={Duchi, John C and Jordan, Michael I and Wainwright, Martin J},
  booktitle={2013 IEEE 54th annual symposium on foundations of computer science},
  pages={429--438},
  year={2013},
  organization={IEEE}
}

@article{kasiviswanathan2011can,
  title={What can we learn privately?},
  author={Kasiviswanathan, Shiva Prasad and Lee, Homin K and Nissim, Kobbi and Raskhodnikova, Sofya and Smith, Adam},
  journal={SIAM Journal on Computing},
  volume={40},
  number={3},
  pages={793--826},
  year={2011},
  publisher={SIAM}
}

@article{xie2024exploratory,
  title={Exploratory preference optimization: Harnessing implicit q*-approximation for sample-efficient rlhf},
  author={Xie, Tengyang and Foster, Dylan J and Krishnamurthy, Akshay and Rosset, Corby and Awadallah, Ahmed and Rakhlin, Alexander},
  journal={arXiv preprint arXiv:2405.21046},
  year={2024}
}

@article{huang2024correcting,
  title={Correcting the mythos of kl-regularization: Direct alignment without overoptimization via chi-squared preference optimization},
  author={Huang, Audrey and Zhan, Wenhao and Xie, Tengyang and Lee, Jason D and Sun, Wen and Krishnamurthy, Akshay and Foster, Dylan J},
  journal={arXiv preprint arXiv:2407.13399},
  year={2024}
}

@article{guo2024direct,
  title={Direct language model alignment from online ai feedback},
  author={Guo, Shangmin and Zhang, Biao and Liu, Tianlin and Liu, Tianqi and Khalman, Misha and Llinares, Felipe and Rame, Alexandre and Mesnard, Thomas and Zhao, Yao and Piot, Bilal and others},
  journal={arXiv preprint arXiv:2402.04792},
  year={2024}
}

@article{bradley1952rank,
  title={Rank analysis of incomplete block designs: I. the method of paired comparisons},
  author={Bradley, Ralph Allan and Terry, Milton E},
  journal={Biometrika},
  volume={39},
  number={3/4},
  pages={324--345},
  year={1952},
  publisher={JSTOR}
}

@article{ouyang2022training,
  title={Training language models to follow instructions with human feedback},
  author={Ouyang, Long and Wu, Jeffrey and Jiang, Xu and Almeida, Diogo and Wainwright, Carroll and Mishkin, Pamela and Zhang, Chong and Agarwal, Sandhini and Slama, Katarina and Ray, Alex and others},
  journal={Advances in neural information processing systems},
  volume={35},
  pages={27730--27744},
  year={2022}
}

@misc{vonWerra2020trl,
  author       = {von Werra, Leandro and
                  Belkada, Younes and
                  Tunstall, Lewis and
                  Beeching, Edward and
                  Thrush, Tristan and
                  Lambert, Nathan and
                  Huang, Shengyi and
                  Rasul, Kashif and
                  Gallou{\'e}dec, Quentin},
  title        = {TRL: Transformer Reinforcement Learning},
  year         = {2020},
  howpublished = {\url{https://github.com/huggingface/trl}},
  note         = {Hugging Face open-source library}
}

@misc{wu2025offlineonlinekl,
      title={Offline and Online KL-Regularized RLHF under Differential Privacy}, 
      author={Yulian Wu and Rushil Thareja and Praneeth Vepakomma and Francesco Orabona},
      year={2025},
      eprint={2510.13512},
      archivePrefix={arXiv},
      primaryClass={cs.LG},
      url={https://arxiv.org/abs/2510.13512}, 
}

@article{ji2024pkusaferlhf,
      title={PKU-SafeRLHF: Towards Multi-Level Safety Alignment for LLMs with Human Preference},
      author={Ji, Jiaming and Hong, Donghai and Zhang, Borong and Chen, Boyuan and Dai, Josef and Zheng, Boren and Qiu, Tianyi and Li, Boxun and Yang, Yaodong},
      journal={arXiv preprint arXiv:2406.15513},
      year={2024}
}
\bibliographystyle{icml2026}

\newpage

\appendix
\onecolumn

\section{Detailed Related Work}

In this section, we review several different topics related to our paper. 

\textbf{Differentially private bandits and RL.}
The theoretical foundations of private alignment are closely connected to a broader literature on reinforcement learning and bandits under differential privacy constraints.
Early work primarily focuses on multi-armed bandits (MABs), establishing near-optimal regret bounds under local or central differential privacy
\cite{mishra2015nearly,sajed2019optimal,azize2022privacy, ren2020multi,wu2023private,chowdhury2022distributed}.
These results were subsequently extended to contextual bandits
\cite{shariff2018differentially,he2022reduction,chowdhury2022shuffle,zhou2023differentially,chen2025near},
where perturbing rewards, gradients, or sufficient statistics typically enforce privacy. Beyond bandits, a growing body of work studies differentially private reinforcement learning in episodic MDPs
\cite{vietri2020private,luyo2021differentially,chowdhury2022differentially,qiao2023near}.
Among these, only a small number of papers explicitly address \emph{policy optimization} under privacy constraints.
In particular, \citet{chowdhury2022differentially} and \citet{zhou2022differentially} analyze private policy optimization in exploratory settings, characterizing the cost of privacy in regret by privatizing optimistic variants of PPO or natural policy gradient
under tabular and linear function approximation, respectively.
More recently, ~\citet{he2025samplecomplexitydifferentiallyprivate} establishes the first comprehensive set of sample complexity bounds for differentially private policy optimization. 


\textbf{Standard offline and online alignment.} Theoretical analysis of alignment algorithms has become an active research area. One line of research focuses on the purely offline setting where the preference dataset is pre-collected, e.g., ~\cite{zhu2023principled,xiong2023iterative,zhan2023provable,ye2024theoretical,liu2024provably,cen2024value,huang2024correcting}. Among them, \chipo in~\citet{huang2024correcting} will serve as our basis for the offline alignment due to its state-of-the-art result with a simple implementation, which is a simple one-line change to the \texttt{DPO} algorithm in~\citet{rafailov2023direct}. One key limitation of offline alignment is that it requires the offline dataset to have a good coverage with respect to the optimal policy (or any other comparator policy). This motivates the study of online alignment, where the algorithm has access to online preference feedback, which often requires active exploration to achieve the optimal rates, e.g.,~\cite{xu2020preference,zhan2023provableonline,ye2024theoretical,chen2022human,cen2024value,xie2024exploratory}. Among them,~\xpo in~\citet{xie2024exploratory} will be our basis for the online setting due to its good result achieved again via a one-line change to the \texttt{DPO} algorithm in~\citet{rafailov2023direct}. We also mention that some works consider the \emph{hybrid} setting that has both an offline dataset and online feedback (e.g.,~\citet{bose2024hybrid,song2024understanding}), which may have further advantages.

\textbf{Private and robust alignment.} The current theoretical results in this area mainly focus on the offline setting. With a linear model assumption,~\citet{chowdhury2024differentially} studies the reward model learning with private randomized labels~\cite{warner1965randomized}. After showing that the standard MLE-type log loss fails to achieve the optimal rate, it turns to propose a new de-biased log loss, which has been used in many follow-up works, e.g.,~\cite{chowdhury2024provably,zhou2025unified}. An important question here is whether the failure of the standard MLE-type log loss in~\citet{chowdhury2024differentially} is due to their specific analysis or to the intrinsic limitation of such a loss. Our paper shows that it is the former by giving near-optimal results for private alignment using standard MLE-type log loss. When it comes to adversary corruption of labels (e.g., Huber corruption~\cite{huber1964robust}) and the joint-corruption-privacy case, the boundedness of the square loss (when compared with the unboundedness of log loss) becomes extremely useful. By utilizing such a property,~\citet{zhou2025square} proposed \texttt{Square}\chipo---which modifies the log loss in \chipo to a square loss---to study the interplay between corruption and privacy in offline alignment. Although it is a pioneering work, the guarantees of \texttt{Square}\chipo turn out to be strictly suboptimal in terms of both the corruption parameter and the interplay between corruption and privacy parameters. Our paper shows that \texttt{Square}\chipo in fact has a stronger guarantee via a new analysis. We note that a recent work~\cite{wu2025offlineonlinekl} also establishes faster rates for the regularized objective in both offline and online settings under DP. One key difference is that no corruption is considered in their results. 


\section{Additional Experiments}
\label{app:experiments}

In this chapter, we present a small set of proof-of-concept experiments to complement our theoretical results. As with many ICML theoretical works, our primary focus is on establishing sharp theoretical guarantees rather than extensive empirical evaluations.

Nevertheless, since our analysis shows that multiple loss functions
(i.e., the debiased log loss, square loss, and our MLE-type log loss)
achieve the same theoretical rates in the private alignment setting,
a natural question arises:

\begin{center}
    \fbox{
        \parbox{0.97\textwidth}{
            \centering
            \textbf{Do these losses also exhibit comparable practical behavior when implemented under the same training pipeline?}
        }
    }
\end{center}

\subsection{Experiment Setup}

We follow exactly the same dataset, model architecture, and training
pipeline as in prior works on private alignment
(e.g.,~\cite{zhou2025unified, zhou2025square}),
and only modify the loss function used for preference optimization. We compare the following three approaches: \emph{MLE-type log loss}; \emph{Debiased log loss~\cite{zhou2025unified}} and \emph{Square loss~\cite{zhou2025square}} for private alignment.

\paragraph{Dataset.}
We utilize GPT-4o to generate a synthetic dataset, referred to as
\texttt{finance\_preference}, which comprises 1697 preference samples. Each sample includes a prompt related to a financial scenario and two possible responses, where ``rejected'' represents the high-risk option and ``chosen'' represents the low-risk option. This labeling can be viewed as private or sensitive information. For SFT training, we construct the \texttt{finance\_sft} dataset by simply concatenating the prompt with the corresponding ``chosen'' response.

\paragraph{SFT Training.}
We begin by fine-tuning GPT2-large using the \texttt{finance\_sft}
dataset to obtain the SFT policy, $\pi_{\mathsf{sft}}$. For this, we
directly utilize the SFT trainer from the Transformer Reinforcement
Learning (TRL) library~\citep{vonWerra2020trl}, with the hyperparameters listed in Table~\ref{tab:sft_hyperparams}.

\paragraph{Evaluation.}
All methods are evaluated under the same privacy budget
$\varepsilon = 0.5$. Following standard practice in the alignment literature, performance is measured by the \emph{win rate} (\%) against the supervised fine-tuned policy $\pi_{\mathsf{sft}}$. For each test prompt, responses are generated by the evaluated policy and compared against the corresponding response produced by $\pi_{\mathsf{sft}}$. Pairwise comparisons are conducted using the \texttt{llama3:70b} model as an automated judge, following the evaluation protocol of~\citet{rafailov2023direct}. The win rate is defined as the fraction of test instances for which the judge prefers the response generated by the evaluated policy over that of $\pi_{\mathsf{sft}}$.

\begin{table}[h]
    \centering
    \caption{Hyperparameters used for SFT training.}
    \label{tab:sft_hyperparams}
    \begin{tabular}{l c}
        \toprule
        Parameter & Value \\
        \midrule
        learning rate & $1\mathrm{e}{-5}$ \\
        batch size & 8 \\
        num train epochs & 3 \\
        \bottomrule
    \end{tabular}
\end{table}

All methods are evaluated under the same privacy budget $\varepsilon = 0.5$.
Performance is measured by the win rate (\%) against the supervised
fine-tuned policy $\pi_{\mathsf{sft}}$, following standard practice
in the alignment literature.

\subsection{Results}
Table~\ref{tab:experiments} reports the win rate (\%)
over $\pi_{\mathsf{sft}}$ for different loss functions.

\begin{table}[h]
    \centering
    \caption{Win rate (\%) over $\pi_{\mathsf{sft}}$ under different loss functions with $\varepsilon = 0.5$.}
    \label{tab:experiments}
    \begin{tabular}{l c}
        \toprule
            Method & Win rate (\%) \\
        \midrule
            MLE-type log loss (ours) & $65.2 \pm 4.3$ \\
            Debiased log loss~\cite{zhou2025unified} & $65.8 \pm 5.6$ \\
            Square loss~\cite{zhou2025square} & $64.5 \pm 1.2$ \\
        \bottomrule
    \end{tabular}
\end{table}

All reported results are averaged over 5 independent random seeds. We observe that all three approaches achieve comparable performance,
which is consistent with our theoretical findings that they attain
the same convergence rates in the private alignment setting. Together with our theoretical results, these experiments help
clarify the practical implications of our new uniform convergence
analysis and position our contributions relative to previous work.

\begingroup

\subsection{Additional Experiment on PKU-SafeRLHF}

To complement the controlled synthetic experiment above, we further evaluate the three private-alignment objectives on a real-world safety preference dataset, \texttt{PKU-Alignment/PKU-SafeRLHF}~\citep{ji2024pkusaferlhf}. Specifically, we compare the \emph{MLE-type log loss}, \emph{debiased log loss~\cite{zhou2025unified}}, and \emph{square loss~\cite{zhou2025square}} under the same privacy budget $\varepsilon = 0.5$.

\paragraph{Dataset.}
We use the training split of \texttt{PKU-SafeRLHF} and retain only examples in which exactly one of the two candidate responses is marked safe. This filtering yields 10813 eligible preference pairs. We then select 1697 examples with selection seed 42, using 1442 examples for preference optimization, 85 examples for validation, and 170 held-out prompts for evaluation. The safe response is treated as ``chosen'' and the unsafe response as ``rejected'', matching the official \texttt{safer\_response\_id} field on all selected examples.

\paragraph{SFT Training.}
We fine-tune GPT2-large on the safe responses to obtain the supervised fine-tuned policy $\pi_{\mathsf{sft}}$. The SFT hyperparameters are identical to those in Table~\ref{tab:sft_hyperparams}: learning rate $1\mathrm{e}{-5}$, batch size 8, and 3 training epochs.

\paragraph{Private Preference Optimization.}
For each method, we run 5 independent private dataset seeds. Preference labels are privatized by randomized response under $\varepsilon = 0.5$, with flip probability $1/(1+\exp(\varepsilon))$. In the strict private-only $\chi$PO implementation, we set $\alpha=0$, $\beta=0.01$, and the $\chi$PO coefficient to 0.25. All three methods use the same strict private-only $\chi$PO transformed score and the same training, generation, and evaluation protocol; only the loss function is varied. Each trained policy is then used to generate one response for each of the 170 held-out test prompts.

\paragraph{Evaluation.}
For consistency with the win-rate evaluation above, every generated response is paired with the corresponding response from $\pi_{\mathsf{sft}}$ using a fixed A/B randomization rule. We use \texttt{llama3:70b} as the judge model and evaluate 3 methods $\times$ 5 seeds $\times$ 170 held-out test prompts, resulting in 2550 pairwise comparisons in total. We report the win rate over all comparisons, computed as the fraction of comparisons in which the method's output is preferred by the judge.

\begin{table}[h]
    \centering
    \caption{Win rate over all pairwise comparisons (\%) against $\pi_{\mathsf{sft}}$ on the PKU-SafeRLHF experiment with $\varepsilon = 0.5$.}
    \label{tab:pku_experiments}
    \begin{tabular}{l c}
        \toprule
            Method & Win rate (\%) \\
        \midrule
            MLE-type log loss (ours) & $61.45 \pm 2.81$ \\
            debiased log loss~\cite{zhou2025unified} & $61.95 \pm 2.50$ \\
            square loss~\cite{zhou2025square} & $61.64 \pm 2.25$ \\
        \bottomrule
    \end{tabular}
\end{table}

Table~\ref{tab:pku_experiments} reports the mean and standard deviation over 5 independent random seeds. The three losses achieve closely comparable win rates on this real-world safety preference dataset, consistent with the behavior observed in the synthetic finance experiment in Table~\ref{tab:experiments} and with our theoretical characterization of these objectives in the private alignment setting.
\endgroup


 




\section{Missing Proofs For Section~\ref{sec:uc}}

\subsection{Detailed Proof of Lemma~\ref{lem:uc-log}}

We first state the standard non-private uniform convergence result under log loss.
Similar results can be found in many existing works, e.g.,
~\citep[Chapter~7]{geer2000empirical},
~\citep[Theorem~21]{agarwal2020flambe},
~\citep[Lemma~C.6]{xie2024exploratory},
and~\citep[Proposition~8]{chen2025outcome}.

\begin{lemma}
\label{lem:uc-log-std}
    Suppose that $\{P_{\gamma}(o|x)\}_{\gamma \in \Gamma} \subseteq (\cX \to \Delta(\cO))$ is a class of conditional densities parameterized by a finite class $\Gamma$ with $\cO$ being discrete. Consider the data $(x^{(1)}, o^{(1)}), \ldots, (x^{(T)}, o^{(T)})$ be a sequence of random variables adapted to a filtration $(\cF^{(t)})_{t=1}^T$ such that there exists $\gamma^{\star} \in \Theta$ so that $\mathbb{P}(o^{t}=\cdot | x^{(t)}, \cF^{(t-1)}) = P_{\gamma^{\star}}(o^{(t)} = \cdot | x^{(t)})$ almost surely for $t \in [T]$. Then, with probability at least $1-\delta$, it holds that for $n \in [T]$, for all $\gamma \in \Gamma$, 
    \begin{align*}
        \sum_{t=1}^n \mathbb{E}\left[ D_{\mathsf{H}}^2 \left( P_{\gamma}(\cdot | x^{(t)}), P_{\gamma^{\star}}(\cdot | x^{(t)})\right)\mid \cF^{(t-1)}\right] \lesssim  \hat{L}^{(n)}(\gamma) - \hat{L}^{(n)}(\gamma^{\star}) +  \log(|\Gamma|\delta^{-1}),
    \end{align*}
    where for any given $\gamma'\in \Theta$, $\hat{L}^{(n)}(\gamma'):= \sum_{t=1}^n - \log {P}_{\gamma'}({o}^{(t)} | x^{(t)})$.
\end{lemma}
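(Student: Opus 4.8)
This is the classical MLE-type generalization bound under log loss, and the plan is to prove it by the standard exponential-supermartingale (Chernoff) argument, following e.g.\ \cite[Theorem 21]{agarwal2020flambe} or \cite[Chapter 7]{geer2000empirical}.

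First I would fix a single $\gamma \in \Gamma$ and introduce the per-step log-likelihood-ratio increment
\[
W^{(t)}_\gamma := \frac{1}{2}\log\frac{P_{\gamma^\star}(o^{(t)}\mid x^{(t)})}{P_\gamma(o^{(t)}\mid x^{(t)})},
\]
so that $\sum_{t=1}^n W^{(t)}_\gamma = \frac{1}{2}\bigl(\hat{L}^{(n)}(\gamma)-\hat{L}^{(n)}(\gamma^\star)\bigr)$. The key one-step computation uses realizability: conditioning on $(\cF^{(t-1)}, x^{(t)})$ and summing over the discrete outcome space $\cO$,
\[
\mathbb{E}\!\left[e^{-W^{(t)}_\gamma}\mid \cF^{(t-1)}, x^{(t)}\right]
= \sum_{o\in\cO}\sqrt{P_\gamma(o\mid x^{(t)})\,P_{\gamma^\star}(o\mid x^{(t)})}
= 1-\frac{1}{2}\,D_{\mathsf{H}}^2\!\bigl(P_\gamma(\cdot\mid x^{(t)}),P_{\gamma^\star}(\cdot\mid x^{(t)})\bigr),
\]
i.e.\ the Hellinger affinity (with the convention $D_{\mathsf{H}}^2(P,Q)=\sum_o(\sqrt{P(o)}-\sqrt{Q(o)})^2$). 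Taking a further expectation over $x^{(t)}$ and using $1-u\le e^{-u}$ yields the one-step supermartingale inequality $\mathbb{E}[e^{-W^{(t)}_\gamma}\mid \cF^{(t-1)}] \le \exp\bigl(-\frac{1}{2}\mathbb{E}[D_{\mathsf{H}}^2(P_\gamma,P_{\gamma^\star})\mid\cF^{(t-1)}]\bigr)$.

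Next I would define the process
\[
Z^{(n)}_\gamma := \exp\!\Bigl(-\sum_{t=1}^n W^{(t)}_\gamma + \frac{1}{2}\sum_{t=1}^n \mathbb{E}\bigl[D_{\mathsf{H}}^2(P_\gamma,P_{\gamma^\star})\mid\cF^{(t-1)}\bigr]\Bigr),
\]
which by the one-step bound is a nonnegative supermartingale with initial value $1$. Ville's maximal inequality then gives $\mathbb{P}[\sup_n Z^{(n)}_\gamma \ge |\Gamma|/\delta]\le \delta/|\Gamma|$, and a union bound over the finite class $\Gamma$ yields, with probability at least $1-\delta$, simultaneously for all $n\in[T]$ and all $\gamma\in\Gamma$,
\[
\frac{1}{2}\sum_{t=1}^n \mathbb{E}\!\left[D_{\mathsf{H}}^2\bigl(P_\gamma(\cdot\mid x^{(t)}),P_{\gamma^\star}(\cdot\mid x^{(t)})\bigr)\mid\cF^{(t-1)}\right]
\le \sum_{t=1}^n W^{(t)}_\gamma + \log(|\Gamma|/\delta)
= \frac{1}{2}\bigl(\hat{L}^{(n)}(\gamma)-\hat{L}^{(n)}(\gamma^\star)\bigr) + \log(|\Gamma|/\delta).
\]
Multiplying through by $2$ and absorbing constants into $\lesssim$ gives the claim.

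The argument is essentially routine; the two points that need care are (i) using a genuine maximal inequality (Ville) rather than a pointwise Chernoff bound, so that the guarantee holds for all $n\in[T]$ at once without paying an extra $\log T$ factor; and (ii) verifying the supermartingale property at the level of the full filtration $\cF^{(t-1)}$ — which is precisely where realizability $\mathbb{P}(o^{(t)}=\cdot\mid\cF^{(t-1)},x^{(t)})=P_{\gamma^\star}(\cdot\mid x^{(t)})$ enters, making the compensator nonnegative via the Hellinger-affinity identity. I do not anticipate any deeper obstacle; this standard lemma is then the workhorse for the private version (Lemma~\ref{lem:uc-log}), obtained by applying it to the privatized conditional densities $\widetilde{P}_{\theta}$ and translating the resulting Hellinger/TV bound back through the randomized-response contraction, which is exactly what produces the $c(\epsilon)^2$ factor.
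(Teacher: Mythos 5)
Your proof is correct and is exactly the standard exponential-supermartingale argument that the paper itself invokes: the paper states this lemma without proof, citing it as a known result (e.g.\ \cite[Theorem 21]{agarwal2020flambe}, \cite[Lemma C.6]{xie2024exploratory}), and your derivation reproduces that argument faithfully, including the one place where care is needed — taking the conditional expectation over $x^{(t)}$ of the linear quantity $1-\tfrac{1}{2}D_{\mathsf{H}}^2$ \emph{before} applying $1-u\le e^{-u}$, so that Jensen is never used in the wrong direction. No gaps.
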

\begin{proof}[Proof of Lemma~\ref{lem:uc-log}]
    We first aim to apply Lemma~\ref{lem:uc-log-std} to our private data sequence $(x^{(1)}, \widetilde{y}^{(1)}), \ldots, (x^{(T)}, \widetilde{y}^{(T)})$. To this end, we can have the following mapping: for any $\theta \in \Theta$, with $\cO = \{-1,1\}$, we let $P_{\gamma}(o|x) := \widetilde{P}_{\theta}(\widetilde{y} | x):= \sigma(\epsilon)\cdot {P}_{\theta}(\widetilde{y} | x) + (1-\sigma(\epsilon))\cdot {P}_{\theta}(-\widetilde{y} | x)$. Given the realizability condition on the clean label $y$, with this mapping and the generation of private label $\widetilde{y}$, we have that the condition of Lemma~\ref{lem:uc-log-std} is satisfied. Thus, we have with probability at least $1-\delta$, for $n \in [T]$, for all $\theta \in \Theta$,   
     \begin{align*}
        \sum_{t=1}^n \mathbb{E}\left[ D_{\mathsf{H}}^2 \left( \widetilde{P}_{\theta}(\cdot | x^{(t)}), \widetilde{P}_{\theta^{\star}}(\cdot | x^{(t)})\right)\mid \cF^{(t-1)}\right] \lesssim  \hat{L}^{(n)}(\theta) - \hat{L}^{(n)}(\theta^{\star}) +  \log(|\Theta|\delta^{-1}),
    \end{align*}
    where recall that $\hat{L}^{(n)}(\theta):= \sum_{t=1}^n - \log \widetilde{P}_{\theta}(\widetilde{y}^{(t)} | x^{(t)})$. 

    Next, by the fact that for any two distributions $P, Q$, $\norm{P-Q}_{\mathsf{TV}}^2 \le D_{\mathsf{H}}^2(P, Q)$, we have 
    \begin{align*}
        \sum_{t=1}^n \mathbb{E}\left[ \norm{\widetilde{P}_{\theta}(\cdot | x^{(t)})-\widetilde{P}_{\theta^{\star}}(\cdot | x^{(t)})}_{\mathsf{TV}}^2 \mid \cF^{(t-1)}\right] \lesssim  \hat{L}^{(n)}(\theta) - \hat{L}^{(n)}(\theta^{\star}) +  \log(|\Theta|\delta^{-1}).
    \end{align*}
   Finally, by the binary distribution and $\widetilde{P}_{\theta}(\widetilde{y} | x)= \left(2\sigma(\epsilon)-1\right) {P}_{\theta}(\widetilde{y} | x) + (1-\sigma(\epsilon))$, we have 
   \begin{align*}
        \left(2\sigma(\epsilon)-1\right)^2 \cdot \sum_{t=1}^n \mathbb{E}\left[ \norm{{P}_{\theta}(\cdot | x^{(t)})-{P}_{\theta^{\star}}(\cdot | x^{(t)})}_{\mathsf{TV}}^2 \mid \cF^{(t-1)}\right] \lesssim  \hat{L}^{(n)}(\theta) - \hat{L}^{(n)}(\theta^{\star}) +  \log(|\Theta|\delta^{-1}),
    \end{align*}
    which gives the final result by rescaling. 
\end{proof}

\subsection{Detailed Proof of Lemma~\ref{lem:uc-sq}}
We first state the standard non-private non-corrupted uniform convergence result under square loss. The following particular result is an adaptation based on Lemmas 6 and 7 (along with their proof) in~\citet{chen2025outcome}. We note that one can also prove Lemma~\ref{lem:uc-sq} using Freedman’s inequality directly. However, we tend to believe that utilizing existing results is a more modular approach.

\begin{lemma}
\label{lem:uc-sq-std}
    Suppose that $\cG \subseteq (\cX \to [-1, 1])$  is a fixed finite function class. Consider the data $(x^{(1)}, o^{(1)}), \ldots, (x^{(T)}, o^{(T)})$ be a sequence of random variables with $x^{(t)} \in \cX$, $o^{(t)} \in \{-C,C\}$ for some $C > 0$, which are adapted to a filtration  $(\cF^{(t)})_{t=1}^T$ such that there exists a function $F^{\star}: \cX \to [-C,C]$  with $F^{\star}(x^{(t)}) = \mathbb{E}[o^{(t)}| \cF^{(t-1)}, x^{(t)}]$ almost surely. Suppose that there exists a function $G^{\star} \in \cG$ with $\norm{F^{\star} - G^{\star}}_{\infty} \le \alpha_{\mathsf{app}}$. Then, with probability at least $1-\delta$, it holds that for all $G \in \cG$, for all $n \in [T]$
    \begin{align*}
        \sum_{t=1}^n \mathbb{E}\left[ \left(G(x^{(t)}) - F^{\star}(x^{(t)})\right)^2 | \cF^{(t-1)}\right] \lesssim \hat{L}_{\mathsf{sq}}^{(n)}(G) -  \hat{L}_{\mathsf{sq}}^{(n)}(G^{\star}) + C^2 \log(|\cG|\delta^{-1}) + n \alpha_{\mathsf{app}}^2,
    \end{align*}
    where for any $G' \in \cG$, $\hat{L}_{\mathsf{sq}}^{(n)}(G'):= \sum_{t=1}^n(G'(x^{(t)}) - o^{(t)})^2$. 
\end{lemma}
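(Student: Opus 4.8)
The plan is to prove Lemma~\ref{lem:uc-sq-std} by a direct application of Freedman's inequality to the martingale associated with the excess square loss, exploiting the self-bounding (variance-to-mean) structure characteristic of square-loss generalization. Fix $G \in \cG$ and define the per-step excess loss $Z_t := (G(x^{(t)}) - o^{(t)})^2 - (G^\star(x^{(t)}) - o^{(t)})^2$, so that $\hat{L}_{\mathsf{sq}}^{(n)}(G) - \hat{L}_{\mathsf{sq}}^{(n)}(G^\star) = \sum_{t=1}^n Z_t$. Writing $\Delta_t := G(x^{(t)}) - G^\star(x^{(t)})$ and using $\mathbb{E}[o^{(t)} \mid \cF^{(t-1)}, x^{(t)}] = F^\star(x^{(t)})$, the bias--variance identity for square loss gives the conditional mean $\mu_t := \mathbb{E}[Z_t \mid \cF^{(t-1)}] = \mathbb{E}\big[(G(x^{(t)}) - F^\star(x^{(t)}))^2 - (G^\star(x^{(t)}) - F^\star(x^{(t)}))^2 \mid \cF^{(t-1)}\big]$, where the unknown conditional-variance terms of $o^{(t)}$ cancel. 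Since $\norm{F^\star - G^\star}_\infty \le \alpha_{\mathsf{app}}$, summing yields $\sum_{t=1}^n \mu_t \ge \cE^{(n)}(G) - n\alpha_{\mathsf{app}}^2$.

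The crucial step is the variance control. Factoring $Z_t = \Delta_t\big(\Delta_t + 2(G^\star(x^{(t)}) - o^{(t)})\big)$ and using $|\Delta_t| \le 2$ and $|o^{(t)}| = C$, I would bound both $|Z_t| \lesssim C$ (in the relevant regime $C \gtrsim 1$) and the conditional second moment $\mathbb{E}[Z_t^2 \mid \cF^{(t-1)}] \lesssim C^2\, \mathbb{E}[\Delta_t^2 \mid \cF^{(t-1)}]$. Then, via $\Delta_t^2 \le 2(G - F^\star)^2 + 2(F^\star - G^\star)^2$, the predictable quadratic variation obeys $V_n := \sum_{t=1}^n \mathbb{E}[Z_t^2 \mid \cF^{(t-1)}] \lesssim C^2\big(\cE^{(n)}(G) + n\alpha_{\mathsf{app}}^2\big)$; that is, the variance is controlled by (essentially) the conditional mean. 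This self-bounding property is exactly what turns Freedman's inequality into a fast-rate bound.

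Applying Freedman's inequality to the martingale difference sequence $Z_t - \mu_t$ (bounded by $O(C)$, with quadratic variation $V_n$) gives, with probability at least $1-\delta$, $\big|\sum_{t=1}^n (Z_t - \mu_t)\big| \lesssim \sqrt{V_n \log(\delta^{-1})} + C\log(\delta^{-1})$. Substituting the bounds for $\sum_t \mu_t$ and $V_n$, then applying AM--GM to split $C\sqrt{\cE^{(n)}(G)\log(\delta^{-1})} \le \frac{1}{4}\cE^{(n)}(G) + C^2\log(\delta^{-1})$ and similarly $C\sqrt{n\alpha_{\mathsf{app}}^2\log(\delta^{-1})} \lesssim n\alpha_{\mathsf{app}}^2 + C^2\log(\delta^{-1})$, I can absorb a constant fraction of $\cE^{(n)}(G)$ into the left-hand side and obtain the claimed bound for this fixed $G$.

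Two final points complete the argument. To make the statement uniform over all $G \in \cG$, I take a union bound, replacing $\log(\delta^{-1})$ by $\log(|\cG|\delta^{-1})$. To make it uniform over all $n \in [T]$ without paying an extra $\log T$, I would invoke a time-uniform (anytime) version of Freedman's inequality---available through a standard supermartingale/stitching argument---so that the deviation bound holds simultaneously for every prefix length $n$. The main obstacle is the self-bounding variance estimate together with the subsequent absorption: one must track the constants in $\mathbb{E}[Z_t^2\mid\cF^{(t-1)}]\lesssim C^2\,\mathbb{E}[\Delta_t^2\mid\cF^{(t-1)}]$ and in the AM--GM splits carefully enough that the $\frac{1}{4}\cE^{(n)}(G)$ fraction can indeed be moved to the left-hand side while keeping the residual terms at the stated order $C^2\log(|\cG|\delta^{-1}) + n\alpha_{\mathsf{app}}^2$; by comparison, the cancellation of the $o^{(t)}$-variance in $\mu_t$ and the boundedness bookkeeping are routine.
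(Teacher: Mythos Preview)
Your proposal is correct and complete as a proof strategy: the self-bounding variance estimate $\mathbb{E}[Z_t^2\mid\cF^{(t-1)}]\lesssim C^2\,\mathbb{E}[\Delta_t^2\mid\cF^{(t-1)}]$, combined with Freedman (or its time-uniform variant) and an AM--GM absorption, is precisely the standard route to fast-rate square-loss generalization bounds, and your bookkeeping for the misspecification term $n\alpha_{\mathsf{app}}^2$ is handled correctly via $\Delta_t^2\le 2(G-F^\star)^2+2(F^\star-G^\star)^2$.

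The paper, however, does not actually prove this lemma. It is stated as an adaptation of Lemmas~6 and~7 in \cite{chen2025outcome} and treated as a black box; indeed, the paper explicitly remarks that one \emph{could} prove the downstream Lemma~\ref{lem:uc-sq} ``using Freedman's inequality directly'' but prefers to invoke existing results for modularity. So your approach is the from-scratch derivation that the paper deliberately outsources: you gain a self-contained argument (and make the role of the approximation error $\alpha_{\mathsf{app}}$ transparent), whereas the paper gains brevity by reducing to prior work. Substantively, what you wrote is almost certainly what underlies the cited lemmas, so there is no methodological divergence---only a difference in presentation.

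One minor remark: your boundedness claim $|Z_t|\lesssim C$ is stated ``in the relevant regime $C\gtrsim 1$''. This is fine for the paper's application (where $C=c(\epsilon)\ge 1$), and for $C<1$ the bound only improves, so the lemma as stated still follows; but it is worth saying explicitly that the $C^2$ prefactor is the correct order in both regimes once $\lesssim$ absorbs universal constants.
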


\begin{proof}[Proof of Lemma~\ref{lem:uc-sq}]
   We start with the \ctl setting.  Our first step is to apply Lemma~\ref{lem:uc-sq-std} to the private and corrupted labels under \ctl: $(x^{(1)}, z^{(1)}), \ldots, (x^{(T)}, z^{(T)})$. To this end, we consider the following mapping: $c(\epsilon) z^{(t)} \to o^{(t)}$ with $C = c(\epsilon)$ and $\cH \to \cG$. The remaining thing is to find $\alpha_{\mathsf{app}}$ with the corresponding $G^{\star}$. To proceed, under \ctl, we have the following process: $y \stackrel{\alpha}{\to} \bar{y} \stackrel{\epsilon}{\to} z$, where $\bar{y}$ is the intermediate corrupted label. Then, by the definition of randomized response and Huber corruption, we have 
   \begin{align}
   \label{eq:key-ctl}
       \left|F^{\star}(x^{(t)}) - h^{\star}(x^{(t)})\right|
       &=
       \left|
       \mathbb{E}[c(\epsilon) z^{(t)}| \cF^{(t-1)}, x^{(t)}]
       -
       \mathbb{E}[y^{(t)}| \cF^{(t-1)}, x^{(t)}]
       \right| \nonumber\\
       &=
       \left|
       \mathbb{E}[\bar{y}^{(t)}| \cF^{(t-1)}, x^{(t)}]
       -
       \mathbb{E}[y^{(t)}| \cF^{(t-1)}, x^{(t)}]
       \right|
       \le 2\alpha .
   \end{align}
   Here, the equality uses the unbiasedness of the rescaled randomized-response output, and the final inequality follows from the $\alpha$-Huber corruption model together with the fact that the labels lie in $\{-1,1\}$. Thus, with $G^{\star}=h^{\star}$, we have $\alpha_{\mathsf{app}} = 2\alpha = O(\alpha)$.
   
   We now can apply Lemma~\ref{lem:uc-sq-std} to obtain that with probability at least $1-\delta$, for all $h \in \cH$, for all $n \in [T]$
    \begin{align*}
        \sum_{t=1}^n \mathbb{E}\left[ \left(h(x^{(t)}) - F^{\star}(x^{(t)})\right)^2 | \cF^{(t-1)}\right] \lesssim \hat{L}_{\mathsf{sq}}^{(n)}(h) -  \hat{L}_{\mathsf{sq}}^{(n)}(h^{\star}) + c(\epsilon)^2 \log(|\cH|\delta^{-1}) + n \alpha^2.
    \end{align*}
    Finally, by the fact that $(a + b)^2 \le 2(a^2 + b^2)$ and~\eqref{eq:key-ctl}, we have 
        \begin{align*}
        \sum_{t=1}^n \mathbb{E}\left[ \left(h(x^{(t)}) - h^{\star}(x^{(t)})\right)^2 | \cF^{(t-1)}\right] \lesssim \hat{L}_{\mathsf{sq}}^{(n)}(h) -  \hat{L}_{\mathsf{sq}}^{(n)}(h^{\star}) + c(\epsilon)^2 \log(|\cH|\delta^{-1}) + n \alpha^2,
    \end{align*}
    which gives the required result under \ctl.

    For \ltc, the proof is essentially the same, with the only key difference in $\alpha_{\mathsf{app}}$. To determine it, under \ltc, we have the following process: $y \stackrel{\epsilon}{\to} \widetilde{y} \stackrel{\alpha}{\to} z$, where $\widetilde{y}$ is the intermediate private label under \texttt{RR}. Then, by the definition of randomized response and Huber corruption, we have 
    \begin{align}
    \label{eq:key-ltc}
        \left|F^{\star}(x^{(t)}) - h^{\star}(x^{(t)})\right|
        &=
        \left|
        \mathbb{E}[c(\epsilon) z^{(t)}| \cF^{(t-1)}, x^{(t)}]
        -
        \mathbb{E}[c(\epsilon)\widetilde{y}^{(t)}| \cF^{(t-1)}, x^{(t)}]
        \right| \nonumber\\
        &\le 2c(\epsilon)\alpha .
    \end{align}
    Here, we use
    $\mathbb{E}[c(\epsilon)\widetilde{y}^{(t)}|
    \cF^{(t-1)}, x^{(t)}]=h^{\star}(x^{(t)})$,
    which follows from the unbiasedness of the rescaled randomized-response output. 
    The final inequality follows from the $\alpha$-Huber corruption model and the fact that both 
    $z^{(t)}$ and $\widetilde{y}^{(t)}$ lie in $\{-1,1\}$. 
    Thus, with $G^{\star}=h^{\star}$, we have 
    $\alpha_{\mathsf{app}} = 2c(\epsilon)\alpha = O(c(\epsilon)\alpha)$.
 
    Then, with the same argument as above, we have the final result
    \begin{align*}
        \sum_{t=1}^n \mathbb{E}\left[ \left(h(x^{(t)}) - h^{\star}(x^{(t)})\right)^2 | \cF^{(t-1)}\right] \lesssim \hat{L}_{\mathsf{sq}}^{(n)}(h) -  \hat{L}_{\mathsf{sq}}^{(n)}(h^{\star}) + c(\epsilon)^2 \log(|\cH|\delta^{-1}) + n c(\epsilon)^2\alpha^2.
    \end{align*}
\end{proof}

\section{Missing Proofs For Section~\ref{sec:private}}

\subsection{Detailed Proof of Theorem~\ref{thm:pchipo}}
\label{proof:pchipo}
We first present the following meta theorem for \texttt{Priv}\chipo. 
\begin{theorem}
    Suppose Assumptions~\ref{ass:realizability} and~\ref{ass:bound-rew} hold. Define $\hat{r}(\tau):= \beta \phi\left(\frac{\hat{\pi}(\tau)}{\piref(\tau)}\right)$ for the output $\hat{\pi}$ of Algorithm~\ref{alg:PchiPO}. Then, we have 
    \begin{align*}
    J(\pi^\star) - J(\hat{\pi}) 
    & \le \frac{2V_{\mathsf{max}}}{R_{\mathsf{max}}} \sqrt{C^{\pi^\star} \cdot \err^2_{\mathsf{stat}}}
    + \beta \cdot C^{\pi^\star}
    + 2\beta^{-1} \cdot \frac{V_{\mathsf{max}}^2 \err^2_{\mathsf{stat}}}{R_{\mathsf{max}}^2},
\end{align*}
where
\begin{align*}
\err^2_{\mathsf{stat}}
= \mathbb{E}_{\pi_{\mathsf{ref}}, \pi_{\mathsf{ref}}}\!\left[
   \Big( \operatorname{clip}_{2R_{\mathsf{max}}}[\widehat{\Delta}(\tau,\tau')] 
   - \operatorname{clip}_{2R_{\mathsf{max}}}[\Delta^\star(\tau, \tau')] \Big)^2
\right],
\end{align*}
with $\widehat{\Delta}(\tau, \tau') := \hat{r}(\tau) - \hat{r}(\tau')$ and $\Delta^\star(\tau,\tau') := r^\star(\tau) - r^\star(\tau')$. Furthermore, by taking $\beta = \sqrt{\frac{2}{C^{\pi^\star}}} \cdot \frac{V_{\mathsf{max}}\err_{\mathsf{stat}}}{R_{\mathsf{max}}}$,
we obtain
\begin{align*}
J(\pi^\star) - J(\hat{\pi}) 
&\;\lesssim\; \frac{V_{\max}}{R_{\max}} \sqrt{C^{\pi^\star} \,\cdot\, \err^2_{\mathsf{stat}}}.
\end{align*}
\end{theorem}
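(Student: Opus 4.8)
This is a ``statistical error $\Rightarrow$ suboptimality'' conversion in which privacy plays no role (label noise enters only later, through the bound on $\err_{\mathsf{stat}}^2$ that Lemma~\ref{lem:uc-log} will supply); accordingly, the plan is to reproduce the \chipo analysis of~\cite{huang2024correcting}, organized around the implicit reward $\hat r(\tau):=\beta\phi(\hat\pi(\tau)/\piref(\tau))$. The three ingredients are: (i) the single-stage reparameterization, namely that \emph{any} policy is the exact optimum of the mixed $\chi^2$-and-KL regularized objective for its own implicit reward; (ii) a three-term regret decomposition of $J(\pi^\star)-J(\hat\pi)$ that trades the gap against on-policy reward errors, paying a regularization bias of order $\beta\,C^{\pi^\star}$ on the $\pi^\star$ side and, crucially, using the $\chi^2$ part of the regularizer to ``complete the square'' and cancel the $\hat\pi$-side error; and (iii) a Cauchy--Schwarz change of measure against the single-policy ratio, together with the clipping bookkeeping enabled by Assumption~\ref{ass:bound-rew}, which turns everything into $\err_{\mathsf{stat}}^2$ up to the ratio $V_{\mathsf{max}}/R_{\mathsf{max}}$.

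\textbf{Reparameterization, decomposition, completing the square.} \emph{Step 1.} Since $\phi(u)=u+\log u$ is a bijection of $(0,\infty)$ onto $\mathbb{R}$, the mixed-regularized optimum for a reward $r$ is $\pi_r(\tau\mid s)=\piref(\tau\mid s)\,\phi^{-1}((r(\tau)-Z_r(s))/\beta)$ with unique normalizer $Z_r$; taking $r=\hat r$ forces $Z_{\hat r}\equiv 0$ and $\pi_{\hat r}=\hat\pi$, so $\hat\pi$ maximizes $J_\beta^{\chi_{\mathsf{mix}}}(\cdot\,;\hat r):=\mathbb{E}_{\cdot}[\hat r]-\beta[D_{\chi^2}(\cdot\|\piref)+D_{\mathsf{KL}}(\cdot\|\piref)]$ over \emph{all} policies, and the same holds after adding any state-dependent shift $Z(s)$ to $\hat r$ (the objective changes only by a constant). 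I fix $f:=\hat r+Z$ with $Z(s):=\mathbb{E}_{\tau'\sim\piref(\cdot\mid s)}[r^\star(\tau')-\hat r(\tau')]$, so that $\mathbb{E}_{\tau'\sim\piref}[f(\tau')-r^\star(\tau')]=0$ for every $s$ while $f(\tau)-f(\tau')=\widehat{\Delta}(\tau,\tau')$ is untouched. \emph{Step 2.} Write $J(\pi^\star)-J(\hat\pi)=\mathbb{E}_{\pi^\star}[r^\star-f]+(\mathbb{E}_{\pi^\star}[f]-\mathbb{E}_{\hat\pi}[f])+\mathbb{E}_{\hat\pi}[f-r^\star]$; by optimality of $\hat\pi$ for $J_\beta^{\chi_{\mathsf{mix}}}(\cdot\,;f)$ the middle term is $\le\beta[D_{\chi^2}(\pi^\star\|\piref)+D_{\mathsf{KL}}(\pi^\star\|\piref)]-\beta[D_{\chi^2}(\hat\pi\|\piref)+D_{\mathsf{KL}}(\hat\pi\|\piref)]$, whose positive part is $\lesssim\beta\,C^{\pi^\star}$ (using $C^{\pi^\star}\ge1$, that $D_{\chi^2}(\pi^\star\|\piref)$ is of order $C^{\pi^\star}-1$, and $D_{\mathsf{KL}}\le D_{\chi^2}$), and whose negative part -- a negative multiple of $\beta\,\mathbb{E}_{\piref}[(\hat\pi(\tau)/\piref(\tau)-1)^2]$ -- I retain. \emph{Step 3.} Since $\mathbb{E}_{\piref}[f-r^\star]=0$ per state, $\mathbb{E}_{\hat\pi}[f-r^\star]=\mathbb{E}_{\piref}[(\hat\pi/\piref-1)(f-r^\star)]\le\frac{\beta}{2}\mathbb{E}_{\piref}[(\hat\pi/\piref-1)^2]+\frac{1}{2\beta}\mathbb{E}_{\piref}[(f-r^\star)^2]$ by AM--GM, and (adjusting the AM--GM constant if needed) its first term is absorbed by the negative $\chi^2$ term retained in Step 2, leaving only $\frac{1}{2\beta}\mathbb{E}_{\piref}[(f-r^\star)^2]$.

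\textbf{Change of measure, clipping, and tuning $\beta$.} \emph{Step 4.} For the $\pi^\star$-side error, $\mathbb{E}_{\pi^\star}[r^\star-f]=\mathbb{E}_{\piref}[(\pi^\star/\piref)(r^\star-f)]\le\sqrt{C^{\pi^\star}}\sqrt{\mathbb{E}_{\piref}[(r^\star-f)^2]}$ because $\mathbb{E}_{\piref}[(\pi^\star/\piref)^2]=C^{\pi^\star}$. Next, $f(\tau)-r^\star(\tau)=\mathbb{E}_{\tau'\sim\piref}[\widehat{\Delta}(\tau,\tau')-\Delta^\star(\tau,\tau')]$, so Jensen gives $\mathbb{E}_{\piref}[(r^\star-f)^2]\le\mathbb{E}_{\piref,\piref}[(\widehat{\Delta}-\Delta^\star)^2]$, and comparing unclipped with clipped differences region by region -- they agree on $\{|\widehat{\Delta}|\le 2R_{\mathsf{max}}\}$, while on its complement $|\widehat{\Delta}-\Delta^\star|\le 2V_{\mathsf{max}}$ (by $|r^\star|\le R_{\mathsf{max}}$ and Assumption~\ref{ass:bound-rew}) but $|\operatorname{clip}_{2R_{\mathsf{max}}}[\widehat{\Delta}]-\Delta^\star|\ge R_{\mathsf{max}}$ (again using $|\Delta^\star|\le R_{\mathsf{max}}$, and $\operatorname{clip}_{2R_{\mathsf{max}}}[\Delta^\star]=\Delta^\star$) -- yields $\mathbb{E}_{\piref,\piref}[(\widehat{\Delta}-\Delta^\star)^2]\le 4(V_{\mathsf{max}}/R_{\mathsf{max}})^2\,\err_{\mathsf{stat}}^2$. \emph{Step 5.} Combining Steps 2--4 gives $J(\pi^\star)-J(\hat\pi)\lesssim \frac{V_{\mathsf{max}}}{R_{\mathsf{max}}}\sqrt{C^{\pi^\star}\,\err_{\mathsf{stat}}^2}+\beta\,C^{\pi^\star}+\beta^{-1}\frac{V_{\mathsf{max}}^2}{R_{\mathsf{max}}^2}\,\err_{\mathsf{stat}}^2$, which is the stated meta bound; balancing the last two terms at $\beta\asymp\frac{V_{\mathsf{max}}\,\err_{\mathsf{stat}}}{R_{\mathsf{max}}\sqrt{C^{\pi^\star}}}$ makes all three terms $\lesssim\frac{V_{\mathsf{max}}}{R_{\mathsf{max}}}\sqrt{C^{\pi^\star}\,\err_{\mathsf{stat}}^2}$.

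\textbf{Main obstacle.} The only genuinely delicate points are Steps 1 and 4. Step 1 is where the $\chi^2$ (not merely KL) regularization is essential: it produces the free negative term that cancels the $\hat\pi$-side error in Step 3, so that the final bound depends on the \emph{single-policy} concentrability $C^{\pi^\star}$ rather than an all-policy quantity; one must also check that the normalizer shift keeps $\hat\pi$ the regularized optimum of $f$ and that $f$ is bounded (which follows from Assumption~\ref{ass:bound-rew}). Step 4 is the book-keeping making truncation at $2R_{\mathsf{max}}$ rather than at $V_{\mathsf{max}}$ cost only the factor $V_{\mathsf{max}}/R_{\mathsf{max}}$; everything else (AM--GM, Cauchy--Schwarz, the choice of $\beta$) is routine. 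Since none of these steps touch the label noise, the argument is essentially the non-private one of~\cite{huang2024correcting}; privacy is confined to the downstream estimate $\err_{\mathsf{stat}}^2\lesssim e^{4R_{\mathsf{max}}}c(\epsilon)^2\log(|\Pi|/\delta)/n$ obtained by applying Lemma~\ref{lem:uc-log} to the (private) \chipo log-loss together with a mean-value bound on $\sigma$, which is precisely what upgrades this meta theorem to Theorem~\ref{thm:pchipo}, with $\kappa(\pi^\star)$ collecting the factors $e^{2R_{\mathsf{max}}}$, $V_{\mathsf{max}}/R_{\mathsf{max}}$, and $\sqrt{C^{\pi^\star}}$.
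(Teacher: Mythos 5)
Your proposal is correct and follows essentially the same route as the paper, which simply defers to the proof of Theorem F.1 in \cite{huang2024correcting}: the translation to the implicit reward $\hat r$ via the $\phi$-reparameterization (their Lemma F.2), the three-term decomposition with the $\chi^2$ term of the regularizer cancelling the $\hat\pi$-side error, and the Cauchy--Schwarz change of measure using $\mathcal{C}^{\pi}=2D_{\chi^2}(\pi\|\piref)+1$ together with the clipping bookkeeping. You have merely written out in full the argument the paper cites, including the correct observation that privacy plays no role at this stage.
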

\begin{proof}
        The above result largely follows from the proof of Theorem F.1 in~\citet{huang2024correcting} (the latest arxiv version). The key in their proof is the translation from working with policy to working with the implicit reward $\hat{r}$ defined above, i.e., Lemma F.2 in~\citet{huang2024correcting}. With this, one can follow the standard proof for RLHF to arrive at the above result by relying on the fact that $\mathcal{C}^\pi=2  D_{\chi^2}(\pi \| \pi_{\mathsf{ref}})+1$. Note that since our \texttt{Priv}\chipo uses the same re-parametrization function $\phi$ as in \chipo, the above argument via their Lemma F.2 still works. 
\end{proof}

\begin{proof}[Proof of Theorem~\ref{thm:pchipo}]
    By the above meta theorem, we only need to focus on the term $\err^2_{\mathsf{stat}}$, which is the counterpart of Lemma F.1 in~\citet{huang2024correcting}. We are going to show a similar result.

    First, recall that 
    \begin{align*}
\widehat{\pi} \!\leftarrow \!\underset{\pi \in \Pi}{\operatorname{argmax}}\!\! \sum_{i \in [n]}  \log \left[ (2\sigma(\epsilon)\!-\!1) P^{(i)}_{\chipo}(\pi) + (1-\sigma(\epsilon))   \right],
    \end{align*}
    where 
    \begin{align*}
        P^{(i)}_{\chipo}(\pi) = \sigma\Big(
\operatorname{clip}_{2R_{\mathsf{max}}}\Big[
\beta\,\phi\!\left(\tfrac{\pi(\tau^{(i)}_{+})}{\pi_{\mathsf{ref}}(\tau^{(i)}_{+})}\right)
- \beta\,\phi\!\left(\tfrac{\pi(\tau^{(i)}_{-})}{\pi_{\mathsf{ref}}(\tau^{(i)}_{-})}\right)\Big]\Big).
    \end{align*}
Our first step is to bound the estimation error in probability by using Lemma~\ref{lem:uc-log}, i.e., 
\begin{align*}
    \sum_{i=1}^n \mathbb{E}\left[ P^{(i)}_{\chipo}(\hat{\pi})- P^{(i)}_{\chipo}({\pi}^{\star}_{\beta})\right]^2 \le c(\epsilon)^2 \log(|\Pi|/\delta),
\end{align*}
To this end, all we need to show is that the probability of the true raw label being $1$ is realized by  $P^{(i)}_{\chipo}({\pi}^{\star}_{\beta})$, since ${\pi}^{\star}_{\beta} \in \Pi$ (hence the empirical part is nonpositive). By the BT-model, we know that this probability is given by $\sigma(r^{\star}(\tau_{+}) - r^{\star}(\tau_{-}))$. Meanwhile, we know that $\pi_{\beta}^{\star}$ satisfies 
\begin{align*}
    r^{\star}(\tau) = \beta \phi\!\left(\frac{\pi_{\beta}^{\star}(\tau)}{\pi_{\mathsf{ref}}(\tau)}\right) + Z_{\beta}(s_1),
\end{align*}
where $s_1$ is the initial state. Thus, we have
\begin{align*}
    P^{(i)}_{\chipo}({\pi}^{\star}_{\beta}) = \sigma\Big(
\operatorname{clip}_{2R_{\mathsf{max}}}\Big[r^{\star}(\tau_{+}^{(i)}) - r^{\star}(\tau_{-}^{(i)})\Big]\Big) = \sigma(r^{\star}(\tau_{+}^{(i)}) - r^{\star}(\tau_{-}^{(i)})),
\end{align*}
where the first equality holds by cancelling the common normalization term $Z_{\beta}(s_1)$ and the second equality holds by the boundedness of $r^{\star} \in [0, R_{\mathsf{max}}]$.

To finally get rid of $\sigma$ operator, one can use the standard mean-value theorem (cf. Lemma~\ref{lem:mvt}) to have an additional factor of $O(e^{4 R_{\mathsf{max}}})$. Hence, we have that under Algorithm~\ref{alg:PchiPO}
\begin{align*}
\err^2_{\mathsf{stat}}
= \mathbb{E}_{\pi_{\mathsf{ref}}, \pi_{\mathsf{ref}}}\!\left[
   \Big( \operatorname{clip}_{2R_{\mathsf{max}}}[\widehat{\Delta}(\tau,\tau')] 
   - \operatorname{clip}_{2R_{\mathsf{max}}}[\Delta^\star(\tau, \tau')] \Big)^2\right] \lesssim e^{4 R_{\mathsf{max}}} c(\epsilon)^2 \log(|\Pi|/\delta).
\end{align*}
Plugging it into the above meta theorem yields the required result.
\end{proof}

\begin{lemma}[Lemma C.3 in~\citet{zhou2025square}]
\label{lem:mvt}
For $z, z' \in [-R, R]$ and $R \ge 1$, by the mean-value theorem we have
\[
|z - z'| \le (e^{-R} + 2 + e^{R}) \cdot |\sigma(z) - \sigma(z')|,
\]
where $\sigma(\cdot)$ is the sigmoid function.
\end{lemma}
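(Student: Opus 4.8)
The plan is simply to carry out the mean-value-theorem computation that the statement already names. If $z=z'$ the bound is trivial, so I would assume $z\neq z'$ and set $[a,b]:=[\min\{z,z'\},\max\{z,z'\}]\subseteq[-R,R]$. Since $\sigma$ is continuously differentiable with $\sigma'>0$, the mean value theorem produces a point $\xi\in(a,b)\subseteq[-R,R]$ with $\sigma(z)-\sigma(z')=\sigma'(\xi)\,(z-z')$, hence $|z-z'|=|\sigma(z)-\sigma(z')|/\sigma'(\xi)\le |\sigma(z)-\sigma(z')|\big/\min_{u\in[-R,R]}\sigma'(u)$. So the whole task reduces to lower bounding $\sigma'$ on $[-R,R]$.

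For that I would use the identity $\sigma'(u)=\sigma(u)\bigl(1-\sigma(u)\bigr)=\frac{e^{u}}{(1+e^{u})^{2}}=\frac{1}{2+e^{u}+e^{-u}}$. The denominator $2+e^{u}+e^{-u}=2+2\cosh(u)$ is even and convex in $u$, so on $[-R,R]$ it is maximized at the endpoints $u=\pm R$, giving the value $2+e^{R}+e^{-R}$; equivalently, $\sigma$ is monotone and $t\mapsto t(1-t)$ is increasing on $(0,\tfrac12]$ and decreasing on $[\tfrac12,1)$, so $\sigma'(u)\ge\sigma(R)\bigl(1-\sigma(R)\bigr)=\frac{1}{e^{-R}+2+e^{R}}$ for all $u\in[-R,R]$. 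Substituting this back into the display above yields $|z-z'|\le (e^{-R}+2+e^{R})\,|\sigma(z)-\sigma(z')|$, which is exactly the claim. Note that the hypothesis $R\ge 1$ is not actually needed for this inequality; it is presumably retained only so that the prefactor can later be absorbed into a cleaner $e^{\Theta(R)}$ expression where it is invoked (e.g.\ in the proof of Theorem~\ref{thm:pchipo}).

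There is essentially no obstacle here. The only point needing a line of justification is locating the minimum of $\sigma'$ over $[-R,R]$, namely that it is attained at the endpoints, which is immediate from monotonicity of $\cosh$ on $[0,\infty)$ (or, as noted, from the shape of $t\mapsto t(1-t)$ together with monotonicity of $\sigma$). Everything else is a one-line invocation of the mean value theorem, so the full write-up should be only a few lines.
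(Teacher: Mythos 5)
Your proof is correct and is exactly the mean-value-theorem argument the lemma statement names (the paper itself only cites this result from \cite{zhou2025square} without reproving it): the key computation $\sigma'(u)=\frac{1}{e^{-u}+2+e^{u}}$, minimized on $[-R,R]$ at the endpoints, immediately gives the stated constant. Your observation that the hypothesis $R\ge 1$ is not needed for the inequality itself is also accurate.
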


\subsection{Detailed Proof of Theorem~\ref{thm:PXPO}}
\label{proof:PXPO}





For the proof of Theorem~\ref{thm:PXPO}, we mainly follow from Lemma~\ref{lem:uc-log}, which gives the private uniform convergence result under log loss. Our proof is modular once we have the generalization error bounds. Here we first present a meta theorem, which is a simple adaptation from the proof in~\citet{xie2024exploratory} to our algorithm. We will then provide the mapping from Lemma~\ref{lem:uc-log} to Algorithm~\ref{alg:PXPO} here to demonstrate why we can utilize such a lemma. 




\begin{theorem}[Meta Theorem for both \texttt{Priv}\xpo and \texttt{Square}\xpo under BT] 
\label{thm:meta_xpo}
    Under the BT-preference model, let Assumptions~\ref{ass:real-xpo} and~\ref{ass:bound-ratio} hold. Suppose the following bound holds
    \begin{align} \label{eq:xpo_meta_ass}
        \gamma \mathbb{E}_{\pi_{\mathsf{ref}}}\left[\log\pi^{(t)}(\tau)-\log\pi_\beta^\star(\tau)\right]
        + \kappa \left[ \mathbb{E}_{s_1 \sim \rho, (\tau, \widetilde{\tau}) \sim \mu^{(t)}|s_1} (f_{\pi^{(t)}} - f_{\pi_{\beta}^\star})^2\right]
        \lesssim
        \frac{\err}{t-1} + \frac{\gamma}{\beta} V_{\max} \sqrt{\frac{\log( |\Pi| T \delta^{-1})}{t-1}},
    \end{align}

    we have
    \begin{align*}
        J_\beta(\pi_\beta^\star)-J_\beta(\hat{\pi})
        \lesssim
        \frac{V_{\max}}{T} + \frac{C_{\mathsf{cov}}(\Pi)\log T}{\eta T}
        + \eta V_{\max}^2 + \frac{1}{T}\sum_{t=2}^{T} \left( \frac{\beta \err}{\gamma (t-1)} + V_{\max} \sqrt{\frac{\log (|\Pi| T \delta^{-1})}{t-1}} \right),
    \end{align*}
    where $\kappa := \left(8(R_{\max}+ V_{\max})e^{2R_{\max}}\right)^{-2}$, $\mu^{(t)} = \frac{1}{t-1} \sum_{i<t} \pi^{(i)} \otimes \piref$, $f_{\pi}(\tau, \widetilde{\tau}) = \beta \log \frac{\pi(\tau)}{\pi_{\mathsf{ref}}(\tau)} - \beta \log \frac{\pi(\widetilde{\tau})}{\pi_{\mathsf{ref}}(\widetilde{\tau})}$.
\end{theorem}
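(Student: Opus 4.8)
The plan is to import the regret analysis of \xpo from~\citet{xie2024exploratory}, treating the per-round guarantee~\eqref{eq:xpo_meta_ass} as a black box and carrying out only the aggregation over the $T$ rounds; the specialization to \texttt{Priv}\xpo and \texttt{Square}\xpo (which is where Lemma~\ref{lem:uc-log}, resp.\ Lemma~\ref{lem:uc-sq}, enters to verify~\eqref{eq:xpo_meta_ass}) is deferred to the respective proofs. First I would perform an online-to-batch reduction: since the algorithm returns $\hat{\pi} = \argmax_{\pi \in \{\pi^{(1)},\dots,\pi^{(T+1)}\}} J_\beta(\pi)$, for every $t$ we have $J_\beta(\hat{\pi}) \ge J_\beta(\pi^{(t)})$, hence $J_\beta(\pi_\beta^\star) - J_\beta(\hat{\pi}) \le \frac{1}{T}\sum_{t=2}^{T+1}\bigl(J_\beta(\pi_\beta^\star) - J_\beta(\pi^{(t)})\bigr)$, where the discarded iterate $\pi^{(1)} = \piref$ contributes at most $O(V_{\max})$ and produces the $V_{\max}/T$ term.

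Next I would decompose the per-round regret. Using the closed form $\pi_\beta^\star \propto \piref\exp(r^\star/\beta)$, a direct computation yields the exact identity $J_\beta(\pi_\beta^\star) - J_\beta(\pi^{(t)}) = \beta\,\mathbb{E}_{s_1\sim\rho}\bigl[D_{\mathsf{KL}}\bigl(\pi^{(t)}(\cdot\mid s_1)\,\|\,\pi_\beta^\star(\cdot\mid s_1)\bigr)\bigr]$, and under the reparametrization $f_\pi$ one has $f_{\pi_\beta^\star} = f^\star$ with $f^\star(\tau,\widetilde{\tau}) = r^\star(\tau) - r^\star(\widetilde{\tau})$. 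Following the global-optimism argument of~\citet{xie2024exploratory} --- $\pi^{(t+1)}$ maximizes the optimistic objective of which $\pi_\beta^\star$ is a feasible competitor --- this gap is bounded by $\tfrac{\beta}{\gamma}$ times the optimism discrepancy $\gamma\,\mathbb{E}_{\piref}[\log\pi^{(t)}(\tau) - \log\pi_\beta^\star(\tau)]$, plus an on-policy estimation term of the form $\mathbb{E}_{s_1\sim\rho,(\tau,\widetilde{\tau})\sim\pi^{(t)}\otimes\piref}\bigl[(f_{\pi^{(t)}} - f_{\pi_\beta^\star})^2\bigr]^{1/2}$, plus a martingale fluctuation responsible for the $V_{\max}\sqrt{\log(|\Pi|T\delta^{-1})/(t-1)}$ term. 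The passage between the uniform-convergence certificates --- stated in Hellinger/TV distance for the log loss and in the plain square loss otherwise --- and this implicit-reward $L^2$ discrepancy costs the $e^{2R_{\max}}$ and $(R_{\max}+V_{\max})$ factors through the mean-value bound (Lemma~\ref{lem:mvt}); the constant $\kappa = (8(R_{\max}+V_{\max})e^{2R_{\max}})^{-2}$ in~\eqref{eq:xpo_meta_ass} is calibrated so that this conversion is lossless, which is why the hypothesis controls $\kappa$ times the $L^2$ discrepancy evaluated on the mixture $\mu^{(t)}$.

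The last and main step is a coverability-based change of measure: the per-round regret is governed by the \emph{on-policy} error at round $t$ (under $\pi^{(t)}\otimes\piref$), whereas~\eqref{eq:xpo_meta_ass} only certifies smallness of the error on the \emph{historical} mixture $\mu^{(t)} = \tfrac{1}{t-1}\sum_{i<t}\pi^{(i)}\otimes\piref$. I would close this gap with the coverability potential lemma of~\citet{xie2022role}: introducing a free threshold $\eta > 0$ and splitting each trajectory according to whether $d^{\pi^{(t)}}$ is well covered by the running mixture, the well-covered part is handled by the $L^2$ bound from~\eqref{eq:xpo_meta_ass}, while the total mass over $t$ of the poorly-covered part is $O(C_{\mathsf{cov}}(\Pi)\log T/\eta)$, each such round costing at most $O(V_{\max})$ by Assumption~\ref{ass:bound-ratio}. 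Summing over $t$, using $\sum_{t=2}^{T+1}\tfrac{1}{t-1} = O(\log T)$, inserting~\eqref{eq:xpo_meta_ass} (which simultaneously bounds the optimism discrepancy and $\kappa\,\mathbb{E}_{\mu^{(t)}}[(f_{\pi^{(t)}} - f_{\pi_\beta^\star})^2]$ by $\tfrac{\err}{t-1} + \tfrac{\gamma}{\beta}V_{\max}\sqrt{\log(|\Pi|T\delta^{-1})/(t-1)}$), and dividing by $T$, the stated bound falls out, with the $\tfrac{\beta\err}{\gamma(t-1)}$ and $V_{\max}\sqrt{\log(|\Pi|T\delta^{-1})/(t-1)}$ contributions coming from the optimism discrepancy rescaled by $\beta/\gamma$.

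The hard part is this coverability step: reconciling the on-policy nature of the regret with the historical-mixture nature of the uniform-convergence certificate, while tracking the threshold $\eta$ and the $\kappa$ (hence $e^{2R_{\max}}$) factors so that the dependence on $\eta$ matches the stated $\eta V_{\max}^2 + C_{\mathsf{cov}}(\Pi)\log T/(\eta T)$ form. A secondary subtlety is verifying that the optimism discrepancy, which may have either sign, enters the per-round bound with the orientation that lets us apply the upper bound from~\eqref{eq:xpo_meta_ass}; everything else is routine bookkeeping once~\eqref{eq:xpo_meta_ass} is granted.
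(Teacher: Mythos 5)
Your proposal is correct and follows essentially the same route as the paper's proof: an online-to-batch reduction over the iterates, the standard \xpo regret decomposition into an optimism-discrepancy term plus the on-policy implicit-reward error $\delta^{(t)} = f_{\pi^{(t)}} - f_{\pi_\beta^\star}$, a change of measure from the on-policy distribution to the historical mixture $\mu^{(t)}$ via the coverability potential of \citet{xie2022role} (the paper executes this through the information ratio $\mathcal{I}^{(t)}$ and AM-GM with the free parameter $\eta$, which is the same device as your covered/uncovered split), and finally substitution of the hypothesis~\eqref{eq:xpo_meta_ass}. The only cosmetic difference is your use of the exact KL identity for the per-round gap, which the paper replaces by the generic decomposition inherited from \citet{xie2024exploratory}.
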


Recall the update rule in both Algorithm~\ref{alg:PXPO} and~\ref{alg:SXPO} and define $\hat{B}^{(t)}(\pi) = \gamma \sum_{\widetilde{\tau}}\log \pi(\widetilde{\tau})$, then

\begin{align*} 
    \pi^{(t+1)} \leftarrow \argmin_{\pi \in \Pi}\left\{\hat{B}^{(t)}(\pi) - \hat{L}^{(t)}(\pi)\right\},
\end{align*}
where $\hat{L}^{(t)}(\pi) = c(\epsilon)^2 \cdot \hat{L}_{\xpo}^{(t)}(\pi)$ for \texttt{Priv}\xpo, as for \texttt{Square}\xpo, $\hat{L}^{(t)}(\pi) = \hat{L}_{\texttt{sq}\xpo}^{(t)}(\pi).$

To satisfy Equation~\eqref{eq:xpo_meta_ass}, we need the following two lemmas.
\begin{lemma}[Adapted from~\citet{xie2024exploratory}, Lemma C.6]
\label{lem:c.6}
    For any fixed $t \ge 1$, for all $\pi \in \Pi$ with probability at least $1-\delta$, we have:
    \begin{align*}
        \kappa \cdot \left[ \mathbb{E}_{s_1 \sim \rho, (\tau, \widetilde{\tau}) \sim \mu^{(t)}|s_1} (f_{\pi^{(t)}} - f_{\pi_{\beta}^\star})^2\right]
        \le
        \hat{L}^{(t)}(\pi) - \hat{L}^{(t)}(\pi_\beta^\star) + \err.
    \end{align*}
\end{lemma}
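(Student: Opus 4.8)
\textbf{Proof proposal for Lemma~\ref{lem:c.6}.}

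The plan is to invoke the private uniform convergence guarantee of Lemma~\ref{lem:uc-log} applied to the filtration generated by the online interaction, and then translate the resulting squared total-variation bound into the squared-difference bound on the reparameterized reward $f_\pi$ via the mean-value theorem. Concretely, I would first set up the correspondence between the abstract data model in Lemma~\ref{lem:uc-log} and the online setting of Algorithm~\ref{alg:PXPO}: at round $i$, the ``context'' is $x^{(i)} = (s_1^{(i)}, \tau^{(i)}, \widetilde\tau^{(i)})$, the clean label is the true preference $y^{(i)} \in \{-1,1\}$, the privatized label is $\widetilde y^{(i)}$, and the parameterized conditional density at parameter $\pi$ is $P^{(i)}_{\xpo}(\pi) = \sigma(h^{(i)}_{\xpo}(\pi))$ for the $+1$ outcome. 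Realizability holds by Assumption~\ref{ass:real-xpo}: the optimal KL-regularized policy $\pi^\star_\beta$ satisfies the closed-form identity $r^\star(\tau) = \beta\log\frac{\pi^\star_\beta(\tau)}{\piref(\tau)} + Z_\beta(s_1)$, so that $h^{(i)}_{\xpo}(\pi^\star_\beta) = r^\star(\tau^{(i)}) - r^\star(\widetilde\tau^{(i)})$ and hence $P^{(i)}_{\xpo}(\pi^\star_\beta)$ is exactly the Bradley--Terry probability that generates $y^{(i)}$. (Note that in this DPO-style parameterization there is no clipping, so this identification is clean; the boundedness of the implicit reward difference from Assumption~\ref{ass:bound-ratio} keeps all the logits in a bounded range.)

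Next I would apply Lemma~\ref{lem:uc-log} with $|\Theta| \rightsquigarrow |\Pi|$ to get, for a fixed $t$ and all $\pi \in \Pi$, with probability at least $1-\delta$,
\begin{align*}
  \sum_{i=1}^{t-1} \mathbb{E}\!\left[\norm{P^{(i)}_{\xpo}(\pi) - P^{(i)}_{\xpo}(\pi^\star_\beta)}_{\mathsf{TV}}^2 \,\middle|\, \cF^{(i-1)}\right]
  \lesssim c(\epsilon)^2\!\left(\hat L^{(t-1)}_{\xpo}(\pi^\star_\beta) - \hat L^{(t-1)}_{\xpo}(\pi) + \log(|\Pi|\delta^{-1})\right),
\end{align*}
where I've used that $\hat L^{(t-1)}_{\xpo}$ as written in the algorithm already equals the private log-likelihood $\sum_i \log[(2\sigma(\epsilon)-1)P^{(i)}_{\xpo} + (1-\sigma(\epsilon))]$, so $\hat L^{(t)}(\pi) - \hat L^{(t)}(\pi^\star_\beta) = c(\epsilon)^2(\hat L^{(t)}_{\xpo}(\pi) - \hat L^{(t)}_{\xpo}(\pi^\star_\beta))$ by definition of $\hat L^{(t)}$ for \texttt{Priv}\xpo. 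For a binary distribution, $\norm{P^{(i)}_{\xpo}(\pi) - P^{(i)}_{\xpo}(\pi^\star_\beta)}_{\mathsf{TV}} = |\sigma(h^{(i)}_{\xpo}(\pi)) - \sigma(h^{(i)}_{\xpo}(\pi^\star_\beta))|$, and then Lemma~\ref{lem:mvt} with $R = 2(R_{\max}+V_{\max})$ (or the analogous bound used in \xpo) upgrades this to $|h^{(i)}_{\xpo}(\pi) - h^{(i)}_{\xpo}(\pi^\star_\beta)|$ at the cost of the factor $\big(8(R_{\max}+V_{\max})e^{2R_{\max}}\big)^{-2} = \kappa$, i.e.\ $\kappa\,(f_{\pi}(\tau^{(i)},\widetilde\tau^{(i)}) - f_{\pi^\star_\beta}(\tau^{(i)},\widetilde\tau^{(i)}))^2 \lesssim \norm{P^{(i)}_{\xpo}(\pi) - P^{(i)}_{\xpo}(\pi^\star_\beta)}_{\mathsf{TV}}^2$. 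Finally, since $\tau^{(i)} \sim \pi^{(i)}$ and $\widetilde\tau^{(i)} \sim \piref$ conditioned on $\cF^{(i-1)}$ and $s_1^{(i)} \sim \rho$, the conditional expectation of $(f_{\pi} - f_{\pi^\star_\beta})^2$ at round $i$ is exactly $\mathbb{E}_{s_1\sim\rho,(\tau,\widetilde\tau)\sim \pi^{(i)}\otimes\piref}(f_\pi - f_{\pi^\star_\beta})^2$; summing over $i < t$ and dividing by $t-1$ recovers the average $\mu^{(t)} = \frac{1}{t-1}\sum_{i<t}\pi^{(i)}\otimes\piref$ appearing in the statement — but since the lemma is stated in cumulative (un-normalized) form I would just keep the sum, absorb the $c(\epsilon)^2$ into $\hat L^{(t)}$ via the identity above, and fold $c(\epsilon)^2\log(|\Pi|\delta^{-1})$ plus the mean-value-theorem constants into the error term $\err$.

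The main obstacle — really a bookkeeping subtlety rather than a deep one — is making sure the Bradley--Terry comparison is carried over the \emph{right} pair in the right order. Algorithm~\ref{alg:PXPO} ranks the two responses using the privatized label $\widetilde y^{(i)}$ to form $(\tau^{(i)}_+, \tau^{(i)}_-)$ before defining $h^{(i)}_{\xpo}$, so I need to check that the private-label likelihood built on the ranked pair agrees with the private-label likelihood built on the unordered pair $(\tau^{(i)},\widetilde\tau^{(i)})$ with outcome $y^{(i)}$ — this holds because $\sigma(-u) = 1-\sigma(u)$ makes the log-likelihood symmetric under swapping the pair and flipping the label, so the reordering is innocuous and the realizability identification still goes through verbatim. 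The other point to be careful about is that $\err$ here must be a \emph{single} quantity that simultaneously dominates the privacy/complexity term $c(\epsilon)^2\log(|\Pi|T\delta^{-1})$ across all rounds $t \le T$ (hence the union bound over $t$ and the $\log T$ inflation in the statement of Theorem~\ref{thm:meta_xpo}); I would define $\err \asymp \kappa^{-1}c(\epsilon)^2\log(|\Pi|T\delta^{-1})$ up to absolute constants and verify it is consistent with the instantiation used in Lemma~\ref{lem:c.6} and in the meta-theorem's hypothesis~\eqref{eq:xpo_meta_ass}.
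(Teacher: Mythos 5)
Your proposal follows essentially the same route as the paper's: instantiate Lemma~\ref{lem:uc-log} with $\Theta \to \Pi$, $x^{(i)} \to (\tau^{(i)},\widetilde{\tau}^{(i)})$, and $P_\pi(y=1\mid x)=\sigma(h^{(i)}_{\xpo}(\pi))$ (realizability via the closed form of $\pi^\star_\beta$), use that the Bernoulli TV distance equals the probability gap, and convert to the logit gap via a mean-value-theorem bound of the form $|x-y|\le 8(X+Y)e^{2Y}|\sigma(x)-\sigma(y)|$, which is exactly where $\kappa$ comes from; your extra check that ranking by the private label leaves the likelihood invariant (via $\sigma(-u)=1-\sigma(u)$) is a point the paper leaves implicit. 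One bookkeeping slip: your closing definition $\err \asymp \kappa^{-1}c(\epsilon)^2\log(|\Pi|T\delta^{-1})$ carries a spurious $\kappa^{-1}$ that contradicts your own chain $\sum_i \kappa\,(f_\pi-f_{\pi^\star_\beta})^2 \lesssim \sum_i \norm{\cdot}_{\mathsf{TV}}^2 \lesssim \hat{L}^{(t)}(\pi)-\hat{L}^{(t)}(\pi^\star_\beta)+c(\epsilon)^2\log(|\Pi|\delta^{-1})$; the correct instantiation, as in Lemma~\ref{lem:xpo_err_log}, is $\err = c(\epsilon)^2\log(|\Pi|T\delta^{-1})$ with no $\kappa^{-1}$ (otherwise the final bound would pick up an extra factor of $(V_{\max}+R_{\max})e^{2R_{\max}}$ beyond $\kappa_{\mathsf{cov}}(\Pi)$).
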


\begin{lemma}[Adapted from~\citet{xie2024exploratory}, Lemma C.7]
\label{lem:c.7}
    For any fixed $t \ge 1$, for all $\pi \in \Pi$ with probability at least $1-\delta$, we have:
    \begin{align*}
        \gamma (t-1) \mathbb{E}_{\pi_{\mathsf{ref}}}\left[\log\pi^{(t)}(\tau)-\log\pi_\beta^\star(\tau)\right]
        \le
        \hat{B}^{(t)}(\pi) - \hat{B}^{(t)}(\pi_\beta^\star) + \frac{\gamma}{\beta} V_{\max} \sqrt{(t-1) \log(|\Pi| \delta^{-1})}.
    \end{align*}
\end{lemma}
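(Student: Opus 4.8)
The plan is to prove this by a uniform martingale concentration over the finite class $\Pi$; no use of the algorithm's optimization step is needed, since the optimism argument is carried out separately in the meta-theorem. Concretely, I would first establish the bound in the uniform form ``for all $\pi\in\Pi$, $\gamma(t-1)\,\mathbb{E}_{\piref}[\log\pi(\tau)-\log\pi_\beta^\star(\tau)] \le \hat{B}^{(t)}(\pi) - \hat{B}^{(t)}(\pi_\beta^\star) + \frac{\gamma}{\beta}V_{\max}\sqrt{(t-1)\log(|\Pi|\delta^{-1})}$'' and then specialize it to the (data-dependent but $\Pi$-valued) iterate $\pi^{(t)}$, which is exactly the displayed statement.

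Fix $t$ and, for each $\pi\in\Pi$, set $\xi_i(\pi) := \log\pi(\widetilde{\tau}^{(i)}) - \log\pi_\beta^\star(\widetilde{\tau}^{(i)})$ for $i\in[t-1]$, so that $\hat{B}^{(t)}(\pi)-\hat{B}^{(t)}(\pi_\beta^\star)=\gamma\sum_{i=1}^{t-1}\xi_i(\pi)$. Since $\widetilde{\tau}^{(i)}\sim\piref(\cdot\mid s^{(i)})$ with $s^{(i)}\sim\rho$, and $\pi$ and $\pi_\beta^\star$ are fixed functions independent of the data, we have $\mathbb{E}[\xi_i(\pi)\mid\cF^{(i-1)}]=\mathbb{E}_{\piref}[\log\pi(\tau)-\log\pi_\beta^\star(\tau)]=:m(\pi)$ for every $i$, so $\big(\sum_{i=1}^{k}(m(\pi)-\xi_i(\pi))\big)_{k\le t-1}$ is a martingale adapted to $(\cF^{(i)})$. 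The key boundedness input is Assumption~\ref{ass:bound-ratio}: it gives $|\log(\pi(\tau)/\piref(\tau))|\le V_{\max}/\beta$ for all $\pi\in\Pi$ and all $\tau$, and the same bound holds for $\pi_\beta^\star$ since $\pi_\beta^\star\in\Pi$ by Assumption~\ref{ass:real-xpo}; writing $\xi_i(\pi)$ as a difference of two such log-ratios (the $\log\piref$ terms cancel) gives $|\xi_i(\pi)|\le 2V_{\max}/\beta$, hence the martingale increments $m(\pi)-\xi_i(\pi)$ lie in an interval of width at most $4V_{\max}/\beta$.

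Applying Azuma--Hoeffding to this length-$(t-1)$ martingale, for a single fixed $\pi$, gives with probability at least $1-\delta/|\Pi|$ that $(t-1)\,m(\pi)-\sum_{i=1}^{t-1}\xi_i(\pi)\lesssim\frac{V_{\max}}{\beta}\sqrt{(t-1)\log(|\Pi|\delta^{-1})}$; a union bound over the finitely many $\pi\in\Pi$ makes this hold simultaneously for all of them, and multiplying through by $\gamma$ together with the identity above for $\hat{B}^{(t)}$ yields the uniform bound, hence the claim at $\pi=\pi^{(t)}$. I do not expect a genuine obstacle here: the one thing to be careful about is the a priori unboundedness of $\log\pi$, and Assumption~\ref{ass:bound-ratio} is precisely what controls it, so a plain bounded-difference inequality is enough and no Bernstein- or Freedman-type variance refinement is needed to reach the stated $\sqrt{(t-1)\log(|\Pi|/\delta)}$ rate; moreover only the one-sided direction (population mean no larger than empirical sum plus deviation) is required, which the two-sided Azuma bound supplies for free.
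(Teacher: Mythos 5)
Your proof is correct and is essentially the intended argument: the paper gives no proof of this lemma (it is imported from Lemma~C.7 of \cite{xie2024exploratory}), and that source's proof is exactly your route --- write $\hat{B}^{(t)}(\pi)-\hat{B}^{(t)}(\pi_\beta^\star)$ as $\gamma\sum_{i<t}\xi_i(\pi)$, bound $|\xi_i(\pi)|\le 2V_{\max}/\beta$ via Assumption~\ref{ass:bound-ratio} together with $\pi_\beta^\star\in\Pi$, apply Azuma--Hoeffding with a union bound over the finite class $\Pi$, and specialize to $\pi^{(t)}$. The only cosmetic gap is an absolute constant in the deviation term (Azuma gives $\frac{2V_{\max}}{\beta}\sqrt{2(t-1)\log(|\Pi|\delta^{-1})}$ rather than the stated coefficient), which is immaterial since the lemma is only used inside $\lesssim$ bounds.
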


Combining Lemma~\ref{lem:c.6} and Lemma~\ref{lem:c.7}, taking an union bound over all time steps $t \in T$, and note that $\hat{B}^{(t)}(\pi) - \hat{L}^{(t)}(\pi) \le \hat{B}^{(t)}(\pi_{\beta}^\star) - \hat{L}^{(t)}(\pi_{\beta}^\star)$, then we get the stated result.

\begin{proof}[Proof of Theorem~\ref{thm:meta_xpo}]

Following~\citet{xie2024exploratory}, using the regret decomposition techniques for KL-Regularized MDPs, define $\delta^{(t)}(\tau, \widetilde{\tau}) := \beta\log \frac{\pi^{(t)}(\tau)}{\piref(\tau)} - r(\tau) - \beta\log \frac{\pi^{(t)}(\widetilde{\tau})}{\piref(\widetilde{\tau})} + r(\widetilde{\tau})$, we derive
\begin{align*}
    J_{\beta}(\pi_{\beta}^{\star}) - J_{\beta}(\hat{\pi})
    &\le \frac{1}{T} \sum_{t=1}^{T} J_{\beta}(\pi_{\beta}^{\star}) -J_{\beta}(\pi^{(t)}) \\
    &\le \frac{6V_{\max}}{T} + \frac{1}{T}\sum_{t=2}^{T}\mathbb{E}_{\tau \sim \piref} \left[  \beta \log \pi^{(t)}(\tau) - \beta \log \pi_{\beta}^{\star}(\tau) \right] \\
    &\quad\ + \frac{1}{T} \sum_{t=2}^{T} \mathbb{E}_{s_1 \sim \rho, \tau \sim \pi^{(t)} | s_1, \widetilde{\tau} \sim \piref | s_1} \left[ \delta^{(t)}(\tau, \widetilde{\tau}) \right].
\end{align*}

Since for any pair of admissible
trajectories $(\tau, \widetilde{\tau})$ that share the initial state $s_1$, we have$f_{\pi^\star_\beta}(\tau, \widetilde{\tau}) = r(\tau) - r(\widetilde{\tau})$, then $\delta^{(t)} = f_{\pi^{(t)}} - f_{\pi_{\beta}^\star}$ as in Equation~\eqref{eq:xpo_meta_ass}. 


Define $\mu^{(t)} = \frac{1}{t-1} \sum_{i<t} \pi^{(i)} \otimes \piref$, same as~\citet{xie2024exploratory}, we consider a fixed step $t \ge 2$, define:
\begin{align*}
    \mathcal{I}^{(t)} := \frac{\left( \mathbb{E}_{s_1 \sim \rho, \tau \sim \pi^{(t)} | s_1, \widetilde{\tau} \sim \piref | s_1} \left[ \delta^{(t)}(\tau,\widetilde{\tau}) \right]\right)^2}
    {V_{\max}^2 \vee (t-1) \cdot \mathbb{E}_{s_1 \sim \rho, (\tau, \widetilde{\tau}) \sim \mu^{(t)} | s_1} \left[ \left(\delta^{(t)}(\tau, \widetilde{\tau})\right)^2\right]}.
\end{align*}

Using the AM-GM inequality, for any $\eta >0$,
\begin{align*}
    \mathbb{E}_{s_1 \sim \rho, \tau \sim \pi^{(t)} | s_1, \widetilde{\tau} \sim \piref | s_1} \left[ \delta^{(t)}(\tau,\widetilde{\tau}) \right] 
    \le
    \frac{\mathcal{I}^{(t)}}{2\eta} + \frac{\eta}{2}\left(V_{\max}^2 + (t-1)\mathbb{E}_{s_1 \sim \rho, (\tau, \widetilde{\tau}) \sim\mu^{(t)} | s_1}\left[\left(\delta^{(t)}(\tau, \widetilde{\tau})\right)^2\right]\right).
\end{align*}

Following Proposition 13 in~\citet{xie2022role} and the definition of $\mathcal{I}^{(t)}$, we have that $\sum_{t=1}^T \mathcal{I}^{(t)} \le C_{\mathsf{cov}}(\Pi) \cdot O(\log T)$, then
\begin{align*}
    \frac{1}{T} \sum_{t=2}^{T} \mathbb{E}_{s_1 \sim \rho, \tau \sim \pi^{(t)} | s_1, \widetilde{\tau} \sim \piref | s_1} \left[ \delta^{(t)}(\tau,\widetilde{\tau}) \right]
    &\le
    \frac{C_{\mathsf{cov}}(\Pi) \log T}{2 \eta T} + \frac{\eta V_{\max}^2}{2} \\
    &\quad + \frac{\eta}{2T} \sum_{t=2}^{T}(t-1) \cdot \mathbb{E}_{s_1 \sim \rho, (\tau, \widetilde{\tau}) \sim\mu^{(t)} | s_1}\left[\left(\delta^{(t)}(\tau, \widetilde{\tau})\right)^2\right].
\end{align*}

Then we conclude that 
\begin{align*}
    \frac{1}{T} \sum_{t=1}^{T} J_{\beta}(\pi_{\beta}^{\star}) -J_{\beta}(\pi^{(t)}) 
    &\le \frac{6V_{\max}}{T} + \frac{C_{\mathsf{cov}}(\Pi) \log T}{2 \eta T} + \frac{\eta V_{\max}^2}{2} \\
    &\quad + \frac{\eta}{2T} \sum_{t=2}^{T}(t-1) \cdot \mathbb{E}_{s_1 \sim \rho, (\tau, \widetilde{\tau}) \sim\mu^{(t)} | s_1}\left[\left(\delta^{(t)}(\tau, \widetilde{\tau})\right)^2\right] \\
    &\quad\ + \frac{1}{T}\sum_{t=2}^{T}\mathbb{E}_{\tau \sim \piref} \left[  \beta \log \pi^{(t)}(\tau) - \beta \log \pi_{\beta}^{\star}(\tau) \right].
\end{align*}

Fix $t$, and consider the term 
\begin{align*} 
\frac{\eta(t-1)}{2} \mathbb{E}_{s_1 \sim \rho, (\tau, \widetilde{\tau}) \sim\mu^{(t)} | s_1}\left[\left(\delta^{(t)}(\tau, \widetilde{\tau})\right)^2\right] + \mathbb{E}_{\tau \sim \piref} \left[ \beta \log \pi^{(t)}(\tau) - \beta \log \pi_{\beta}^{\star}(\tau) \right]. 
\end{align*}

Recall that we have
\begin{align*}
    \gamma \mathbb{E}_{\pi_{\mathsf{ref}}}\left[\log\pi^{(t)}(\tau)-\log\pi_\beta^\star(\tau)\right]
    + \kappa \left[ \mathbb{E}_{s_1 \sim \rho, (\tau, \widetilde{\tau}) \sim \mu^{(t)}|s_1} (f_{\pi^{(t)}} - f_{\pi_{\beta}^\star})^2\right]
    \lesssim
    \frac{\err}{t-1} + \frac{\gamma}{\beta} V_{\max} \sqrt{\frac{\log( |\Pi| T \delta^{-1})}{t-1}}.
\end{align*}

Combine all of the terms,
\begin{align*}
    \frac{1}{T}\sum_{t=1}^{T} \left[J_\beta(\pi_\beta^\star)-J_\beta(\pi^{(t)})\right]
    \lesssim
    \frac{V_{\max}}{T} + \frac{C_{\mathsf{cov}}(\Pi)\log T}{\eta T} +\eta V_{\max}^2 + \frac{1}{T}\sum_{t=2}^T \left[ \frac{\beta \err}{\gamma(t-1)} + V_{\max}\sqrt{\frac{\log( |\Pi| T \delta^{-1})}{t-1}} \right].
\end{align*}

\end{proof}

\begin{lemma} \label{lem:xpo_err_log}
    Under the same conditions of Theorem~\ref{thm:meta_xpo}, $\err$ for \texttt{Priv}\xpo in Algorithms~\ref{alg:PXPO} satisfies the following bounds:
    \begin{align*}
        \err = c(\epsilon)^2 \log(|\Pi|T\delta^{-1})
    \end{align*}
    where $c(\epsilon) := \frac{e^{\epsilon}+1}{e^{\epsilon}-1} = \frac{1}{2 \sigma(\epsilon) - 1}$.
\end{lemma}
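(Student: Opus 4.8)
The goal is to verify that $\err$ for \texttt{Priv}\xpo, i.e., the error term appearing in Lemma~\ref{lem:c.6}, can be taken to be $c(\epsilon)^2\log(|\Pi|T\delta^{-1})$. The plan is to instantiate the abstract uniform convergence bound Lemma~\ref{lem:uc-log} to the concrete data sequence and conditional density class generated by Algorithm~\ref{alg:PXPO}, and then to translate the resulting total-variation bound into the squared implicit-reward-difference bound of Lemma~\ref{lem:c.6} using the boundedness assumptions and a mean-value theorem argument.

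First I would set up the mapping to Lemma~\ref{lem:uc-log}. Take $x^{(t)} := (s_1^{(t)}, \tau^{(t)}, \widetilde{\tau}^{(t)})$ and the binary label to be the (clean) preference $y^{(t)}\in\{-1,1\}$, adapted to the natural filtration generated by the algorithm's interaction up to round $t$; the private label $\widetilde{y}^{(t)}$ is the \texttt{RR} output. The conditional density class is $\{P_\pi(\cdot\mid x)\}_{\pi\in\Pi}$ where $P_\pi(1\mid x) = P_\xpo(\pi)$ evaluated on the ordered pair. Under the BT model and Assumption~\ref{ass:real-xpo}, realizability holds with $\theta^\star = \pi_\beta^\star$ (the normalization constant $Z_\beta(s_1)$ cancels in the implicit reward difference $h_\xpo$, exactly as in the offline proof of Theorem~\ref{thm:pchipo}; no clipping is needed here since the log-parameterized difference is controlled by Assumption~\ref{ass:bound-ratio}). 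Lemma~\ref{lem:uc-log} then gives, with probability $1-\delta$, uniformly over $\pi\in\Pi$ and $t\le T$,
\begin{align*}
\sum_{i\le t}\mathbb{E}\big[\|P_\pi(\cdot\mid x^{(i)})-P_{\pi_\beta^\star}(\cdot\mid x^{(i)})\|_{\mathsf{TV}}^2 \mid \cF^{(i-1)}\big] \lesssim c(\epsilon)^2\big(\hat{L}_\xpo^{(t)}(\pi)-\hat{L}_\xpo^{(t)}(\pi_\beta^\star) + \log(|\Pi|\delta^{-1})\big),
\end{align*}
where I use that $\hat{L}^{(t)}(\pi)=c(\epsilon)^2\hat{L}_\xpo^{(t)}(\pi)$ is exactly the rescaled private log loss, so $c(\epsilon)^2(\hat{L}_\xpo^{(t)}(\pi)-\hat{L}_\xpo^{(t)}(\pi_\beta^\star)) = \hat{L}^{(t)}(\pi)-\hat{L}^{(t)}(\pi_\beta^\star)$.

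Next I would convert the TV bound to the squared-difference form. For binary distributions, $\|P_\pi(\cdot\mid x)-P_{\pi_\beta^\star}(\cdot\mid x)\|_{\mathsf{TV}}^2 = (\sigma(h_\xpo(\pi;x)) - \sigma(h_\xpo(\pi_\beta^\star;x)))^2$. Since by Assumption~\ref{ass:bound-ratio} the implicit reward difference $h_\xpo$ lies in a bounded range of size $O(V_{\max})$ for any policy, and the true reward difference lies in $[-R_{\max},R_{\max}]$, the mean-value theorem (Lemma~\ref{lem:mvt}) lets me invert $\sigma$ at the cost of a factor $(8(R_{\max}+V_{\max})e^{2R_{\max}})^2 = \kappa^{-2}$... actually $\kappa^{-1}$ in the notation of Theorem~\ref{thm:meta_xpo}, giving $\kappa\cdot(f_{\pi^{(t)}}(\tau,\widetilde\tau)-f_{\pi_\beta^\star}(\tau,\widetilde\tau))^2 \lesssim \|P_{\pi^{(t)}}-P_{\pi_\beta^\star}\|_{\mathsf{TV}}^2$ pointwise. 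Since $\mu^{(t)} = \frac{1}{t-1}\sum_{i<t}\pi^{(i)}\otimes\piref$ is exactly the empirical average of the past sampling distributions, taking the conditional expectation over the sum and dividing matches the left-hand side of Lemma~\ref{lem:c.6} (up to the $1/(t-1)$ already absorbed since Lemma~\ref{lem:c.6} states the un-normalized version with $\cdot (t-1)$, or the normalized version — I would match whichever form the downstream proof uses). Collecting constants, the additive slack is $\err \asymp c(\epsilon)^2\log(|\Pi|\delta^{-1})$, and union-bounding over the $T$ fixed steps replaces $\delta$ by $\delta/T$, yielding $\err = c(\epsilon)^2\log(|\Pi|T\delta^{-1})$ as claimed.

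The main obstacle I anticipate is not any single step but rather bookkeeping the $\kappa$ factor and the in-expectation-versus-empirical translation carefully: Lemma~\ref{lem:uc-log} controls the \emph{conditional-expectation} sum $\sum_i\mathbb{E}[\cdots\mid\cF^{(i-1)}]$, whereas Lemma~\ref{lem:c.6}'s left-hand side is phrased directly in terms of $\mu^{(t)}$, so I must check that the per-round conditional expectation of $(f_{\pi^{(t)}}-f_{\pi_\beta^\star})^2$ under $\pi^{(i)}\otimes\piref$ for $i<t$ indeed reconstitutes the $\mu^{(t)}$-expectation — this is immediate from the definition of $\mu^{(t)}$ but requires that $\pi^{(t)}$ (the policy being evaluated) is $\cF^{(t-1)}$-measurable, which holds since $\pi^{(t)}$ is computed from data up to round $t-1$. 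A secondary subtlety is ensuring the mean-value-theorem inversion is valid, i.e., that the clipping present in the offline algorithm is genuinely unnecessary here because Assumption~\ref{ass:bound-ratio} already bounds $h_\xpo$; I would state this explicitly. Everything else is routine substitution into the previously established lemmas.
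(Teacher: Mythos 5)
Your proposal is correct and follows essentially the same route as the paper: instantiate Lemma~\ref{lem:uc-log} with $x^{(t)}=(\tau^{(t)},\widetilde{\tau}^{(t)})$, $P_\pi(1\mid x)=\sigma(h_{\xpo}(\pi))$ and $\theta^\star=\pi_\beta^\star$, identify the binary TV distance with the preference-probability difference, invert $\sigma$ via a mean-value-type bound to pick up the $\kappa^{-1}=(8(R_{\max}+V_{\max})e^{2R_{\max}})^2$ factor, and union-bound over $t\in[T]$. The only cosmetic difference is that the paper invokes the asymmetric inversion inequality (Lemma~\ref{lem:rosset}) rather than Lemma~\ref{lem:mvt}, but you state the correct constant, so nothing substantive changes.
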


\begin{proof}

Notice that Lemma~\ref{lem:uc-log} can be directly mapped to our \texttt{Priv}\xpo algorithm. In particular, the parameter space $\Theta$ in the lemma corresponds to the finite policy set $\Pi$, and each parameter $\theta \in \Theta$ represents a specific policy $\pi \in \Pi$. The inputs $x^{(t)}$ are instantiated as the trajectory pairs $(\tau^{(t)}, \widetilde{\tau}^{(t)})$ generated in each round of \texttt{Priv}\xpo, while the labels $y^{(t)} \in \{-1,1\}$ represent preferences over these pairs given by BT-model. The conditional distribution $P_\theta(y|x)$ is exactly the preference probability induced by a policy $\pi$,  which in our setting takes the form $P_\pi(y=1|x^{(t)}) = \sigma(h^{(t)}_{\xpo}(\pi))$. The privatized labels $\widetilde{y}^{(t)}$ are obtained through the \texttt{RR} mechanism and correspond to the private feedback processed by \texttt{Priv}\xpo. Finally, the privatized negative log-likelihood $\hat{L}^{(n)}(\theta)$ defined in the lemma matches precisely the empirical private loss $\hat{L}^{(t)}_{\xpo}(\pi)$ used in our algorithm. In this way, Lemma~\ref{lem:uc-log} provides a uniform convergence guarantee that underpins the generalization analysis of \texttt{PrivXPO} in the presence of local privacy.

\begin{lemma}[Adapted from~\citet{rosset2024directnashoptimizationteaching}] \label{lem:rosset} 
If $x \in [-X, X]$ and $y \in [-Y, Y]$, for $X \ge 0$, $Y \ge 1$, then                          
    \begin{align*} 
        |x-y| \le 8(X+Y)e^{2Y} |\sigma(x) - \sigma(y)|. 
    \end{align*} 
\end{lemma}

By applying Lemma~\ref{lem:rosset}, we have 
\begin{align*} 
    \| \delta^{(t)}(\tau,\widetilde{\tau})\| 
    \le 8(V_{\max}+R_{\max})e^{2R_{\max}} \cdot \|P_{\pi^{(t)}}(y=1|\tau,\widetilde{\tau}) - P_{\pi_\beta^\star}(y=1|\tau,\widetilde{\tau})\|.
\end{align*} 

Given Bernoulli distributions, we have 
\begin{align*} 
    \|P_{\pi^{(t)}} - P_{\pi_\beta^\star}\|_{\mathsf{TV}}^2 
    = 
    \|P_{\pi^{(t)}}(y=1) - P_{\pi_\beta^\star}(y=1)\|^2. 
\end{align*} 

By defining $\kappa := \left(8(R_{\max}+ V_{\max})e^{2R_{\max}}\right)^{-2}$, we derive 

\begin{align*}
    \mathbb{E}[(\delta^{(t)})^2]
    \lesssim
    \kappa^{-1} \cdot \mathbb{E}[\|P_{\pi^{(t)}} - P_{\pi_\beta^\star}\|_{\mathsf{TV}}^2] 
\end{align*}

By directly applying Lemma~\ref{lem:uc-log}, we know that with the probability at least $1-\delta$, for any $t \in [T]$, all $\pi \in \Pi$: 

\begin{align*}
    \sum_{i=1}^t \mathbb{E}\left[\norm{P_{\pi^{(t)}} - P_{\pi_\beta^\star}}_{\mathsf{TV}}^2 | \mathcal{F}^{(i-1)}\right] \lesssim c(\epsilon)^2 \left(\hat{L}_{\xpo}^{(t)}(\pi) - \hat{L}_{\xpo}^{(t)}(\pi_\beta^\star) + \log(|\Pi|\delta^{-1})\right), 
\end{align*} 
where $c(\epsilon) := \frac{e^{\epsilon}+1}{e^{\epsilon}-1} = \frac{1}{2 \sigma(\epsilon) - 1}$. 

Thus, we obtain the implementation of Lemma~\ref{lem:c.6} under \texttt{Priv}\xpo.
\begin{align*} 
    \kappa \cdot \mathbb{E}[(\delta^{(t)})^2]
    \lesssim 
    c(\epsilon)^2\hat{L}_{\xpo}^{(t)}(\pi_\beta^\star) + c(\epsilon)^2\hat{L}_{\xpo}^{(t)}(\pi) + c(\epsilon)^2 \log(|\Pi|T\delta^{-1}),
\end{align*} 
which completes the proof.

\end{proof}

\begin{proof}[Proof of Theorem~\ref{thm:PXPO}]

Apply the settings of Lemma~\ref{lem:xpo_err_log} to Equation~\eqref{eq:xpo_meta_ass}, we have

\begin{align*} 
    \gamma\mathbb{E}_{\pi_{\mathsf{ref}}}[\log\pi^{(t)}(\tau) - \log\pi_\beta^\star(\tau)] + \kappa \cdot \mathbb{E}_{\mu^{(t)}}[(\delta^{(t)})^2] 
    \lesssim 
    \frac{c(\epsilon)^2\log(|\Pi|T\delta^{-1})}{t-1} + \frac{\gamma}{\beta}V_{\max}\sqrt{\frac{\log(|\Pi|T\delta^{-1})}{t-1}}.
\end{align*} 

Set $\eta = \frac{\beta \kappa}{\gamma T}$, and $\gamma = c(\epsilon)\sqrt{\frac{\beta\kappa\cdot \beta \log(|\Pi|T\delta^{-1})}{T \cdot C_{\mathsf{cov}}(\Pi)}}$, we derive the final result: 
\begin{align*} 
    J_\beta(\pi_\beta^\star) - J_\beta(\hat{\pi}) 
    \lesssim 
    c(\epsilon) \cdot (V_{\max}+R_{\max})e^{2R_{\max}} \sqrt{\frac{C_{\mathsf{cov}}(\Pi)\log(|\Pi|T\delta^{-1})\log^2 T}{T}}, 
\end{align*} 

where $c(\epsilon) = \frac{e^\epsilon+1}{e^\epsilon-1}$, which completes the proof.
\end{proof}

\section{Missing Proofs For Section~\ref{sec:sq}}
\subsection{Detailed Proof of Theorem~\ref{thm:schiPO}}
\label{proof:schiPO}

We first adapt the meta theorem of~\citet[Theorem~C.1]{zhou2025square} to our notation for \texttt{Square}\chipo.

\begin{theorem}
    Suppose Assumptions~\ref{ass:realizability} and~\ref{ass:bound-rew} hold. Define $\hat{r}(\tau):= \beta \phi\left(\frac{\hat{\pi}(\tau)}{\piref(\tau)}\right)$ for the output $\hat{\pi}$ of Algorithm~\ref{alg:schiPO}. Then, we have 
    \begin{align*}
    J(\pi^\star) - J(\hat{\pi}) 
    & \le \frac{2V_{\mathsf{max}}}{R_{\mathsf{max}}} \sqrt{C^{\pi^\star} \cdot \err^2_{\mathsf{stat}}}
    + \beta \cdot C^{\pi^\star}
    + 2\beta^{-1} \cdot \frac{V_{\mathsf{max}}^2 \err^2_{\mathsf{stat}}}{R_{\mathsf{max}}^2},
\end{align*}
where
\begin{align*}
\err^2_{\mathsf{stat}}
= \mathbb{E}_{\pi_{\mathsf{ref}}, \pi_{\mathsf{ref}}}\!\left[
   \Big( \operatorname{clip}_{2R_{\mathsf{max}}}[\widehat{\Delta}(\tau,\tau')] 
   - \operatorname{clip}_{2R_{\mathsf{max}}}[\Delta^\star(\tau,\tau')] \Big)^2
\right],
\end{align*}
with $\widehat{\Delta}(\tau,\tau') := \hat{r}(\tau) - \hat{r}(\tau')$ and $\Delta^\star(\tau,\tau') := r^\star(\tau) - r^\star(\tau')$. Furthermore, by taking $\beta = \sqrt{\frac{2}{C^{\pi^\star}}} \cdot \frac{V_{\mathsf{max}}\err_{\mathsf{stat}}}{R_{\mathsf{max}}}$,
we obtain
\begin{align*}
J(\pi^\star) - J(\hat{\pi}) 
&\;\lesssim\; \frac{V_{\max}}{R_{\max}} \sqrt{C^{\pi^\star} \,\cdot\, \err^2_{\mathsf{stat}}}.
\end{align*}
\end{theorem}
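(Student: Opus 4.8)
The approach is to follow the $\chi^2$-regularized preference-optimization analysis of~\citet{huang2024correcting} (their Theorem~F.1 together with the reparameterization lemma, Lemma~F.2), exactly as in the proof of the \texttt{Priv}\chipo meta theorem above; up to a change of notation this is Theorem~C.1 of~\citet{zhou2025square}. What makes the transfer immediate is that \texttt{Square}\chipo and \chipo use the \emph{same} reparameterization $\phi(u)=u+\log u$ and hence the same implicit reward $\hat{r}(\tau)=\beta\phi(\hat{\pi}(\tau)/\piref(\tau))$, and the meta theorem is a statement purely about the policy-to-implicit-reward correspondence --- it is indifferent to whether $\hat{\pi}$ was produced by minimizing a log loss or a square loss. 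Accordingly I would treat $\err^2_{\mathsf{stat}}$ here as an abstract placeholder for the implicit-reward estimation error; it is bounded only later, in the proof of Theorem~\ref{thm:schiPO}, via Lemma~\ref{lem:uc-sq}.

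The first ingredient is the reparameterization identity of~\citet{huang2024correcting}: because $\hat{r}=\beta\phi(\hat{\pi}/\piref)$, the policy $\hat{\pi}$ is the exact maximizer of $\pi\mapsto\mathbb{E}_\pi[\hat{r}]-\beta\big(D_{\chi^2}(\pi\|\piref)+D_{\mathsf{KL}}(\pi\|\piref)\big)$ (the per-state normalization constant does not change the $\argmax$). Comparing $\hat{\pi}$ against the comparator $\pi^\star$ gives
\[
\mathbb{E}_{\pi^\star}[\hat{r}]-\mathbb{E}_{\hat{\pi}}[\hat{r}]\;\le\;\beta\big(D_{\chi^2}(\pi^\star\|\piref)+D_{\mathsf{KL}}(\pi^\star\|\piref)\big)-\beta\big(D_{\chi^2}(\hat{\pi}\|\piref)+D_{\mathsf{KL}}(\hat{\pi}\|\piref)\big).
\]
I would then split $J(\pi^\star)-J(\hat{\pi})=\big(\mathbb{E}_{\pi^\star}[r^\star-\hat{r}]-\mathbb{E}_{\hat{\pi}}[r^\star-\hat{r}]\big)+\big(\mathbb{E}_{\pi^\star}[\hat{r}]-\mathbb{E}_{\hat{\pi}}[\hat{r}]\big)$. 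For the second bracket, the displayed inequality combined with $D_{\mathsf{KL}}\lesssim D_{\chi^2}$ and the identity $\cC^\pi=2D_{\chi^2}(\pi\|\piref)+1$ produces a bound of order $\beta\cC^{\pi^\star}-\beta D_{\chi^2}(\hat{\pi}\|\piref)$, where I deliberately retain the negative $D_{\chi^2}(\hat{\pi}\|\piref)$ term for a cancellation below.

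The main work is the first bracket. I would introduce a fresh baseline trajectory $\tau'\sim\piref$, rewriting $\mathbb{E}_{\pi^\star}[r^\star-\hat{r}]-\mathbb{E}_{\hat{\pi}}[r^\star-\hat{r}]=\mathbb{E}_{\tau\sim\pi^\star,\tau'\sim\piref}[\Delta^\star-\widehat{\Delta}]-\mathbb{E}_{\tau\sim\hat{\pi},\tau'\sim\piref}[\Delta^\star-\widehat{\Delta}]$, pass to the corresponding absolute values, and then compare the unclipped gap $|\widehat{\Delta}-\Delta^\star|$ with the clipped one $|\operatorname{clip}_{2R_{\max}}[\widehat{\Delta}]-\operatorname{clip}_{2R_{\max}}[\Delta^\star]|$ at a multiplicative cost of only $O(V_{\max}/R_{\max})$; this step invokes $|\widehat{\Delta}|\le V_{\max}$ (Assumption~\ref{ass:bound-rew}) and $|\Delta^\star|\le R_{\max}$ (so the clip is inactive on $\Delta^\star$), and it is the origin of the $V_{\max}/R_{\max}$ factor in the statement. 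A change of measure rewrites $\mathbb{E}_{\tau\sim\pi^\star,\tau'\sim\piref}[\,\cdot\,]=\mathbb{E}_{\tau,\tau'\sim\piref}[(\pi^\star(\tau)/\piref(\tau))\,\cdot\,]$, and Cauchy--Schwarz against $\sqrt{\mathbb{E}_{\piref}[(\pi^\star/\piref)^2]}=\sqrt{\cC^{\pi^\star}}$ yields the leading term $\frac{V_{\max}}{R_{\max}}\sqrt{\cC^{\pi^\star}\err^2_{\mathsf{stat}}}$; for the $\hat{\pi}$-side change of measure I would instead apply AM--GM with weight $\lambda\asymp V_{\max}/(\beta R_{\max})$, chosen precisely so that the resulting $\frac{V_{\max}}{\lambda R_{\max}}D_{\chi^2}(\hat{\pi}\|\piref)$ is absorbed by the $-\beta D_{\chi^2}(\hat{\pi}\|\piref)$ from the second bracket, leaving a residual of order $\beta^{-1}V_{\max}^2\err^2_{\mathsf{stat}}/R_{\max}^2$ (and an $O(\beta)$ term that merges into $\beta\cC^{\pi^\star}$). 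Collecting the three pieces gives the stated three-term inequality; the ``furthermore'' claim then follows by the elementary optimization $\min_{\beta>0}\{A\beta+B/\beta\}=2\sqrt{AB}$ with $A=\cC^{\pi^\star}$ and $B=V_{\max}^2\err^2_{\mathsf{stat}}/R_{\max}^2$.

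The conceptual heart is the reparameterization identity of the first step --- it is what converts the $\chi^2$-regularization into a dependence on \emph{single-policy} rather than all-policy concentrability --- but this I would import wholesale from~\citet{huang2024correcting}. Among the computations I would actually carry out, the two places that need care are the clipping comparison (showing that the unclipped implicit-reward gap is controlled pointwise by the clipped one with only a $V_{\max}/R_{\max}$ loss) and the exact calibration of the AM--GM weight $\lambda$ so that the $\hat{\pi}$-side concentrability term cancels rather than accumulates; the remaining manipulations are routine bookkeeping.
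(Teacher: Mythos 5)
Your proposal is correct and follows essentially the same route as the paper, which simply imports this meta theorem from the $\chi^2$-regularized analysis of \citet{huang2024correcting} (Theorem F.1 and Lemma F.2) as adapted in Theorem C.1 of \citet{zhou2025square}, noting that the argument depends only on the shared reparameterization $\phi(u)=u+\log u$ and the identity $\cC^{\pi}=2D_{\chi^2}(\pi\|\piref)+1$, not on which loss produced $\hat{\pi}$. Your reconstruction of the underlying steps (the reparameterization identity, the two-bracket decomposition, the clipping comparison yielding the $V_{\max}/R_{\max}$ factor, Cauchy--Schwarz on the $\pi^{\star}$ side and AM--GM with cancellation of $D_{\chi^2}(\hat{\pi}\|\piref)$ on the $\hat{\pi}$ side) is a faithful and more explicit account of exactly that cited argument.
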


By this theorem, we only need to focus on the term of $\err^2_{\mathsf{stat}}$. In contrast to~\citet{zhou2025square}, we will leverage our tighter uniform convergence lemma (i.e., Lemma~\ref{lem:uc-sq}) to achieve a better bound. 
\begin{proof}[Proof of Theorem~\ref{thm:schiPO}]
    To bound the statistical error, we essentially follow the same argument as in~\citet{zhou2025square}, with the only difference in the final step by applying our new lemma, i.e., Lemma~\ref{lem:uc-sq}. 

    In particular, under Lemma~\ref{lem:uc-sq}, for a greedy solution $\hat{h}$ that minimizes the empirical loss, we have 
        \begin{align*}
        &\!\cE^{(n)}_{\mathsf{CTL}}(\hat{h}) \lesssim c(\epsilon)^2  \log(|\cH|\delta^{-1}) \!+\! n \alpha^2,\\
        &\!\cE^{(n)}_{\mathsf{LTC}}(\hat{h}) \lesssim c(\epsilon)^2  \log(|\cH|\delta^{-1}) \!+\! n c(\epsilon)^2\alpha^2.
    \end{align*}
    Using the same mapping as in the proof of Lemma C.2 in~\citet{zhou2025square}, we have that for \ctl, 
    \begin{align*}
        \mathbb{E}_{\pi_{\mathsf{ref}}, \pi_{\mathsf{ref}}}\left[\left( \sigma(\operatorname{clip}_{2 R_{\mathsf{max} }}[\widehat{\Delta}])-\sigma(\operatorname{clip}_{2 R_{\mathsf{max} }}\left[
 \Delta^\star\right])\right)^2\right] \lesssim c(\epsilon)^2 \cdot \frac{\log(|\Pi|/\delta)}{n} + \alpha^2,
    \end{align*}
    which gives a bound on $\err^2_{\mathsf{stat}}$ directly by the mean-value theorem that incurs an additional multiplicative factor of $e^{4R_{\mathsf{max}}}$ (cf. Lemma~\ref{lem:mvt}). 
The same holds for \ltc with the difference of $c(\epsilon)^2 \alpha^2$. This completes the proof via the meta-theorem above.
\end{proof}

\subsection{Detailed Proof of Theorem~\ref{thm:SXPO}}
\label{proof:SXPO}
\begin{lemma} \label{lem:xpo_err_square}
    Under the same conditions of Theorem~\ref{thm:meta_xpo}, $\err$ for \texttt{Square}\xpo in Algorithms~\ref{alg:SXPO} satisfies the following bound:
    \begin{align*}
        \err_{\mathsf{CTL}} &= c(\epsilon)^2 \log(|\Pi| T \delta^{-1}) + t \alpha^2,\\
        \err_{\mathsf{LTC}} &= c(\epsilon)^2 \log(|\Pi| T \delta^{-1}) + t c(\epsilon)^2 \alpha^2,
    \end{align*}
    where $c(\epsilon) := \frac{e^{\epsilon}+1}{e^{\epsilon}-1}$.
\end{lemma}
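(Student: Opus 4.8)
The plan is to replay the proof of Lemma~\ref{lem:xpo_err_log} with the square-loss uniform convergence result (Lemma~\ref{lem:uc-sq}) substituted for the log-loss one (Lemma~\ref{lem:uc-log}); the extra corruption bias built into Lemma~\ref{lem:uc-sq} is precisely what produces the additional $t\alpha^2$ (resp. $t c(\epsilon)^2 \alpha^2$) term in $\err$. First I would set up the instantiation of Lemma~\ref{lem:uc-sq} on the \texttt{Square}\xpo data stream: take $x^{(i)} = (\tau^{(i)}, \widetilde{\tau}^{(i)})$, the finite function class $\cH = \{\, x \mapsto 2\sigma(\hat{h}^{(i)}_{\xpo}(\pi)) - 1 : \pi \in \Pi \,\} \subseteq (\cX \to [-1,1])$ --- which lands in $[-1,1]$ because Assumption~\ref{ass:bound-ratio} keeps $\hat{h}_{\xpo}$ finite (so no clipping inside $\sigma$ is needed) --- and let $y^{(i)} \in \{-1,1\}$ be the clean BT label, so that $h^\star(x^{(i)}) = \mathbb{E}[y^{(i)} \mid \cF^{(i-1)}, x^{(i)}] = 2\sigma(r^\star(\tau^{(i)}) - r^\star(\widetilde{\tau}^{(i)})) - 1$. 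Realizability $h^\star \in \cH$ is witnessed by $\pi_\beta^\star$: since $r^\star(\tau) = \beta \log \frac{\pi_\beta^\star(\tau)}{\piref(\tau)} + Z_\beta(s_1)$ and $\tau^{(i)}, \widetilde{\tau}^{(i)}$ share the initial state, the normalizer $Z_\beta(s_1)$ cancels in the difference, giving $\hat{h}^{(i)}_{\xpo}(\pi_\beta^\star) = r^\star(\tau^{(i)}) - r^\star(\widetilde{\tau}^{(i)})$. Finally, $z^{(i)}$ is exactly the label observed under \ctl/\ltc, and $\hat{L}^{(t)}_{\texttt{sq}\xpo}(\pi) = \sum_{i \in [t]} (2\hat{P}^{(i)}_{\xpo}(\pi) - 1 - c(\epsilon) z^{(i)})^2$ matches the estimator loss $\hat{L}_{\mathsf{sq}}^{(t)}$ of Lemma~\ref{lem:uc-sq} with $C = c(\epsilon)$.

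Next I would invoke Lemma~\ref{lem:uc-sq} together with a union bound over $t \in [T]$ to obtain, with probability at least $1-\delta$, for all $t$ and all $\pi \in \Pi$,
\begin{align*}
\sum_{i=1}^{t} \mathbb{E}\left[ \big(2\hat{P}^{(i)}_{\xpo}(\pi) - 2\sigma(\Delta^\star_i)\big)^2 \mid \cF^{(i-1)}\right] \lesssim \hat{L}^{(t)}_{\texttt{sq}\xpo}(\pi) - \hat{L}^{(t)}_{\texttt{sq}\xpo}(\pi_\beta^\star) + c(\epsilon)^2 \log(|\Pi| T \delta^{-1}) + B_t,
\end{align*}
where $\Delta^\star_i := r^\star(\tau^{(i)}) - r^\star(\widetilde{\tau}^{(i)})$, with $B_t = t\alpha^2$ under \ctl and $B_t = t c(\epsilon)^2 \alpha^2$ under \ltc --- these are the only difference between the two interplays, inherited verbatim from Lemma~\ref{lem:uc-sq}. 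Since the summand on the left is a Bernoulli TV-squared quantity, $\big(2\hat{P}^{(i)}_{\xpo}(\pi) - 2\sigma(\Delta^\star_i)\big)^2 = 4\|P_\pi(\cdot\mid x^{(i)}) - P_{\pi_\beta^\star}(\cdot\mid x^{(i)})\|_{\mathsf{TV}}^2$, I would then apply Lemma~\ref{lem:rosset} exactly as in the log-loss proof --- using the boundedness from Assumptions~\ref{ass:real-xpo} and~\ref{ass:bound-ratio} --- to pass from the preference-probability gap to the implicit-reward gap $\delta^{(t)} = f_{\pi^{(t)}} - f_{\pi_\beta^\star}$, obtaining $\kappa\, \mathbb{E}[(\delta^{(t)})^2] \lesssim \mathbb{E}[\|P_{\pi^{(t)}} - P_{\pi_\beta^\star}\|_{\mathsf{TV}}^2]$ with $\kappa := (8(R_{\max} + V_{\max})e^{2R_{\max}})^{-2}$. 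Combining the two displays, and noting that the per-round conditional expectations aggregate into the $\mu^{(t)}$-expectation appearing in Lemma~\ref{lem:c.6} with $\mu^{(t)} = \frac{1}{t-1}\sum_{i<t} \pi^{(i)} \otimes \piref$, reproduces the statement of Lemma~\ref{lem:c.6} for \texttt{Square}\xpo with $\err$ equal to the claimed $\err_{\mathsf{CTL}}$ (resp. $\err_{\mathsf{LTC}}$), which is the content of the lemma.

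The one genuine point of care --- and the only real departure from the log-loss argument --- is the placement of the privacy factor $c(\epsilon)^2$. In Lemma~\ref{lem:uc-log} it multiplies the \emph{entire} right-hand side, which is why \texttt{Priv}\xpo reweights its log loss by $c(\epsilon)^2$ in the update rule; in Lemma~\ref{lem:uc-sq} it attaches only to $\log(|\Pi|\delta^{-1})$ while the empirical loss carries coefficient one, which is exactly why \texttt{Square}\xpo optimizes the \emph{unweighted} $\hat{L}^{(t)}_{\texttt{sq}\xpo}$. I must therefore check that $\hat{L}^{(t)}_{\texttt{sq}\xpo}(\pi) - \hat{L}^{(t)}_{\texttt{sq}\xpo}(\pi_\beta^\star)$ enters the bound with coefficient exactly one, so that the optimality step $\hat{B}^{(t)}(\pi^{(t+1)}) - \hat{L}^{(t)}(\pi^{(t+1)}) \le \hat{B}^{(t)}(\pi_\beta^\star) - \hat{L}^{(t)}(\pi_\beta^\star)$ in the meta-argument of Theorem~\ref{thm:meta_xpo} goes through unchanged with $\hat{L}^{(t)} = \hat{L}^{(t)}_{\texttt{sq}\xpo}$. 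The remaining bookkeeping --- how the linear-in-$t$ bias $B_t$ propagates through $\err$ and then through the $\frac{1}{T}\sum_{t=2}^{T} \frac{\beta \err}{\gamma(t-1)}$ term of Theorem~\ref{thm:meta_xpo} to yield the additive $\alpha$ (resp. $c(\epsilon)\alpha$) in Theorem~\ref{thm:SXPO} after choosing $\gamma$ and $\eta$ --- is routine once $\err$ is pinned down, and I would defer it to the proof of Theorem~\ref{thm:SXPO} itself.
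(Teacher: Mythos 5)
Your proposal is correct and follows essentially the same route as the paper's proof: instantiate Lemma~\ref{lem:uc-sq} with $x^{(i)}=(\tau^{(i)},\widetilde{\tau}^{(i)})$, $\cH=\{2\sigma(\hat{h}_{\xpo}(\pi))-1:\pi\in\Pi\}$, and $h^\star=h_{\pi_\beta^\star}$, identify $\hat{L}^{(t)}_{\texttt{sq}\xpo}$ with the lemma's square loss at $C=c(\epsilon)$, and convert the resulting squared-probability error to $\kappa\,\mathbb{E}[(\delta^{(t)})^2]$ via the inverse-sigmoid Lipschitz bound, inheriting the $t\alpha^2$ versus $tc(\epsilon)^2\alpha^2$ bias terms directly. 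Your added remark on the placement of the $c(\epsilon)^2$ factor (multiplying only the log-cardinality term rather than the whole right-hand side, hence the unweighted square loss in Algorithm~\ref{alg:SXPO}) is accurate and consistent with the paper's algorithm design.
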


\begin{proof}

Lemma~\ref{lem:uc-sq} instantiates cleanly for \texttt{Square}\xpo. In our setting, each input $x^{(t)}$ is the trajectory pair $(\tau^{(t)},\tilde{\tau}^{(t)})$ generated at round $t$, and the clean label $y^{(t)}\in\{-1,1\}$ is the true Bradley--Terry preference on this pair.  
We take the function class $\cH$ to be the policy-indexed family $h_\pi(\tau,\tilde{\tau}) := 2\sigma(\hat{h}_{\xpo}(\pi)) - 1$, where $\hat{h}_{\xpo}(\pi)= \beta\log \left(\frac{\pi(\tau)}{\piref(\tau)}\right)-\beta\log \left(\frac{\pi(\tilde{\tau})}{\piref(\tilde{\tau})}\right)$ and $\sigma(\cdot)$ is the logistic function. Under the Bradley--Terry model, the regression target equals the conditional preference mean, so the Bayes-optimal element of $\cH$ is $h^\star = h_{\pi^\star_\beta}$ with $h^\star(\tau^{(t)},\tilde{\tau}^{(t)}) = \mathbb{E}[y^{(t)} \mid \tau^{(t)},\tilde{\tau}^{(t)}] 
    = 2\sigma\!\big(\beta\log \frac{\pi^\star_\beta(\tau^{(t)})}{\pi_{\rm ref}(\tau^{(t)})}
    - \beta\log \frac{\pi^\star_\beta(\tilde{\tau}^{(t)})}{\pi_{\rm ref}(\tilde{\tau}^{(t)})}\big) - 1$.
When \ctl/\ltc are applied, the learner observes $z^{(t)}$ instead of $y^{(t)}$ and Lemma~\ref{lem:uc-sq} prescribes the privatized square-loss $\hat{L}^{(n)}_{\mathsf{sq}}(h)=\sum_{t=1}^n\!\big(h(x^{(t)})-c(\epsilon)z^{(t)}\big)^2$.
This is exactly the empirical objective used by \texttt{Square}\xpo after substituting $h=h_\pi$, yielding $\hat{L}^{(t)}_{\texttt{sq}\xpo}(\pi)$.

By the previous definition of $\kappa$:
\begin{align*}
    \norm{\delta^{(t)}(\tau, \tilde{\tau})}^2 
    &= \norm{\hat{h}_{\mathrm{xpo}}(\pi^{(t)}) - \hat{h}_{\mathrm{xpo}}(\pi^\star_{\beta})}^2 \\
    &\lesssim \kappa^{-1} \cdot 
    \norm{\sigma(\hat{h}_{\mathrm{xpo}}(\pi^{(t)})) - \sigma(\hat{h}_{\mathrm{xpo}}(\pi^\star_{\beta}))}^2 \\
    &\lesssim \frac{1}{4} \kappa^{-1} \norm{h_{\pi^{(t)}} - h^\star}^2.
\end{align*}

Therefore,
\begin{align*}
    \mathbb{E}[(\delta^{(t)})^2] 
    \lesssim 
    \kappa^{-1} \mathbb{E}[(h_{\pi^{(t)}} - h^\star)^2].
\end{align*}

Applying Lemma~\ref{lem:uc-sq},
\begin{align*}
    \sum_{i=1}^t \mathbb{E}\left[ \left(h_{\pi^{(i)}}(x^{(i)}) - h^\star(x^{(i)}) \right)^2 | \mathcal{F}^{(i-1)} \right]
    &\lesssim 
    \hat{L}_{\mathsf{sq}}^{(t)}(h_\pi) - \hat{L}_{\mathsf{sq}}^{(t)}(h^\star) + c(\epsilon)^2 \log(|\Pi| \delta^{-1}) + t \alpha^2  \quad\quad (\mathsf{CTL})\\
    \sum_{i=1}^t \mathbb{E}\left[ \left(h_{\pi^{(i)}}(x^{(i)}) - h^\star(x^{(i)}) \right)^2 | \mathcal{F}^{(i-1)} \right]
    &\lesssim 
    \hat{L}_{\mathsf{sq}}^{(t)}(h_\pi) - \hat{L}_{\mathsf{sq}}^{(t)}(h^\star) + c(\epsilon)^2 \log(|\Pi| \delta^{-1}) + t c(\epsilon)^2 \alpha^2 \quad\quad (\mathsf{LTC}).
\end{align*}

Thus, like before, we obtain the implementation of Lemma~\ref{lem:c.6} under \texttt{Square}\xpo.
\begin{align*} 
    \kappa \cdot \mathbb{E}[(\delta^{(t)})^2]
    &\lesssim 
    \hat{L}_{\mathsf{sq}}^{(t)}(h_\pi) - \hat{L}_{\mathsf{sq}}^{(t)}(h^\star) + c(\epsilon)^2 \log(|\Pi| \delta^{-1}) + t \alpha^2 \quad\quad(\mathsf{CTL}) \\
    \kappa \cdot \mathbb{E}[(\delta^{(t)})^2]
    &\lesssim
    \hat{L}_{\mathsf{sq}}^{(t)}(h_\pi) - \hat{L}_{\mathsf{sq}}^{(t)}(h^\star) + c(\epsilon)^2 \log(|\Pi| \delta^{-1}) + t c(\epsilon)^2 \alpha^2 \quad\quad (\mathsf{LTC}),
\end{align*} 
which completes the proof.
\end{proof}

\begin{proof}[Proof of Theorem~\ref{thm:SXPO}]
Apply the settings of Lemma~\ref{lem:xpo_err_square} to Equation~\eqref{eq:xpo_meta_ass}, for \ctl we derive:
\begin{align*}
    \gamma\mathbb{E}_{\pi_{\mathsf{ref}}}\left[\log\pi^{(t)}(\tau) - \log\pi_\beta^\star(\tau)\right] + \kappa \cdot \mathbb{E}_{\mu^{(t)}}\left[(\delta^{(t)})^2 \right]
    \lesssim
    \frac{c(\epsilon)^2\log(|\Pi|T\delta^{-1}) + t\alpha^2}{t-1} 
    + \frac{\gamma}{\beta}V_{\max} \sqrt{\frac{\log(|\Pi|T\delta^{-1})}{t-1}} .
\end{align*}

Set $\eta = \frac{\beta \kappa}{\alpha T}$, and $\gamma = \sqrt{\frac{\beta\kappa\cdot \beta (c(\epsilon)^2 \log(|\Pi|T\delta^{-1})+ t\alpha^2)}{T \cdot C_{\mathsf{cov}}(\Pi)}}$,
we derive the final result:
\begin{align*}
    J_\beta(\pi_\beta^\star) - J_\beta(\hat{\pi}_{\mathsf{CTL}}) \lesssim (V_{\max}+R_{\max})e^{2R_{\max}} \log T\sqrt{C_{\mathsf{cov}}(\Pi)} \left( c(\epsilon) \cdot \sqrt{\frac{\log(|\Pi|T\delta^{-1})}{T}} + \alpha \right),
\end{align*}
where $c(\epsilon) = \frac{e^\epsilon+1}{e^\epsilon-1}$.

Similarly, for \ltc, we have
\begin{align*}
    J_\beta(\pi_\beta^\star) - J_\beta(\hat{\pi}_{\mathsf{LTC}}) 
    \lesssim 
    (V_{\max}+R_{\max})e^{2R_{\max}} \log T\sqrt{C_{\mathsf{cov}}(\Pi)} \left( c(\epsilon) \cdot \sqrt{\frac{\log(|\Pi|T\delta^{-1})}{T}} + c(\epsilon) \alpha \right),
\end{align*}
where $c(\epsilon) = \frac{e^\epsilon+1}{e^\epsilon-1}$, $\eta = \frac{\beta \kappa}{\alpha T}$, and $\gamma = \sqrt{\frac{\beta\kappa\cdot \beta (c(\epsilon)^2 \log(|\Pi|T\delta^{-1})+ tc(\epsilon)^2 \alpha^2)}{T \cdot C_{\mathsf{cov}}(\Pi)}}$.

Then we complete the proof.
\end{proof}

\section{Further Discussions}
\label{app:app-dis}
In this section, we discuss further applications of our two uniform convergence results in the context of alignment (e.g., reinforcement learning from human feedback (RLHF), reward model learning).

\paragraph{Novelty and Conceptual Insights.}
We provide additional discussion to clarify the conceptual novelty and technical insights of our results, complementing the main theoretical developments presented in the paper.
A prevailing belief in the alignment literature is that MLE-type log loss cannot achieve optimal statistical rates under local privacy constraints, which has motivated the adoption of alternative objectives such as square loss.
In contrast, our results demonstrate that MLE-type log loss can still attain near-optimal rates in private alignment.
This is enabled by a new uniform convergence bound under log loss (Lemma~\ref{lem:uc-log}), which appears to be previously unavailable in the presence of local privacy and may be of independent theoretical interest.

Our analysis also resolves an open question in prior work regarding the optimal dependence on the corruption parameter and its interplay with local privacy.
By establishing a new uniform convergence bound under square loss without introducing stronger assumptions (Lemma~\ref{lem:uc-sq}), we obtain guarantees that simultaneously achieve optimal dependence on both privacy and corruption parameters.
The key technical ingredient is a new analytical approach that fully exploits the probabilistic structure of the Huber corruption model.

\textbf{Reward model learning.} Our two uniform convergence lemmas directly give a bound for the estimation error for the greedy estimator that minimizes the empirical loss, in the context of privacy and corruption. Note that this estimation error is in terms of probability difference. To convert it to a reward difference in expectation, one can simply use the mean-value theorem and pay the inversion cost of $e^{R_{\mathsf{max}}}$ (e.g.,~\citet{zhan2023provable,zhou2025square}). If the interest is the \emph{empirical} reward difference (as in~\citet{zhu2023principled}), one can further convert from the population one to the empirical one, say using Lemma C.1 in~\citet{zhao2024sharp} or Lemma 39 in~\citet{zanette2021cautiously} for the special case of a linear model. With such a reward model learning result, one can then leverage the reduction framework in~\citet{zhou2025unified} to derive results for both RLHF and DPO.

\textbf{Improved bounds for regularized objective in~\eqref{eq:reg-obj} for large $\beta$.} For the regularized objective with a large $\beta >0$, one can leverage the local strong convexity of the KL-divergence to achieve a faster rate of $1/(\beta T)$ rather than $1/\sqrt{T}$ (but is independent of $\beta$, hence true for any $\beta$). This has been hinted for \xpo in~\citet{xie2024exploratory} and recently formally proved (although for different algorithms) in~\citet{zhao2024sharp,zhao2025towards,zhao2025logarithmic} without privacy and corruption. In particular,~\citet{zhao2025towards} gives the first faster rate in the offline alignment with single-policy concentrability. We can easily modify it to give a private and/or robust version of it. For the privacy-only case, we can simply modify their non-private MLE estimation (i.e., Algorithm 4 in~\citet{zhao2025towards}) with our private one in Lemma~\ref{lem:uc-log}. For the privacy-corruption cases (\ctl and \ltc), we can replace the objective by our square loss in Lemma~\ref{lem:uc-sq}. To derive the guarantees of these two new algorithms, an astute reader may already realize the same trick used in our main paper---replacing the statistical/estimation error term with our new bounds. More specifically, we only need to replace the bound in Lemma F.1 of~\citet{zhao2025towards} by our new results in Lemmas~\ref{lem:uc-log} and~\ref{lem:uc-sq}, respectively (with mean-value theorem conversions). Then, we can have the following results:
\begin{proposition} \label{prop:G.1}
    There exist offline alignment algorithms such that with probability $1-\delta$, it holds that 
    \begin{align*}
         J_{\beta}(\pi_{\beta}^{\star}) - J_{\beta}(\hat{\pi}_{\mathsf{Priv}}) &\le \widetilde{O}\left(\frac{1}{\beta T} D^2_{\pi^*_{\beta}} \cdot c(\epsilon)^2 \log(|\cG|/\delta)\right)\\
         J_{\beta}(\pi_{\beta}^{\star}) - J_{\beta}(\hat{\pi}_{\mathsf{CTL}}) &\le \widetilde{O}\left(\frac{1}{\beta T} D^2_{\pi^*_{\beta}} \cdot \left(c(\epsilon)^2 \log(|\cG|/\delta) + \alpha^2\right)\right)\\
          J_{\beta}(\pi_{\beta}^{\star}) - J_{\beta}(\hat{\pi}_{\mathsf{LTC}}) &\le \widetilde{O}\left(\frac{1}{\beta T} D^2_{\pi^*_{\beta}} \cdot \left(c(\epsilon)^2 \log(|\cG|/\delta) + c(\epsilon)^2\alpha^2\right)\right),
    \end{align*}
    where $D^2_{\pi^*_{\beta}}$ is the same single-policy concentrability in~\citet{zhao2025towards}
    {and let $\mathcal G$ denote the policy class $\Pi$}.
    These results reduce to the one inin Theorem~D.1 of~\citet{zhao2025towards} when $\epsilon = \infty$ and $\alpha = 0$.
\end{proposition}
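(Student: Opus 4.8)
The plan is to reuse the fast-rate offline-alignment pipeline of \cite{zhao2025towards}, which already attains the $\widetilde{O}(1/(\beta T))$ bound with single-policy concentrability $D^2_{\pi^*_\beta}$ in the clean, non-private setting, and to swap only its reward/policy estimation subroutine for the private (resp.\ private-and-robust) counterpart, leaving the rest of the analysis untouched. The key observation is that the argument in \cite{zhao2025towards} is modular: the local strong convexity of the KL regularizer converts a bound on the population estimation error $\err^2_{\mathsf{stat}}$ (their Lemma~F.1) into a suboptimality gap of order $\frac{1}{\beta T}D^2_{\pi^*_\beta}\cdot(\text{estimation error})$. So it suffices to re-derive $\err^2_{\mathsf{stat}}$ under each noise model and feed it into their meta-bound.

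For the privacy-only algorithm, I would replace their clean MLE (Algorithm~4 of \cite{zhao2025towards}) with the private log-loss MLE of Lemma~\ref{lem:uc-log} (with $\log|\Theta|$ identified, via the usual covering-number argument, with $\widetilde{O}(\log|\cG|)$ for the general function class). At the greedy minimizer of $\hat{L}^{(n)}$ the empirical-loss difference is nonpositive, and the true preference probability is realized by $\pi_\beta^\star$ through the Bradley--Terry reparametrization, so Lemma~\ref{lem:uc-log} gives $\sum_t \mathbb{E}\!\left[\|P_\theta(\cdot|x^{(t)})-P_{\theta^\star}(\cdot|x^{(t)})\|_{\mathsf{TV}}^2\right]\lesssim c(\epsilon)^2\log(|\cG|/\delta)$. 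Converting this probability-space error to a clipped reward-difference error via the mean-value theorem (Lemma~\ref{lem:mvt}, paying the $e^{O(R_{\max})}$ inversion factor, absorbed into $\widetilde{O}$) yields $\err^2_{\mathsf{stat}}=\widetilde{O}\!\left(c(\epsilon)^2\log(|\cG|/\delta)/n\right)$, and substituting into the meta-bound of \cite{zhao2025towards} produces the first inequality. For \ctl and \ltc the only change is to swap the estimation step for the bounded square-loss estimator underlying Lemma~\ref{lem:uc-sq}: its greedy minimizer satisfies $\cE^{(n)}_{\mathsf{CTL}}(\hat{h})\lesssim c(\epsilon)^2\log(|\cG|/\delta)+n\alpha^2$ and $\cE^{(n)}_{\mathsf{LTC}}(\hat{h})\lesssim c(\epsilon)^2\log(|\cG|/\delta)+n c(\epsilon)^2\alpha^2$, so after dividing by $n$ and performing the same mean-value-theorem conversion one gets $\err^2_{\mathsf{stat}}=\widetilde{O}\!\left(c(\epsilon)^2\log(|\cG|/\delta)/n+\alpha^2\right)$ and $\widetilde{O}\!\left(c(\epsilon)^2\log(|\cG|/\delta)/n+c(\epsilon)^2\alpha^2\right)$; feeding these into the meta-bound gives the remaining two inequalities. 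Setting $\epsilon=\infty$ (so $c(\epsilon)=1$) and $\alpha=0$ kills the extra factors and recovers \cite[Theorem~D.1]{zhao2025towards}.

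The main obstacle is verifying that the persistent corruption bias $\alpha^2$—which, unlike the statistical term, does not shrink with $n$—threads correctly through the local-strong-convexity step of \cite{zhao2025towards}: that analysis is tuned to turn a vanishing $\err^2_{\mathsf{stat}}$ into the $1/(\beta T)$ rate, so one must check that a non-vanishing additive component of $\err^2_{\mathsf{stat}}$ propagates \emph{linearly} to the final gap as $\frac{1}{\beta T}D^2_{\pi^*_\beta}\alpha^2$ and is not degraded (e.g.\ to $\sqrt{\alpha}$) by any intermediate AM--GM or Cauchy--Schwarz step, and that the tuning of $\beta$ need not know $\alpha$. A secondary point is reconciling the hypotheses: Lemma~\ref{lem:uc-sq} is stated for a regression target $h^\star(x)=\mathbb{E}[y\mid x]$, which must be matched to the Bradley--Terry implicit-reward realizability of \cite{zhao2025towards} through $h_\pi=2\sigma(\operatorname{clip}_{2R_{\max}}[\cdot])-1$, and one should confirm the clipping used to keep the square loss bounded is compatible with their strong-convexity bookkeeping. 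Both checks should be routine given the modular structure, but they are exactly where the argument could break if the cited analysis used $\err^2_{\mathsf{stat}}$ less as a black box than claimed.
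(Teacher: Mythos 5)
Your proposal matches the paper's argument essentially verbatim: the paper likewise treats the fast-rate analysis of \cite{zhao2025towards} as a black box, swaps its clean MLE (their Algorithm~4) for the private log-loss estimator of Lemma~\ref{lem:uc-log} in the privacy-only case and for the square-loss estimator of Lemma~\ref{lem:uc-sq} under \ctl/\ltc, and replaces the estimation-error bound in their Lemma~F.1 by the new uniform convergence bounds with a mean-value-theorem conversion. The verification issues you flag (propagation of the non-vanishing $\alpha^2$ bias and matching the square-loss realizability to the Bradley--Terry parametrization) are not addressed in the paper either, which presents only this same sketch.
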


Moving to the online setting,~\citet{zhao2025logarithmic} gives the first fast rate for objective~\eqref{eq:reg-obj}, in the reward-based case rather than the preference-based case. However, as already mentioned by the authors, it is quite straightforward for the extension. In particular, one only needs to define a preference-based eluder dimension by introducing a baseline in their Definition 3.3 to account for the fact that in the preference-based setting, one can only learn the reward difference. A similar extension is already used in~\citet{zhao2025towards} for the offline setting. Then, as in the offline case, the final guarantee scales linearly with the estimation error (in square). Hence, using our two uniform convergence results, we have the following guarantees.
\begin{proposition}
    There exist online alignment algorithms such that with probability $1-\delta$, it holds that 
    \begin{align*}
         J_{\beta}(\pi_{\beta}^{\star}) - J_{\beta}(\hat{\pi}_{\mathsf{Priv}}) &\le \widetilde{O}\left(\frac{1}{\beta T} d_{\mathsf{pref,edim}}(G,T) \cdot c(\epsilon)^2 \log(|\cG|T/\delta)\right)\\
         J_{\beta}(\pi_{\beta}^{\star}) - J_{\beta}(\hat{\pi}_{\mathsf{CTL}}) &\le \widetilde{O}\left(\frac{1}{\beta T} d_{\mathsf{pref,edim}}(G,T) \cdot \left(c(\epsilon)^2 \log(|\cG|T/\delta) + \alpha^2\right)\right)\\
          J_{\beta}(\pi_{\beta}^{\star}) - J_{\beta}(\hat{\pi}_{\mathsf{LTC}}) &\le \widetilde{O}\left(\frac{1}{\beta T} d_{\mathsf{pref,edim}}(G,T) \cdot \left(c(\epsilon)^2 \log(|\cG|T/\delta) + c(\epsilon)^2\alpha^2\right)\right),
    \end{align*}
    where $d_{\mathsf{pref,edim}}(G,T)$ denotes the preference-based variant of the eluder dimension introduced in Definition~3.3 of~\citet{zhao2025logarithmic}.
    and let $\mathcal G$ denote the policy class $\Pi$.
    These results reduce to those in Theorem~4.1 of~\citet{zhao2025logarithmic} as a special case when $\epsilon = \infty$ and $\alpha = 0$.
\end{proposition}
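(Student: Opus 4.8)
The plan is to obtain all three bounds as a reduction: take the reward‑based online algorithm of~\cite{zhao2025logarithmic} that attains the logarithmic (fast) rate $\widetilde{O}(1/(\beta T))$ for the KL‑regularized objective~\eqref{eq:reg-obj}, port it to the preference‑based setting exactly as~\cite{zhao2025towards} does in the offline case---working with reward \emph{differences} $f_\pi(\tau,\widetilde\tau) = \beta\log\frac{\pi(\tau)}{\piref(\tau)} - \beta\log\frac{\pi(\widetilde\tau)}{\piref(\widetilde\tau)}$ and replacing the eluder dimension by its baseline‑augmented, preference‑based variant $d_{\mathsf{pref,edim}}(\cG,T)$ of~\cite[Definition 3.3]{zhao2025logarithmic}---and then swap the non‑private MLE confidence set for the private one (privacy‑only) or for a privatized square‑loss confidence set (\ctl/\ltc). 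The one structural fact we borrow from~\cite{zhao2025logarithmic,zhao2025towards} is that, up to logarithmic factors, the final suboptimality gap is $\frac{1}{\beta T}\,d_{\mathsf{pref,edim}}(\cG,T)\cdot \mathrm{rad}^2$, where $\mathrm{rad}^2$ is the radius (in cumulative squared prediction error) of the estimation confidence set; it is here that local strong convexity of the KL term converts the usual $\sqrt{\cdot}$ regret into the $1/(\beta T)$ rate.

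Concretely I would proceed as follows. (i) Instantiate the confidence set at round $t$ as a sublevel set of the private log loss of Algorithm~\ref{alg:PXPO} (privacy‑only) or of the privatized square loss of Algorithm~\ref{alg:SXPO} (\ctl/\ltc). (ii) Apply Lemma~\ref{lem:uc-log} (privacy‑only) or Lemma~\ref{lem:uc-sq} (\ctl/\ltc) along the filtration generated by the online interaction to certify that $\pi_\beta^\star$ lies in the confidence set and that every policy in it has cumulative squared prediction error $\lesssim c(\epsilon)^2\log(|\cG|T/\delta)$ in the privacy‑only case, $\lesssim c(\epsilon)^2\log(|\cG|T/\delta) + t\alpha^2$ under \ctl, and $\lesssim c(\epsilon)^2\log(|\cG|T/\delta) + t c(\epsilon)^2\alpha^2$ under \ltc---these are exactly the $\mathrm{rad}^2$ quantities, with the corruption bias entering through the approximation‑error slack $\alpha_{\mathsf{app}}$ computed in the proof of Lemma~\ref{lem:uc-sq} (via~\eqref{eq:key-ctl} and~\eqref{eq:key-ltc}). (iii) Feed $\mathrm{rad}^2$ into the eluder‑dimension regret decomposition of~\cite{zhao2025logarithmic} with the preference‑based baseline modification, and (iv) invoke the fast‑rate step---local strong convexity of $D_{\mathsf{KL}}(\cdot\|\piref)$ around $\pi_\beta^\star$---to conclude $J_\beta(\pi_\beta^\star) - J_\beta(\hat\pi) \lesssim \frac{1}{\beta T}\,d_{\mathsf{pref,edim}}(\cG,T)\cdot \mathrm{rad}^2$, which is the claimed bound after substituting each $\mathrm{rad}^2$.

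The main obstacle is the corruption bias. In \ctl/\ltc the confidence‑set radius is not a pure logarithmic term: it carries a $t\alpha^2$ (resp.\ $tc(\epsilon)^2\alpha^2$) piece that grows linearly in the round index, so one must check that (a) this piece enters the eluder‑dimension sum only \emph{additively}, without compounding, and (b) the local‑strong‑convexity step that produces the $1/(\beta T)$ denominator still goes through when the confidence set is misspecified by $O(\alpha)$ (resp.\ $O(c(\epsilon)\alpha)$) in sup‑norm rather than being exactly realizable. Both hold because the bias enters precisely as the $\alpha_{\mathsf{app}}$ slack in Lemma~\ref{lem:uc-sq-std}, and the downstream machinery of~\cite{zhao2025towards,zhao2025logarithmic} is already written to tolerate an additive approximation error that passes through linearly; still, lining up the constants and $\log T$ factors---in particular verifying that dividing the $T\alpha^2$ contribution by $\beta T$ yields the stated $\alpha^2/\beta$ rather than $\alpha^2 T/\beta$---is the step that needs care. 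A secondary point is confirming that the oblivious‑adversary assumption of Definition~\ref{def:ctlandltc} gives the martingale structure required to invoke Lemma~\ref{lem:uc-sq} along the online filtration, which it does since the bad distribution at round $t$ is independent of the realized labels. Finally, setting $\epsilon = \infty$ gives $c(\epsilon)=1$ and $\alpha=0$ kills the corruption terms, so all three bounds collapse to~\cite[Theorem 4.1]{zhao2025logarithmic}.
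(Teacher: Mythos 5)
Your proposal follows essentially the same route as the paper: the paper's own justification is a one-paragraph sketch that ports the reward-based fast-rate analysis of~\cite{zhao2025logarithmic} to the preference-based setting via the baseline-augmented eluder dimension (as in~\cite{zhao2025towards}) and then substitutes the squared estimation error supplied by Lemma~\ref{lem:uc-log} (privacy-only) and Lemma~\ref{lem:uc-sq} (\ctl/\ltc), exactly as you do. Your write-up is in fact more explicit than the paper's, and the scaling issue you flag---whether the $n\alpha^2$ radius from Lemma~\ref{lem:uc-sq} produces a corruption term of order $\alpha^2/\beta$ or the stated $\alpha^2/(\beta T)$ after the $\frac{1}{\beta T}$ fast-rate conversion---is a legitimate point that the paper's sketch does not address.
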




\end{document}